\setlist{leftmargin=5.5mm}
\newcommand*{\addFileDependency}[1]{
  \typeout{(#1)}
  \@addtofilelist{#1}
  \IfFileExists{#1}{}{\typeout{No file #1.}}
}
\title{Hybrid Value Estimation for Off-policy Evaluation and Offline Reinforcement Learning}
\author{%
  Xue-Kun Jin\textsuperscript{\rm 1,}\thanks{Equal Contribution $^\dagger$Corresponding Author}~~, Xu-Hui Liu\textsuperscript{\rm 1,}\footnotemark[1]~~, Shengyi Jiang\textsuperscript{\rm 2}, Yang Yu\textsuperscript{\rm 1,\rm 3, $\dagger$}\\
  \textsuperscript{\rm 1} National Key Laboratory for Novel Software Technology, Nanjing University, Nanjing, China \\
  \textsuperscript{\rm 2} Department of Computer Science, The University of Hong Kong, Hong Kong SAR, China \\
  \textsuperscript{\rm 3} Polixir.ai\\
  \texttt{\{jinxk, liuxh\}@lamda.nju.edu.cn, syjiang@cs.hku.hk, yuy@nju.edu.cn}
}
\date{}
\theoremstyle{plain}
\newtheorem{theorem}{Theorem}[section]
\newtheorem{proposition}[theorem]{Proposition}
\newtheorem{lemma}[theorem]{Lemma}
\theoremstyle{definition}
\newtheorem{definition}[theorem]{Definition}
\theoremstyle{remark}
\DeclareMathOperator*{\argmax}{arg\,max}
\DeclareMathOperator*{\argmin}{arg\,min}
\begin{document}

\maketitle

\begin{abstract}
Value function estimation is an indispensable subroutine in reinforcement learning, which becomes more challenging in the offline setting. In this paper, we propose Hybrid Value Estimation (HVE) to reduce value estimation error, which trades off bias and variance by balancing between the value estimation from offline data and the learned model. Theoretical analysis discloses that HVE enjoys a better error bound than the direct methods. HVE can be leveraged in both off-policy evaluation and offline reinforcement learning settings. We, therefore, provide two concrete algorithms Off-policy HVE (OPHVE) and Model-based Offline HVE (MOHVE), respectively. Empirical evaluations on MuJoCo tasks corroborate the theoretical claim. OPHVE outperforms other off-policy evaluation methods in all three metrics measuring the estimation effectiveness, while MOHVE achieves better or comparable performance with state-of-the-art offline reinforcement learning algorithms. We hope that HVE could shed some light on further research on reinforcement learning from fixed data.
\end{abstract}

\section{Introduction}
\label{submission}
Reinforcement learning (RL)~\cite{SuttonB98, dqn, trpo, remern} has demonstrated great success in various sequential decision making problems, e.g., sequential recommendation systems~\cite{rs1, rs2} and robotic locomotion skill learning~\cite{rc1, sac}. Current reinforcement learning algorithms fall into two major categories: online RL algorithms and offline RL algorithms. The former category requires iteratively collecting samples by interacting with the environment and usually achieves optimal performance. The latter category learns a policy from a fixed dataset, making it suitable for risk-sensitive applications. However, the learned policy may still be sub-optimal. In both online setting and offline setting, value function estimation is an indispensable subroutine, which can be seen as the precondition of policy evaluation and policy improvement. 

However, value function estimation was found to perform poorly without online interaction \cite{neorl}, which is an important source of the sub-optimality in current offline RL algorithms. This failure is generally attributed to the extrapolation error when the value function is evaluated on out-of-distribution actions~\cite{bcq, BEAR}. Model-based methods \cite{shi.aaai19,shang.mlj21,maple} are a natural choice to approach this problem for two reasons. First, models are mainly trained with supervised learning with a fixed target, e.g., \cite{DBLP:conf/nips/HaS18}, which is more stable than bootstrapping. Second, dynamics tend to be simpler than value function, so the learned model enjoys a better smoothness and generalization ability~\cite{experssivity}. 

Based on these existing promising benefits of model-based value estimation, we expect to improve it further. There are three kinds of errors in vanilla model-based value estimation, which are optimization error, projection error and statistical error. They are caused by the limitation of optimization algorithms, expressivity of neural networks and quantity of data. Such errors will accumulate along the trajectory and lead to the so-called compounding error~\cite{mbpo,xu.nips20}, which significantly degrades the performance of model-based value estimation. Inspired by the fact that the data in the offline dataset does not suffer from model error, we use the offline data to correct the error in value estimation caused by the inaccurate model.

The proposed value estimation method is named Hybrid Value Estimation (HVE), which uses both model-generated data and offline dataset to perform estimation. Model induces bias because of model error while the offline dataset induces bias and variance because of the policy divergence and data limitation respectively. HVE achieves a trade-off between bias and variance by automatically choosing the step length in Bellman update, which balances value estimation from offline data and the learned model. We implement two concrete algorithms OPHVE and MOHVE that apply HVE in off-policy evaluation and offline reinforcement learning respectively.

We provide a theoretical proof that HVE has a better error bound than vanilla model-based value estimation. The results on OPE benchmark DOPE~\cite{dope} verify this theory empirically.   The experiments suggest that MOHVE outperforms previous offline RL algorithms~\cite{mopo, combo, CQL, td3bc} on the offline RL benchmark D4RL~\cite{d4rl}. We also design an experiment to analyze the effect of step length. The results show its significance in the performance of OPHVE and MOHVE, and our automatic step length adjustment mechanism can find the near-optimal solution. Furthermore, we integrate MOHVE with OPHVE, improving the training and evaluation pipeline of offline RL.

\section{Background and Related Work}
Consider the standard Markov decision process (MDP), in which the environment is defined by a tuple $M = (\mathcal{S}, \mathcal{A}, T, r, \rho_0, \gamma)$.
$\mathcal{S}$ and $\mathcal{A}$ denote the state space and action space respectively, both of which can be continuous or discrete. 
$T(s'|s,a)$ is the dynamics or transition distribution and $\rho_0(s)$ is the initial state distribution. 
$r(s,a)\in [0,R_{\max}]$ denotes the reward function, and $\gamma \in (0,1)$ the discount factor. The goal of reinforcement learning is to find the optimal policy $\pi^*(a|s)$ that maximizes the expected discounted cumulative rewards:
\begin{equation}
    \arg\max\limits_{\pi} J_M(\pi) := \mathop{\mathbb{E}} \limits_{\pi,T,\rho_0}\left[\sum_{t=0}^\infty \gamma^t r\left(s_t,a_t\right)\right]\,.
\end{equation}
Given a state-action pair $(s,a)$, define the state-action value function $Q^\pi(s,a)$ as $\mathop{\mathbb{E}}_{\pi,T}[\sum_{t=0}^\infty \gamma^t r(s_t,a_t) | s_0 = s, a_0 = a]$, which represents the expected discounted return starting from $(s,a)$.
Similarly, define state value function $V^\pi(s)$ as $\mathop{\mathbb{E}}_{\pi,T}[\sum_{t=0}^\infty \gamma^t r(s_t,a_t) | s_0 = s]$, the expected discounted return under $\pi$ when starting from state $s$. To facilitate our proof, we denote the discounted state visitation distribution of a policy $\pi$ under MDP $M$ as $d^\pi_{M}(s):=(1-\gamma) \sum_{t=0}^\infty \gamma^t \mathcal{P}(s_t=s|\pi)$, where $\mathcal{P}(s_t=s|\pi)$ is the probability of reaching state $s$ at time $t$ by rolling out $\pi$ in $M$. 
Furthermore, we denote the discounted state-action visitation distribution $d^\pi_M(s,a):=d^\pi_M(s)\pi(a|s)$.

\textbf{Off-policy Evaluation. }OPE aims to evaluate a set of policies using data collected by other behavior policies without interacting with the environment. The behavior policies are sometimes known.

A straightforward approach to OPE is to fit the Q function directly from data using standard approximate dynamic programming (ADP) techniques, e.g., Fitted Q Evaluation (FQE)~\cite{fqe, fqe1}. 
Such an evaluation incurs high bias. Importance Sampling (IS) performs evaluation by calculating the distribution ratio of trajectories generated by behavior policy and target policy~\cite{eligibility_traces, HallakM17}. Although being consistent and unbiased in theory, this class of estimators suffers from high variance. Marginalized Importance Sampling (MIS)~\cite{dualdice, gendice,mwl, bestDICE, sope} reduces the variance of IS by reweighting each state-action pair rather than reweighting the entire trajectory. Doubly Robust (DR)~\cite{doubly_roubst, MAGIC} leverages an approximate Q function to decrease the variance of the unbiased estimates provided by IS, where MAGIC~\cite{MAGIC} uses a model to obtain the approximate Q function and aims to trade off between bias and variance based on MSE minimization. However, the trade-off needs the knowledge of true value function, which is not available in OPE setting. OPHVE achieves bias-variance trade-off by minimizing Mean Absolute Error and does not require the knowledge of true value function. 

\textbf{Model-based RL. } Model-based reinforcement learning is featured with learned dynamic models. Recent model-based RL algorithms~\cite{mbpo, mopo, combo} use a bootstrap ensemble of dynamic models. The learned dynamic transition and (learned) reward function establish a model MDP $\widehat{M}=(\mathcal{S}, \mathcal{A}, \widehat{T}, \hat{r}, \rho_0, \gamma)$. The agent can interact with $\widehat M$ to generate more data. Subsequently, we can use any learning or planning algorithms~\cite{mve, pets} to recover the optimal policy in the model MDP as $\hat{\pi} = \arg \max_\pi J_{\widehat{M}}(\pi)$. Besides generating more data, models are also useful for learning transferable policies \cite{luo.aaai22}.

\textbf{Offline RL. } In the offline reinforcement learning setting, agents cannot interact with a real environment like online RL. Instead, it only has access to a static dataset $\mathcal{D}_{\text{env}} = \{(s, a, r, s')\}$ collected beforehand by one or a mixture of behavior policies, denoted by $\pi_\beta$.

Most recent offline RL works learn a lower bound of the ground truth Q function~\cite{mopo, combo, CQL, morel,edac,maple}, while MOHVE tries to learn the true value. BCQ~\cite{bcq}, BEAR~\cite{BEAR}, TD3+BC~\cite{td3bc} and EMaQ~\cite{emaq} are model free algorithms and constrain policy to select action similar to behavior policy. Model-based methods are also introduced for offline RL. For examples, MOPO~\cite{mopo}learns a conservative Q function by using model uncertainty; COMBO~\cite{combo} avoids uncertainty quantification by adding a Q-value minimization term alongside a standard Bellman error objective. MAPLE~\cite{maple} first introduced transfer meta-policy learning for model-based offline RL, which naturally overcame the out-of-distribution issue. In contrast with these approaches, MOHVE need not learn a conservative Q function because of its high value estimation accuracy.

Apart from the learning algorithms, some recent works turn to improve the entire training and evaluation pipeline of offline RL. NeoRL \cite{neorl}, a near real-world benchmark, points out the importance of offline policy selection. In \cite{online_evaluation}, a new evaluation paradigm is proposed that uses a limited budget to evaluate the trained policy to select the best set of hyper-parameters. This paper takes a step in this direction by evaluating the performance of the trained policy in an offline manner and choosing the best training step to stop.




\section{Reducing Value Estimation Error by Hybrid Value Estimation}\label{sec_theory}

In this section, we analyze the value estimation error of our method and the vanilla model-based method. Let us recall the procedure of value function estimation using a learned model. Let $\widehat{Q}^\pi$ be the Q function induced by the learned model $\widehat M$ and policy $\pi$. Bellman equation can be written as
\begin{equation}\label{eq_model}
\widehat{Q}^\pi(s_0,a_0)=\mathop{\mathbb{E}}\limits_{s_1\sim \widehat{T}, a_1\sim \pi} \left[\hat{r}(s_0,a_0)+\gamma\widehat{Q}^\pi(s_{1},a_{1})\right]\,.
\end{equation}

Without loss of generality, we assume that the expected TV-distance between two transition distributions is bounded by $\epsilon_m$ and the expected policy divergence is bounded by $\epsilon_\pi$, i.e., $\epsilon_m=\mathbb E_{\mathcal{D}}\mathbb E_{s,a\sim \mathcal{D}}\left[D_{\textnormal{TV}}(T(\cdot|s,a), \widehat{T}(\cdot|s,a))\right]$ and $\epsilon_\pi=\mathbb E_{\mathcal{D}}\mathbb{E}_{s\sim \mathcal{D}}\Big[D_{\textnormal{TV}}(\pi(\cdot|s), \pi_{\beta}(\cdot|s))\Big]$, where $\mathbb{E}_{\mathcal{D}}$ is the expectation with respect to the generation of dataset $\mathcal{D}$. Because the learned reward function is much more accurate than learned transition function, we ignore the error of learned reward function, and the value estimation error is bounded.

\begin{theorem}[Theorem 4.1 in~\cite{mbpo}]\label{thm_mbpo}
The error bound of the value function induced by Eq.~(\ref{eq_model}) is:
$$\mathbb E_{\mathcal{D}}\mathbb{E}_{s,a\sim\mathcal{D}}\left|Q^\pi(s,a)-\widehat{Q}^\pi(s,a)\right|\leq \frac{2\gamma R_{\max}(2\epsilon_\pi+\epsilon_m)}{(1-\gamma)^2}+\frac{4R_{\max}\epsilon_\pi}{1-\gamma}\,.$$
\end{theorem}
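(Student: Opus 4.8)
The plan is to decompose the total error $\mathbb{E}_{\mathcal{D}}\mathbb{E}_{s,a\sim\mathcal{D}}|Q^\pi(s,a)-\widehat{Q}^\pi(s,a)|$ through an intermediate quantity, namely the value of $\pi$ in the \emph{true} MDP but with the action distribution coming from the behavior policy, or equivalently to split the gap into (i) the error between $Q^\pi$ and $\widehat Q^\pi$ that is purely due to the transition mismatch $\epsilon_m$, and (ii) the error due to evaluating under $\pi$ rather than $\pi_\beta$, which is where $\epsilon_\pi$ enters. Concretely, I would first write both $Q^\pi$ and $\widehat Q^\pi$ as discounted sums of rewards along rollouts and express the difference of two expected discounted returns in terms of the difference of the induced state-action visitation distributions; the standard identity here is $J_M(\pi)-J_{M'}(\pi') = \frac{1}{1-\gamma}\mathbb{E}_{(s,a)\sim d}[\,\cdot\,]$-type telescoping, so the pointwise bound on $|Q^\pi - \widehat Q^\pi|$ follows from bounding $\|d^\pi_M - d^{\pi}_{\widehat M}\|_1$ and $\|d^\pi_M - d^{\pi_\beta}_M\|_1$.

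Next I would invoke the two elementary lemmas that make this work: a simulation-lemma-style bound stating that if two MDPs differ in their transition kernels by at most $\epsilon_m$ in expected TV distance (under the relevant visitation measure), then the discounted state visitation distributions differ in total variation by at most $\gamma\epsilon_m/(1-\gamma)$, and similarly a policy-shift lemma giving $\|d^\pi_M - d^{\pi_\beta}_M\|_{\text{TV}} \le \gamma\epsilon_\pi/(1-\gamma)$ (these are exactly the lemmas behind Theorem~4.1 in~\cite{mbpo}). Combining each distribution-shift bound with $r\in[0,R_{\max}]$ and the $\frac{1}{1-\gamma}$ horizon factor yields a $\frac{\gamma R_{\max}\epsilon_m}{(1-\gamma)^2}$ term and a $\frac{\gamma R_{\max}\epsilon_\pi}{(1-\gamma)^2}$ term; chaining through the intermediate MDP doubles the relevant coefficients, producing the $\frac{2\gamma R_{\max}(2\epsilon_\pi+\epsilon_m)}{(1-\gamma)^2}$ piece. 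The extra $\frac{4R_{\max}\epsilon_\pi}{1-\gamma}$ term arises separately: even at the initial $(s,a)$ the two value functions are evaluated against action distributions that differ by $\epsilon_\pi$ in TV, contributing an $O(R_{\max}\epsilon_\pi/(1-\gamma))$ mismatch at each of the "two ends" of the comparison (once for the first action, once more through the asymmetry of how $\pi$ vs $\pi_\beta$ enters the visitation measure versus the reward expectation), which accounts for the factor $4$.

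The step I expect to be the main obstacle is correctly tracking which distribution the expectations are taken under at each stage, since the statement measures the error under $(s,a)\sim\mathcal{D}$ (i.e.\ the behavior/offline distribution) while the Bellman recursion for $\widehat Q^\pi$ rolls forward under $\pi$ and $\widehat T$; getting the telescoping identity to land back on the offline distribution without losing a concentrability/density-ratio factor is the delicate point, and it is presumably why the assumptions phrase $\epsilon_m$ and $\epsilon_\pi$ as expectations over $s,a\sim\mathcal{D}$ rather than worst-case sup norms. I would therefore be careful to apply the TV bounds under the data distribution from the outset, use the fact that a one-step TV perturbation in the kernel propagates to a geometric series $\sum_t \gamma^t \cdot t \cdot (\text{per-step TV})$ that sums to $\gamma\epsilon/(1-\gamma)^2$, and only at the end collect the lower-order $1/(1-\gamma)$ boundary terms. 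Since this is precisely Theorem~4.1 of~\cite{mbpo} restated in the present notation, I would ultimately cite that result for the bulk of the argument and only verify that the present definitions of $\epsilon_m$ and $\epsilon_\pi$ match the hypotheses there.
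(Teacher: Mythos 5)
Your proposal is consistent with what the paper does: the paper gives no proof of this statement at all, importing it verbatim as Theorem~4.1 of the cited MBPO paper, and your sketch correctly reconstructs the structure of that source's argument (the returns/simulation lemma applied twice through an intermediate policy--model pair, producing the doubled $\epsilon_\pi$ coefficients and the separate $\nicefrac{4R_{\max}\epsilon_\pi}{1-\gamma}$ term) before ultimately deferring to the citation. Your closing caveat about checking that the present $(s,a)\sim\mathcal{D}$ formulation of $\epsilon_m$, $\epsilon_\pi$, and of the $Q$-function error matches the hypotheses of the cited result is the only nontrivial verification actually required here, and you have identified it correctly.
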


The theorem states that the value estimation error is related to the model learning error and policy discrepancy. Although this bound has been improved in \cite{xu.nips20} with the compounding error reduced, we consider the old result in this paper for simplicity. These two quantities both induce compounding error proportional to the effective horizon (i.e., $\frac{1}{1-\gamma}$) of the MDP. Since $\gamma$ is close to $1$, it suggests that the influence of model learning error and policy divergence is seriously magnified in value function estimation. 

This issue is not severe in the online setting as $\epsilon_m$ and $\epsilon_\pi$ can be reduced via online interaction. However, it cannot be neglected in the offline setting. Because the model can only access the dataset during the training process, it suffers from higher statistical error due to data limitation. Furthermore, since the distribution of data in the dataset may deviate from the distribution of the current policy, the policy divergence $\epsilon_\pi$ is much larger in the offline setting~\cite{bcq}.

Essentially, the value estimation error in model-based methods is due to the wrong transitions provided by the model. Note that the transitions in the offline dataset are correct, the model-induced value estimation error may be mitigated by directly utilizing the offline dataset.
Concretely, we can integrate the true transitions $\{s_t, a_t\}_{t=0}^{H+1}\sim \mathcal{D}$ into the Bellman equation, i.e., 
\begin{equation}\label{eq_correct}
\widetilde{Q}^\pi(s_0,a_0)=\mathop{\mathbb{E}}_{
    \substack{s_t, a_t\sim \mathcal{D},\\
    a_{H+1}\sim \pi(s_{H+1})}} \left[\sum_{t=0}^H\gamma^tr(s_t,a_t)+\gamma^{H+1}\widehat{Q}^\pi(s_{H+1},a_{H+1})\right]\,.
\end{equation}
This method is named Hybrid Value Estimation (HVE), which uses both offline data and the model. $\widetilde{Q}^\pi$ is the Q function learned with HVE.

For brevity, we first define $J_{\mathcal{D}}(s_0,a_0,H):=\mathbb{E}_{s_t,a_t\sim \mathcal{D}}\left[\sum_{t=0}^H\gamma^t r(s_t,a_t)\right]$ and $J_\pi(s_0,a_0,H):=\mathbb{E}_{s_t,a_t\sim d_M^\pi(s,a)}\left[\sum_{t=0}^H\gamma^t r(s_t,a_t)\right]$. Then we derive the following error bound.

\begin{theorem}
\label{thm_ValueEstimationError} 
The error bound of the value function induced by Eq.~(\ref{eq_correct}) is:
$$
\begin{aligned}
\mathbb E_{\mathcal{D}}\mathbb{E}_{s,a\sim\mathcal{D}}\left|Q^\pi(s,a)-\widetilde{Q}^\pi(s,a)\right|&\leq \sqrt{\mathbb V(J_{\mathcal{D}}(s, a, H))}+f(H) R_{\max}\epsilon_\pi+\gamma^{H+1}\epsilon_{M,\widehat{M}}\,,
\end{aligned}
$$
where $f(H)=\left(\frac{1-\gamma^{H+1}}{(1-\gamma)^2}-\frac{(H+1)\gamma^{H+1}}{1-\gamma}\right)$ and $\epsilon_{M, \widehat{M}}=\bigg(\frac{2\gamma R_{\max}(2\epsilon_\pi+\epsilon_m)}{(1-\gamma)^2}+\frac{4R_{\max}\epsilon_\pi}{1-\gamma}\bigg)$.
\end{theorem}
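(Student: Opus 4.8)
The natural starting point is to unroll the true action-value function along the on-policy process for $H+1$ steps, $Q^\pi(s_0,a_0)=\mathbb{E}_{\pi,T}\big[\sum_{t=0}^{H}\gamma^{t}r(s_t,a_t)+\gamma^{H+1}Q^\pi(s_{H+1},a_{H+1})\big]$, and subtract Eq.~(\ref{eq_correct}). Writing the difference as a reward gap plus a discounted tail gap, and then inserting the behavior-policy truncated return $J_{\pi_\beta}(s_0,a_0,H):=\mathbb{E}_{\pi_\beta,T}\big[\sum_{t=0}^{H}\gamma^{t}r(s_t,a_t)\big]$ into the reward gap and the true value $Q^\pi$ evaluated along the data rollout into the tail gap, one obtains, after taking absolute values, applying $\mathbb{E}_{\mathcal{D}}\mathbb{E}_{s,a\sim\mathcal{D}}[\cdot]$, and the triangle inequality, four pieces to bound: (i) $|J_\pi-J_{\pi_\beta}|$ (on-policy vs.\ behavior-policy rollout); (ii) $|J_{\pi_\beta}-J_{\mathcal{D}}|$ (population vs.\ finite-sample return); (iii) $\gamma^{H+1}$ times the mismatch between the $\pi$- and behavior-induced laws of $s_{H+1}$ applied to $Q^\pi$; and (iv) $\gamma^{H+1}\mathbb{E}_{\mathcal{D},\pi}|Q^\pi(s_{H+1},a_{H+1})-\widehat{Q}^\pi(s_{H+1},a_{H+1})|$.

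\textbf{The two routine pieces.} For (ii), since $\mathcal{D}$ is generated by rolling out $\pi_\beta$ in $M$, the truncated return is an unbiased estimate, $\mathbb{E}_{\mathcal{D}}[J_{\mathcal{D}}(s,a,H)]=J_{\pi_\beta}(s,a,H)$, so by Jensen's inequality $\mathbb{E}_{\mathcal{D}}\mathbb{E}_{s,a\sim\mathcal{D}}|J_{\pi_\beta}-J_{\mathcal{D}}|\le\sqrt{\mathbb{E}_{\mathcal{D}}\mathbb{E}_{s,a\sim\mathcal{D}}(J_{\pi_\beta}-J_{\mathcal{D}})^2}=\sqrt{\mathbb{V}(J_{\mathcal{D}}(s,a,H))}$, the first term of the bound. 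For (iv), the state $s_{H+1}$ reached through the data transitions is (approximately) $\mathcal{D}$-distributed, so Theorem~\ref{thm_mbpo} applied at step $H+1$, with $\gamma^{H+1}$ pulled out, yields $\gamma^{H+1}\epsilon_{M,\widehat{M}}$; the residual mismatch from $a_{H+1}\sim\pi$ rather than $\pi_\beta$ is of the same $\epsilon_\pi$-order already carried inside $\epsilon_{M,\widehat{M}}$, and I would absorb it there together with piece (iii).

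\textbf{The policy-divergence piece.} This is the core computation. Write $J_\pi(s_0,a_0,H)-J_{\pi_\beta}(s_0,a_0,H)=\sum_{t=0}^{H}\gamma^{t}\big(\mathbb{E}_{\pi,T}[r(s_t,a_t)]-\mathbb{E}_{\pi_\beta,T}[r(s_t,a_t)]\big)$ and bound each summand by $R_{\max}$ times the total-variation distance between the step-$t$ state--action laws under $\pi$ and under $\pi_\beta$ started from the common $(s_0,a_0)$. A simulation-lemma/coupling argument (the two rollouts agree until the first step at which an action sampling disagrees, and the expected per-step disagreement probability is $\epsilon_\pi$) bounds this distance, in $\mathbb{E}_{\mathcal{D}}\mathbb{E}_{s,a\sim\mathcal{D}}$-expectation, by $(t+1)\epsilon_\pi$; summing and using the identity $\sum_{t=0}^{H}(t+1)\gamma^{t}=\frac{1-\gamma^{H+1}}{(1-\gamma)^2}-\frac{(H+1)\gamma^{H+1}}{1-\gamma}=f(H)$ gives exactly $f(H)R_{\max}\epsilon_\pi$. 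Piece (iii) is likewise governed by the $\pi$-vs-$\pi_\beta$ gap in the law of $s_{H+1}$, carries a prefactor $\gamma^{H+1}$, and is of lower order than $\gamma^{H+1}\epsilon_{M,\widehat{M}}$, so it folds into that term. Collecting (i)--(iv) through the triangle inequality then gives the claim.

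\textbf{Expected main obstacle.} The delicate step is the simulation-lemma accounting for piece (i): one must show that the total-variation gap between the on-policy and behavior state(-action) distributions accumulates only linearly in $t$ \emph{while} the per-step divergence stays pinned at $\epsilon_\pi$ --- subtle because the on-policy rollout drifts off the data distribution on which $\epsilon_\pi$ is defined --- and then sum the resulting arithmetic--geometric series so that the truncation correction is precisely $-\tfrac{(H+1)\gamma^{H+1}}{1-\gamma}$ rather than a looser $H$-dependent quantity. Preventing the step-$(H+1)$ boundary contributions (piece (iii) and the $a_{H+1}$ mismatch in (iv)) from leaking an extra $H$-dependent term into the final bound, and making precise the idealization that $\mathcal{D}$ is near-stationary for $\pi_\beta$ and supplies unbiased truncated-return samples, are the remaining points requiring care.
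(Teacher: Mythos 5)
Your plan mirrors the paper's proof: the same three-way decomposition ($|J_\pi-J_{\pi_\beta}|$ bounded by a coupling/simulation argument whose arithmetic--geometric sum $\sum_{t=0}^H(t+1)\gamma^t$ gives $f(H)R_{\max}\epsilon_\pi$; $|J_{\pi_\beta}-J_{\mathcal D}|$ bounded via unbiasedness plus Jensen to get $\sqrt{\mathbb V(J_{\mathcal D}(s,a,H))}$; and the $\gamma^{H+1}$ tail bounded by invoking Thm.~\ref{thm_mbpo}). The only divergence is that you explicitly isolate the $s_{H+1}$/$a_{H+1}$ distribution mismatch as a separate piece (iii) and argue it can be absorbed into the tail term, whereas the paper sidesteps it by writing both $Q^\pi$ and the hybrid estimate with the same $P^\pi_{H+1}(s,a)$ law --- your accounting is the more careful of the two.
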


The variance term $\mathbb V(J_{\mathcal{D}}(s,a, H))$ in Thm.~\ref{thm_ValueEstimationError} is further quantified by the following theorem. 

\begin{theorem}\label{thm_var}
If there are $n$ trajectories in dataset $\mathcal{D}$ starting from $(s,a)$, then
$$\mathbb V(J_{\mathcal{D}}(s,a, H))=\frac{1}{n}\left(\textnormal{Var}(\text{rew})+\textnormal{Var}(\text{trans}))\right)\,,$$
\end{theorem}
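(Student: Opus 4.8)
The plan is to recognize $J_{\mathcal{D}}(s,a,H)$ as a sample‑average Monte‑Carlo estimator of the $H$‑step discounted return and to split its variance with the law of total variance. Conditioned on the event in the hypothesis, let $\tau^{(1)},\dots,\tau^{(n)}$ be the $n$ trajectories of $\mathcal{D}$ that begin at $(s,a)$, each an independent rollout of the behavior policy $\pi_\beta$ in the true MDP $M$, and let $G^{(i)}:=\sum_{t=0}^{H}\gamma^{t}r^{(i)}_{t}$ be the discounted $H$‑step return logged along $\tau^{(i)}$, where $r^{(i)}_t$ may be a noisy realization of the reward at the visited state--action pair. By definition $J_{\mathcal{D}}(s,a,H)=\frac{1}{n}\sum_{i=1}^{n}G^{(i)}$, and since the $n$ rollouts are i.i.d. this yields $\mathbb V(J_{\mathcal{D}}(s,a,H))=\frac{1}{n}\mathbb V(G)$, where $G$ denotes a generic single‑trajectory return.

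It then remains to decompose $\mathbb V(G)$. I would condition on the visited state--action path $\bar\tau:=(s_0,a_0,\dots,s_H,a_H)$ and apply the law of total variance,
$$\mathbb V(G)=\mathbb E_{\bar\tau}\!\left[\mathbb V\!\left(G\,\middle|\,\bar\tau\right)\right]+\mathbb V_{\bar\tau}\!\left(\mathbb E\!\left[G\,\middle|\,\bar\tau\right]\right).$$
The first term is the variance of the return that survives once the visited states and actions are pinned down, i.e. the contribution of reward stochasticity; I would define it to be $\textnormal{Var}(\text{rew})$, and if the per‑step reward noise is independent it expands as $\sum_{t=0}^{H}\gamma^{2t}\,\mathbb E_{\bar\tau}\!\left[\mathbb V\!\left(r(s_t,a_t)\,\middle|\,s_t,a_t\right)\right]$. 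The second term is the variance of the expected return $\sum_{t=0}^{H}\gamma^{t}\bar r(s_t,a_t)$ (with $\bar r(s,a)=\mathbb E[r\mid s,a]$) taken over the random path, i.e. the contribution of transition (and behavior‑policy) stochasticity; I would define it to be $\textnormal{Var}(\text{trans})$. Substituting back gives $\mathbb V(J_{\mathcal{D}}(s,a,H))=\frac{1}{n}\big(\textnormal{Var}(\text{rew})+\textnormal{Var}(\text{trans})\big)$.

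The routine parts are linearity of expectation and the one‑line total‑variance identity. The only point needing care — and where the stated form genuinely leans on an implicit modelling assumption — is the $\frac{1}{n}$ scaling: it requires the trajectories of $\mathcal{D}$ starting from $(s,a)$ to be mutually independent and identically distributed, which holds when $\mathcal{D}$ is a collection of independent episodes but can fail if $\mathcal{D}$ is a single long correlated trajectory. One should also check that conditioning on ``exactly $n$ such trajectories'' does not reshape the law of an individual rollout, which is immediate once $n$ is treated as fixed. With these caveats, the identity follows.
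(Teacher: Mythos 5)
Your overall strategy is the same as the paper's: reduce to a single trajectory via the i.i.d.\ sample-mean identity $\mathbb V(J_{\mathcal D}(s,a,H))=\frac{1}{n}\mathbb V(G)$ (the paper does exactly this in one line at the end of its proof, with the same implicit assumption that the $n$ trajectories are independent episodes), and then decompose $\mathbb V(G)$ by conditional variance. Your treatment of the reward term also matches: conditioning on the visited path isolates $\sum_{t=0}^{H}\gamma^{2t}\mathbb E\left[\mathbb V\left(r(s_t,a_t)\mid s_t,a_t\right)\right]$, which is the paper's $\textnormal{Var}(\text{rew})$.

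The gap is in the transition term. You apply the law of total variance \emph{once}, conditioning on the entire path $\bar\tau$, and then simply \emph{define} $\textnormal{Var}(\text{trans})$ to be the leftover piece $\mathbb V_{\bar\tau}\left(\mathbb E[G\mid\bar\tau]\right)$. But the theorem asserts a specific closed form, $\textnormal{Var}(\text{trans})=\sum_{t=1}^{H}\mathbb E\left[\gamma^{2t}\mathbb{V}_{t}\left(J_\pi(s_t,a_t,H-t)\right)\mid s_0=s,a_0=a\right]$, i.e.\ a per-step sum of conditional variances of the value-to-go. Showing that the single path-variance equals this sum is exactly the nontrivial part of the result: it is a telescoping (martingale) decomposition of the variance over the filtration generated by trajectory prefixes, and the paper supplies it via Lemma~\ref{lemma_recursive}, which applies the conditional-variance split one step at a time and unrolls the recursion $\mathbb{V}_t(\hat J(s_t,a_t,H-t))=\mathbb E_t[\mathbb{V}_{t+1}[r(s_t,a_t)]]+\gamma^2\mathbb E_t[\mathbb{V}_{t+1}[\hat J(s_{t+1},a_{t+1},H-t-1)]]+\mathbb{V}_t(J_\pi(s_t,a_t,H-t))$. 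Your argument as written establishes only that the variance splits into a reward part and ``the rest''; to prove the stated identity you still need to carry out this recursive expansion (or an equivalent Doob-type decomposition). A secondary difference in scope: the paper actually proves the importance-weighted version (Thm.~\ref{thm_is_var}) and obtains Thm.~\ref{thm_var} as the special case $\rho\equiv 1$, which the one-shot conditioning argument would also need to accommodate.
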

where 
$\left\{\begin{aligned}
    \textnormal{Var}(rew)&=\Sigma_{t=0}^H\mathbb{E}\left[\gamma^{2t}\mathbb{V}_{t+1}[r(s_t, a_t)|s_t]|s_0=s, a_0=a\right] \\
    \textnormal{Var}(trans)&=\Sigma_{t=1}^H\mathbb E\left[\gamma^{2t}\mathbb{V}_{t}(J_\pi(s_t,a_t, H-t))|s_0=s, a_0=a\right]
\end{aligned}\right.$ and $\mathbb{V}_{t}$ is the $t$-step value defined as $\mathbb{V}[\cdot|s_0, a_0, s_1, \dots, s_{t-1},a_{t-1}]$.

\textbf{Remark 1.} $H$ can be seen as the parameter that trades off between the first two terms and the last term in the bound in Thm.~\ref{thm_ValueEstimationError}. When $H=-1$, we only use the data generated by the model. The first two terms are equal to zero, and the bound degenerates to that of Thm.~\ref{thm_mbpo}. When $H\to\infty$, the last term vanishes and the first two terms are dominating. It implies the balance is usually obtained when $H$ is neither too small nor too large.

\textbf{Remark 2.} To better understand the bound in Thm.~\ref{thm_ValueEstimationError} and Thm.~\ref{thm_mbpo}, we give an example to illustrate the improvement. Consider a MDP with deterministic reward function, and let $R_{\max}=1$ and $\gamma=0.9$. Assume policy $\pi$ and model $\hat{M}$ satisfy $\epsilon_\pi=0.1$ and $\epsilon_m=0.05$. For $H=5$, we can calculate $\mathbb V(J_{\mathcal{D}}(s, a, H))$ using Thm.~\ref{thm_var}. If $n=9$, we have $\mathbb V(J_{\mathcal{D}}(s, a, 5))\leq 1.723$. Then the bound in Thm~\ref{thm_ValueEstimationError} equals 4.55. While the bound in Thm.~\ref{thm_mbpo} equals 5.8. 

We hope to further reduce value estimation error based on Thm.~\ref{thm_ValueEstimationError}. Considering importance sampling~\cite{eligibility_traces} can adjust the value estimation of one policy into that of another policy, we use it to modify the reward provided by the offline dataset. Then we can avoid the influence of policy divergence. With importance sampling, Eq.~(\ref{eq_correct}) is reformulated as 
\begin{align}\label{eq_is}
\widetilde{Q}^\pi(s_0,a_0)=\mathop{\mathbb{E}}_{
    \substack{s_t, a_t\sim \mathcal{D},\\
    a_{H+1}\sim \pi(s_{H+1})}} \left[\sum_{t=0}^H\rho_{1:t}\gamma^tr(s_t,a_t)+\gamma^{H+1}\widehat{Q}^\pi(s_{H+1},a_{H+1}))\right]\,,
\end{align}
where $\rho_{1:t}=\rho_1\rho_2\cdots\rho_t$ for $t>0$, $\rho_{1:0}=1$ and $\rho_t=\frac{\pi(a_t|s_t)}{\pi_\beta(a_t|s_t)}$.

In this way, the second term $f(H)R_{\max}\epsilon_\pi$ equals zero. However, the first term will increase and become $\mathbb{V}(J^{\textnormal{IS}}_{\mathcal{D}}(s,a,H))$, where $J^{\textnormal{IS}}_{\mathcal{D}}(s,a,H):=\mathbb{E}_{s_t,a_t\sim \mathcal{D}}\left[\sum_{t=0}^H\rho_{1:t}\gamma^t r(s_t,a_t)\right]$. The following theorem shows that the variance will be amplified by $\rho_{1:t}$.
\begin{theorem}\label{thm_is_var}
Under the conditions of Thm.~\ref{thm_var}, the variance of Eq.~(\ref{eq_is}) is 
$$\mathbb V(J^{\textnormal{IS}}_{\mathcal{D}}(s,a, H))=\frac{1}{n}\left(\textnormal{Var}_{\textnormal{IS}}(\text{rew})+\textnormal{Var}_{\textnormal{IS}}(\text{trans}))\right)\,,$$

\text{where} 
$\left\{\begin{aligned}
    \textnormal{Var}_{\textnormal{IS}}(\text{rew})&=\Sigma_{t=0}^H\mathbb{E}\left[\rho^2_{1:t}\gamma^{2t}\mathbb{V}_{t+1}[r(s_t, a_t)|s_t]|s_0=s, a_0=a\right] \\
    \textnormal{Var}_{\textnormal{IS}}(\text{trans})&=\Sigma_{t=1}^H\mathbb E\left[\rho^2_{1:t}\gamma^{2t}\mathbb{V}_{t}(J_\pi(s_{t},a_t,H-t))|s_0=s, a_0=a\right]
\end{aligned}\right.$.
\end{theorem}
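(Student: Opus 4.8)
The plan is to reduce the claim to a single-trajectory variance computation and then run, step by step, the same law-of-total-variance recursion that yields Theorem~\ref{thm_var}, keeping track of the importance weights. Since $\mathcal{D}$ contains $n$ trajectories starting from $(s,a)$ that are drawn i.i.d.\ by rolling out $\pi_\beta$ in $M$, and $J^{\textnormal{IS}}_{\mathcal{D}}(s,a,H)$ is the empirical mean over those trajectories of the per-trajectory quantity $\widehat{G} := \sum_{t=0}^{H}\rho_{1:t}\gamma^{t}r(s_t,a_t)$, we get $\mathbb{V}(J^{\textnormal{IS}}_{\mathcal{D}}(s,a,H)) = \tfrac{1}{n}\,\mathbb{V}(\widehat{G})$. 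It therefore suffices to show $\mathbb{V}(\widehat{G}) = \textnormal{Var}_{\textnormal{IS}}(\text{rew}) + \textnormal{Var}_{\textnormal{IS}}(\text{trans})$.

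Next I would fix the filtration along the trajectory that reveals, in order, $s_t$ (transition), then $a_t$ (action), then the reward noise of $r(s_t,a_t)$, for $t=0,1,\dots,H$, and take the martingale-difference / telescoping total-variance decomposition of $\widehat{G}$ with respect to it. This is the same bookkeeping that produces Theorem~\ref{thm_var}: setting every $\rho_t=1$ should recover that statement, which serves as a sanity check. Two structural facts drive the adaptation. First, $\rho_{1:t}=\rho_1\cdots\rho_t$ is measurable with respect to the history through $(s_t,a_t)$, so it is constant inside any conditional variance or expectation taken at a strictly later level and factors out, producing a $\rho_{1:t}^{2}$. Second, each factor satisfies $\mathbb{E}_{a\sim\pi_\beta(\cdot\mid s)}[\rho_t\mid s_t=s]=1$; chaining this over $t+1,\dots,H$ (the standard per-decision importance-sampling telescoping) identifies the conditional mean of the importance-weighted tail started at $(s_t,a_t)$ with the on-policy value $J_\pi(s_t,a_t,H-t)$ — the very object appearing in Theorem~\ref{thm_var}. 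Substituting these into the recursion turns the $t$-th reward term $\mathbb{V}_{t+1}[r(s_t,a_t)\mid s_t]$ and the $t$-th transition term $\mathbb{V}_t(J_\pi(s_t,a_t,H-t))$ into those same terms pre-multiplied by $\rho_{1:t}^{2}$, which is exactly the asserted formula.

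The main obstacle — and the only place the argument is not a mechanical copy of the proof of Theorem~\ref{thm_var} — is controlling the correlation between the freshly sampled action $a_t$ (equivalently, the weight $\rho_t$) and the quantities it multiplies at step $t$, namely $r(s_t,a_t)$ and $J_\pi(s_t,a_t,H-t)$, both of which carry the randomness of $a_t$. One has to order the conditioning carefully (reveal $s_t$ before $a_t$) and use $\mathbb{E}[\rho_t\mid s_t]=1$ to cancel the cross terms between the step-$t$ martingale increment and both the earlier and later increments; the delicate point is then to verify that, after this cancellation, the surviving step-$t$ contribution is exactly $\rho_{1:t}^{2}$ times the step-$t$ contribution of the unweighted decomposition. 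Once the cross terms are shown to vanish, the remaining algebra — collecting the reward-noise pieces into $\textnormal{Var}_{\textnormal{IS}}(\text{rew})$ and the transition-and-action pieces into $\textnormal{Var}_{\textnormal{IS}}(\text{trans})$ — is identical in form to the unweighted case.
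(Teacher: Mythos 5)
Your proposal is correct and follows essentially the same route as the paper: the paper also reduces to a single-trajectory variance via the i.i.d.\ average over the $n$ trajectories (giving the $1/n$ factor), and its Lemma~\ref{lemma_recursive} is precisely the recursive law-of-total-variance decomposition you describe, with the cross terms killed by the conditional independence and zero means that follow from $\mathbb{E}_{a\sim\pi_\beta(\cdot|s_t)}[\rho_t]=1$, i.e.\ the unbiasedness of the per-decision importance-weighted tail for $J_\pi(s_t,a_t,H-t)$. The only cosmetic difference is that you phrase the decomposition as a forward martingale-difference telescoping while the paper unrolls a backward recursion, which produces the same terms.
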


Thm.~\ref{thm_is_var} shows that the variance will be large if $\rho_{1:t}$ is large, which suggests that we need to balance the first and second term in Thm~\ref{thm_ValueEstimationError} when considering $\rho_{1:t}$.

\section{Practical Implementations of HVE}\label{sec_algorithm}
Previous sections propose the general framework of HVE, while there are several remaining aspects to discuss for a practical implementation.  In this section, we discuss (1) how to obtain the behavior policy and dynamic model, (2) how to better balance bias and variance, and (3) how to choose the step length $H$ that minimizes the error.

\subsection{Behavior Policy Learning and Model Learning}

If the behavior policy $\pi_\beta(\mathbf{a|s})$ is  available, $\rho_{1:t}$ in Eq.~(\ref{eq_is}) and $\epsilon_\pi$ can be computed directly. Otherwise, we can approximate the behavior policy by maximizing the log likelihood of observed data like~\cite{Keep_doing_what_worked}: 

\begin{equation}
\label{bc}
    \theta_{b}^* = \arg\max_{\theta_b} \mathbb{E}_{(s,a)\sim\mathcal{D}_\text{env}}\Big[\log \pi_{\theta_b}(a|s)\Big]\,,
\end{equation}
where $\theta_b$ are the parameters of the behavior policy. 

The model learning process is similar to MBPO~\cite{mbpo}, which learns an ensemble of probabilistic dynamic models and uses them for short rollouts. Each model is parameterized by a neural network whose outputs represent a Gaussian distribution with diagonal covariance: $\widehat{T}_\psi(s_{t+1}|s_t,a_t) = \mathcal{N}(\mu_\psi(s_t,a_t), \Sigma_\psi(s_t,a_t))$. The model is typically learned by maximum likelihood estimation: 
\begin{equation}
    \psi^* = \arg\max_{\psi}\mathbb{E}_{(s,a,s')\sim\mathcal{D}_{\text{env}}}[\log\widehat{T}_\psi(s'|s,a)]\,.
\label{learn model}
\end{equation}
The reward function $r$, if unknown, can be jointly learned in the same way.


\subsection{Importance Ratio Clipping}

As suggested by Thm.~\ref{thm_is_var}, to better balance the influence of the first two terms, we can clip the importance ratio $\rho_{1:t}$  the range of $(1-\epsilon, 1+\epsilon)$ to ensure that it is not far away from $1$. As $\epsilon$ increases, the first term in the bound of Thm.~\ref{thm_ValueEstimationError} increases while the second term decreases. Then we can tune $\epsilon$ to get a trade-off between the two terms. We use $\mathbb V(\bar{J}^{\textnormal{IS}}_{\mathcal{D}}(s,a,H)$ to denote the variance with clipped importance ratio, $\epsilon'_\pi$ to denote the reduced policy divergence after the truncated importance sampling, the refined error bound can be represented as
\begin{equation}\label{eq_tis_bounvd}
\sqrt{\mathbb V(\bar{J}^{\textnormal{IS}}_{\mathcal{D}}(s, a, H))}+f(H) R_{\max}\epsilon'_\pi+\gamma^{H+1}\epsilon_{M,\widehat{M}}
\end{equation}
In practice, $\epsilon$ is robust across environments, and we choose $\epsilon$ equals $0.1$ in all experiments.

\subsection{Automatic Step Length Adjustment}

Sec.~\ref{sec_theory} demonstrates that the value estimation error can be reduced by properly choosing step length $H$. Since the optimal $H$ could change across environments and training stages, it is important to automatically adjust $H$ to avoid tedious hyper-parameter tuning and ensure the optimal mean-variance trade-off.

The best $H$ is the one that minimizes value estimation error, which is trivial to compute if we know the value estimation error for every $H$. However, the exact error is intractable because the true value is not known in advance. To tackle this issue, we use the error bound in Eq.~(\ref{eq_tis_bounvd}) as a surrogate of value estimation error, and the $H$ that minimizes the error bound is chosen as the parameter, i.e., 
\begin{equation}
    \label{eq_adapt_H}
    H_b = \argmin_H \sqrt{\mathbb V(\bar{J}^{\textnormal{IS}}_{\mathcal{D}}(s,a, H))}+f(H) R_{\max}\epsilon'_\pi+\gamma^{H+1}\epsilon_{M,\widehat{M}}\,.
\end{equation}

\begin{figure}\centering
\includegraphics[width=0.6\textwidth]{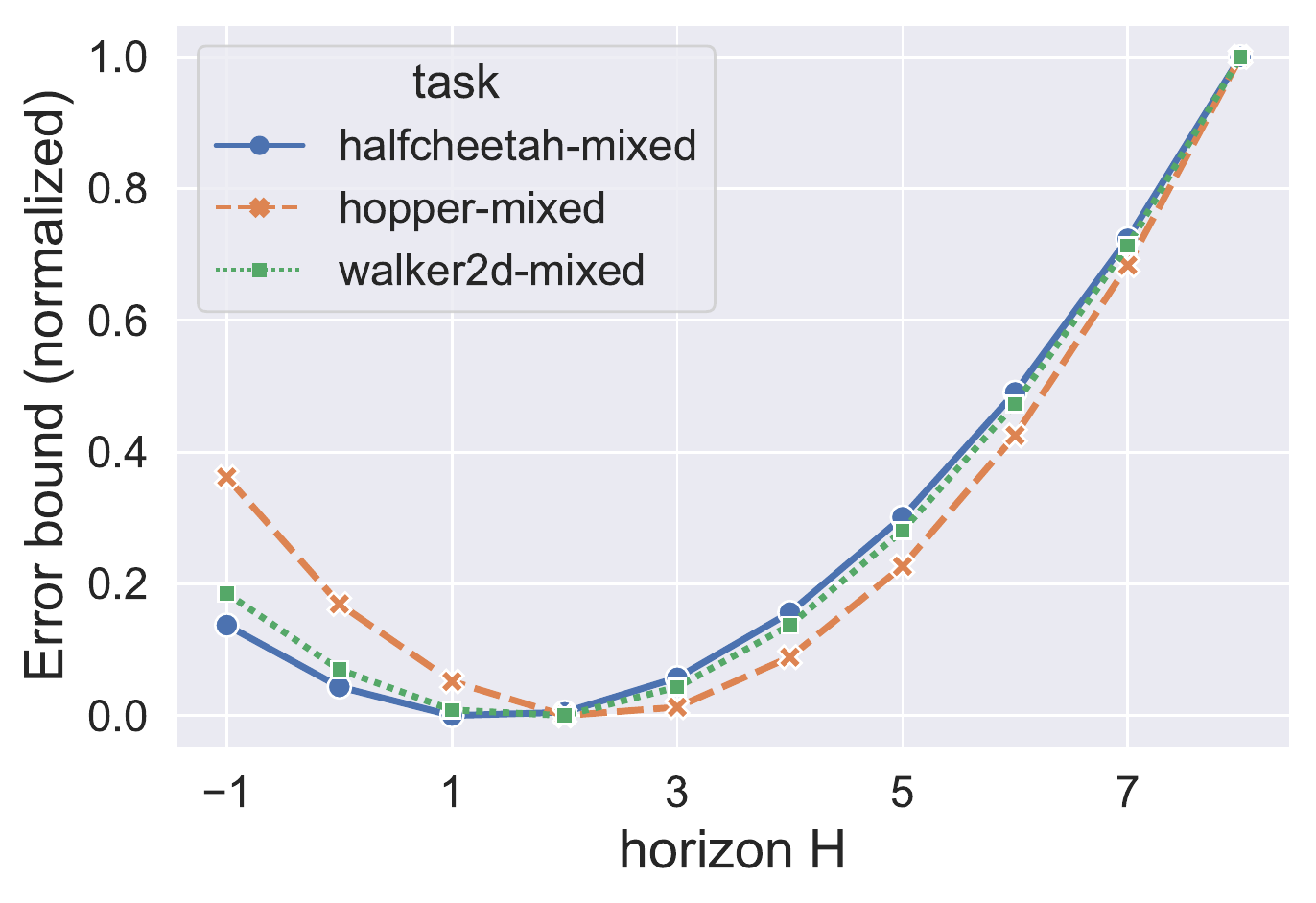}
\caption{The relationship between value estimation error bound and step length $H$. When $H$ is $-1$, the bound degenerates to vanilla model-based value estimation method.}
\label{fig:choose H}
\end{figure}

The error bound of $H$ is determined if we can compute $\mathbb V(\bar{J}^{\textnormal{IS}}_{\mathcal{D}}(s,a,H))$, $\epsilon'_\pi$ and $\epsilon_{M,\widehat{M}}$, where $\epsilon_{M,\widehat{M}}$ is constituted by $\epsilon_\pi$ and $\epsilon_m$. In the experiment, we tend to select small $H_b$ because the error of small $H_b$ is not worse than vanilla model-based value estimation while remaining unbounded if $H_b$ is too large. This implies that we tend to overestimate the first two terms but underestimate the last term. Take MuJoCo tasks as an example. $\mathbb{V}(\bar{J}^{\textnormal{IS}}_{\mathcal{D}}(s,a,H))$ is upper bounded by $(1+\epsilon)^2\sum_{t=0}^H \gamma^{2t} R_{\max}^2/4$ and $\epsilon'_\pi$ is upper bounded by $\epsilon_\pi$. We choose the upper bound as the surrogates for the reason we mentioned above. Given behavior policy $\pi_\beta$ and current policy $\pi$, $\epsilon_\pi$ can be calculated directly with the data in the dataset $\mathcal{D}$. $\epsilon_m$ is difficult to handle because the true model $M$ is not accessible. We assume the output of the true model satisfies Gaussian distribution, whose mean is provided by the transitions in the offline dataset and the variance is identical to that of the learned model. 
Fig.~\ref{fig:choose H} illustrates the relationship between the approximated error bound and $H$ in three MuJoCo tasks. The error bound decreases first and then increases as step size $H$ increases, proving that the best $H$ achieves a trade-off between three terms in Thm.~\ref{thm_ValueEstimationError}. 
Since the error bound of vanilla model-based value estimation is the value at $H=-1$, the experiment also shows value error reduction of HVE.

\section{Applications}

We present two concrete instantiations of HVE: Off-Policy HVE (OPHVE) and Model-based Offline HVE (MOHVE) which adopt HVE to improve performance in OPE and offline RL tasks. 

\textbf{OPHVE. }
OPHVE uses HVE for off-policy evaluation. According to the definition of Q function, the expected return of a policy can be expressed as
$J(\pi)=\mathbb{E}_{s\sim \rho_0, a\sim \pi}\left[Q^\pi(s,a)\right]$. With the estimated Q function, to perform OPE, we only need to sample states from initial state distribution $\rho_0$ and sample actions from the policy to be evaluated to approximate the expectation.


\textbf{MOHVE. }
MOHVE follows the actor-critic framework. The critic is modified to be updated by the HVE objective to better guide the training of the actor. 

From the error bound of Eq.~(\ref{eq_tis_bounvd}), the second term $f(H)R_{\max}\epsilon'_\pi$ and the last term $\gamma^{H+1}\epsilon_{M,\widehat{M}}$ of the error bound is proportional to policy divergence term $\epsilon_\pi$. To address this problem, we regularize the learned policy to make it close to the behavior policy, which is a common practice in previous offline RL algorithms~\cite{BEAR,BRAC,Keep_doing_what_worked}. This can be achieved by the constrained optimization formulated as follows:

\begin{equation}
\begin{aligned}
    \pi &= \argmax_{\pi} \mathbb{E}_{s\sim\mathcal{D}_{\text{mix}}, a\sim\pi(\cdot|s)} \Big[  Q^\pi(s, a)\Big]\,,
    \ \text{s.t.}\ \  \mathbb{E}_{s\sim\mathcal{D}_{\text {mix}}}\Big[D_{\text{KL}}\left(\pi(\cdot|s) || \pi_\beta(\cdot|s)\right)\Big] \leq \delta \,.
\end{aligned}
\end{equation}

The state distribution $\mathcal{D}_{\text {mix}}$ is the mixture of the offline data and the synthetic rollouts from the model: $\mathcal{D}_{\text{mix}} := \alpha \mathcal{D}_\text{env} + (1-\alpha)\mathcal{D}_\text{model}$, where $\alpha\in[0,1]$ is a hyperparameter to control the ratio of data points drawn from the offline dataset. $D_{\text{KL}}\left(\pi || \pi_\beta\right)$ is the Kullback--Leibler Divergence between the learned policy and behavior policy. We use KL divergence instead of TV divergence in practice as KL divergence has a closed-form solution for Gaussian distribution. This substitution is reasonable because of the following relationship between the two divergences: $D_{\textnormal{TV}}^2(p, q)\leq 2D_{\textnormal{KL}}(p,q)$.

To solve this constrained optimization problem, we can use the method of Lagrange multiplier to convert it to an objective that can be optimized by alternating gradient descent steps on the policy parameters $\theta$ and a penalty coefficient $\beta$:

\begin{equation}
\label{eq_pi_update}
\begin{aligned}
    \max_{\theta}\min_{\beta} \mathbb{E}_{s\sim\mathcal{D}_{\text{mix}}, a\sim\pi_{\theta}(\cdot|s)} \Big[ &Q(s, a)  + \beta (\delta - D_{\text{KL}}\left(\pi_{\theta}(\cdot|s) || \pi_\beta(\cdot|s)\right)) \Big]\,.
\end{aligned}
\end{equation}
The whole process of MOHVE is summarized in Appx.~\ref{sec_pseudo_code}.

Besides the last two terms of Eq.~(\ref{eq_tis_bounvd}), we find the constraint also reduces the variance term $\mathbb V(\bar{J}^{\textnormal{IS}}_{\mathcal{D}}(s, H))$ by limiting the scale of importance ratio as shown in the following proposition.

\begin{proposition}\label{prop_is}
If the policy $\pi$ satisfies $\mathbb{E}_{s\sim\mathcal{D}_{\text{mix}}}\left[D_{\textnormal{KL}}\left(\pi(\cdot|s) || \pi_\beta(\cdot|s)\right)\right] \leq \delta $, and suppose $c=\max_{s,a}\left\|\frac{\pi(a|s)}{\pi_\beta(a|s)}\right\|_\infty\leq \infty$, then $\mathbb{E}_{s\sim\mathcal{D}_{\text{mix}}}\mathbb{E}_{a\sim \pi_\beta(\cdot|s_t)} \rho_t^2\leq  c\delta + 1$.
\end{proposition}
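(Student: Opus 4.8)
\textit{Proof proposal.} The plan is to rewrite the second moment of the importance ratio as a $\chi^2$-divergence and then control it by the KL-divergence using the bounded-likelihood-ratio assumption. Fix a state $s$ and change measure in the inner expectation: since $\rho_t=\pi(a_t|s_t)/\pi_\beta(a_t|s_t)$,
\[
\mathbb{E}_{a\sim\pi_\beta(\cdot|s)}\big[\rho_t^2\big]=\int \pi_\beta(a|s)\,\frac{\pi(a|s)^2}{\pi_\beta(a|s)^2}\,da=\int\frac{\pi(a|s)^2}{\pi_\beta(a|s)}\,da=1+\chi^2\big(\pi(\cdot|s)\,\|\,\pi_\beta(\cdot|s)\big),
\]
where $\chi^2(p\|q):=\int (p-q)^2/q$. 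Averaging over $s\sim\mathcal{D}_{\text{mix}}$ turns the claim $\mathbb{E}_{s\sim\mathcal{D}_{\text{mix}}}\mathbb{E}_{a\sim\pi_\beta}[\rho_t^2]\le c\delta+1$ into $\mathbb{E}_{s\sim\mathcal{D}_{\text{mix}}}[\chi^2(\pi(\cdot|s)\|\pi_\beta(\cdot|s))]\le c\delta$. Since the hypothesis gives $\mathbb{E}_{s\sim\mathcal{D}_{\text{mix}}}[D_{\mathrm{KL}}(\pi(\cdot|s)\|\pi_\beta(\cdot|s))]\le\delta$, by linearity of expectation it suffices to prove the per-state inequality $\chi^2(\pi(\cdot|s)\|\pi_\beta(\cdot|s))\le c\,D_{\mathrm{KL}}(\pi(\cdot|s)\|\pi_\beta(\cdot|s))$, using that $\pi(a|s)/\pi_\beta(a|s)\le c$ everywhere.

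For the per-state step I would compare the two $f$-divergences through their integrands. With $\rho=\rho(a)=\pi(a|s)/\pi_\beta(a|s)\in[0,c]$ we have $\chi^2=\mathbb{E}_{a\sim\pi_\beta}[(\rho-1)^2]$ and $D_{\mathrm{KL}}=\mathbb{E}_{a\sim\pi_\beta}[\phi(\rho)]$ with $\phi(x)=x\log x-x+1\ge0$ (the linear part of $\phi$ integrates to zero because $\mathbb{E}_{a\sim\pi_\beta}[\rho]=1$), so it is enough to bound $(x-1)^2$ by a multiple of $\phi(x)$ on $[0,c]$. A convenient alternative route passes through total variation: $(\rho-1)^2\le(\rho+1)\,|\rho-1|\le(c+1)\,|\rho-1|$, hence $\chi^2\le(c+1)\,\mathbb{E}_{a\sim\pi_\beta}|\rho-1|=2(c+1)\,D_{\mathrm{TV}}(\pi(\cdot|s),\pi_\beta(\cdot|s))$, followed by $D_{\mathrm{TV}}^2\le 2D_{\mathrm{KL}}$, already quoted in the paper; one may also split $(\rho-1)^2$ over $\{\rho\ge1\}$ and $\{\rho<1\}$ and use $x-1\le c\log x$ on $[1,c]$ together with $\mathbb{E}_{a\sim\pi_\beta}[(\rho-1)^+]=\mathbb{E}_{a\sim\pi_\beta}[(1-\rho)^+]$.

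The main obstacle is pinning down the constant so that it is exactly $c$, with a linear (not a $\sqrt{\delta}$) dependence. The pointwise approach yields the clean bound $\chi^2\le\kappa(c)\,D_{\mathrm{KL}}$ with $\kappa(c)=\sup_{x\in[0,c]}(x-1)^2/\phi(x)=(c-1)^2/(c\log c-c+1)$ — the ratio $(x-1)^2/\phi(x)$ is increasing on $[0,\infty)$ and tends to $2$ as $x\to1$ — so that $\mathbb{E}[\rho_t^2]\le 1+\kappa(c)\,\delta$; this recovers the stated $c\delta+1$ exactly when $\kappa(c)\le c$, i.e.\ when $c$ is not too small, whereas for $c$ close to $1$ one must either accept the slightly larger constant $\kappa(c)$ or settle for the $\sqrt{\delta}$-type bound $\chi^2\lesssim(c+1)\sqrt{D_{\mathrm{KL}}}$ coming from the total-variation route. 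So the delicate point is the exact form of this $f$-divergence comparison under the constraint $\rho_t\le c$; the remaining ingredients — the change of measure, the identification with $\chi^2$, and averaging over $\mathcal{D}_{\text{mix}}$ — are routine.
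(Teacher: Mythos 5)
Your route is exactly the paper's: rewrite $\mathbb{E}_{a\sim\pi_\beta}[\rho_t^2]$ as $1+\chi^2\big(\pi(\cdot|s)\,\|\,\pi_\beta(\cdot|s)\big)$, reduce to the per-state comparison $\chi^2\le c\,D_{\textnormal{KL}}$, and average over $s\sim\mathcal{D}_{\text{mix}}$. The paper packages that comparison as its Lemma A.4 ($c\,D_{\textnormal{KL}}(P,Q)\ge\chi^2(P,Q)$ for $c=\left\|dP/dQ\right\|_\infty$) and proves it in one line from $\log x\ge (x-1)/x$.

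The ``delicate point'' you flag is therefore not a defect of your write-up but of that lemma. Its one-line proof does not go through: $\int dP\,\frac{dP/dQ-1}{dP/dQ}=\int(dP-dQ)=0$, so the displayed chain in the paper reads $c\,D_{\textnormal{KL}}\ge 0\ge\chi^2$, and the pointwise domination $c\,\frac{\rho-1}{\rho}\ge\rho-1$ it implicitly relies on fails on the set $\{\rho<1\}$. Worse, the inequality itself is false for $c$ close to $1$: for $P=(\tfrac12+\epsilon,\tfrac12-\epsilon)$ and $Q=(\tfrac12,\tfrac12)$ one has $c=1+2\epsilon$, $\chi^2=4\epsilon^2$ and $D_{\textnormal{KL}}=2\epsilon^2+O(\epsilon^3)$, so $\chi^2\approx 2c\,D_{\textnormal{KL}}$. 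Your sharp constant $\kappa(c)=\sup_{x\in[0,c]}(x-1)^2/(x\log x-x+1)=(c-1)^2/(c\log c-c+1)$ is the correct form of the $f$-divergence comparison, and one checks $\kappa(c)\le c$ only for $c$ above roughly $3$; so the stated bound $c\delta+1$ is recoverable only in that regime, while what your argument (and, after repair, the paper's) actually proves in general is $\kappa(c)\,\delta+1$. In short, you reproduced the paper's approach and, by being careful about the constant, exposed a genuine error in the paper's Lemma A.4 rather than a gap in your own reasoning; the total-variation detour you mention is sound but only yields the weaker $\sqrt{\delta}$-type dependence, so the $\kappa(c)$ version is the right fix.
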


\section{Experiments}

We design several experiments to demonstrate and analyze the performance of OPHVE and MOHVE. We also analyze the effect of step length $H$ and the accuracy of automatic step length adjustment. More experiment details are given in Appx.~\ref{sec_detail}.



\subsection{Off-Policy Evaluation Performance of OPHVE} 
\label{OPE experiments}

We follow the setting of DOPE~\cite{dope} to test the off-policy evaluation performance of OPHVE: For a given dataset $\mathcal{D}$, we estimate the values of a set of policies $\{\pi_1, \pi_2, ..., \pi_N\}$. The estimated values are denoted as $\{\hat{V}_1, \hat{V}_2, ..., \hat{V}_N\}$. We also deploy these policies in real environments and record their discounted returns as ground truth values, denoted as $\{V_1, V_2, ..., V_N\}$. The policies to evaluate are generated during the training process of online RL algorithms Soft Actor-Critic~\cite{sac}. DOPE is based on the D4RL benchmark of Gym-MuJoCo tasks~\cite{gym, d4rl}. The tasks are challenging for OPE methods because they have data from various sources and policies that are generated using online RL algorithms, making it less likely that the behavior and evaluation policies share similar induced state-action distributions~\cite{dope}. The performance of every method is evaluated using three different metrics: absolute error, regret@k and rank correlation. The details of the metrics are in Appx.~\ref{sec_metric}.

\begin{figure}[h]
    \centering
    \subfigure[Absolute error]{
        \includegraphics[width=0.3\textwidth]{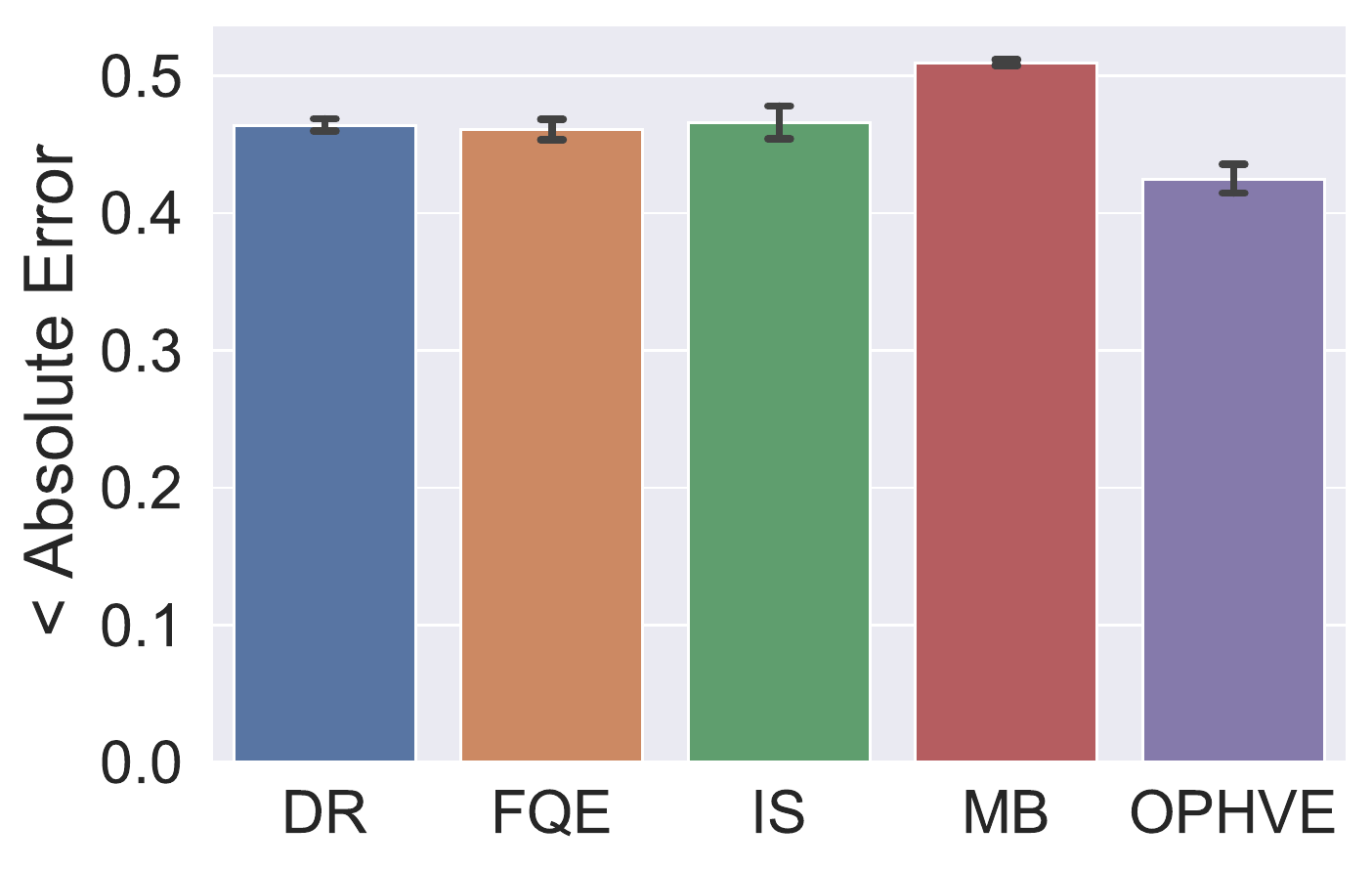}
    }
    \subfigure[Rank correlation]{
        \includegraphics[width=0.3\textwidth]{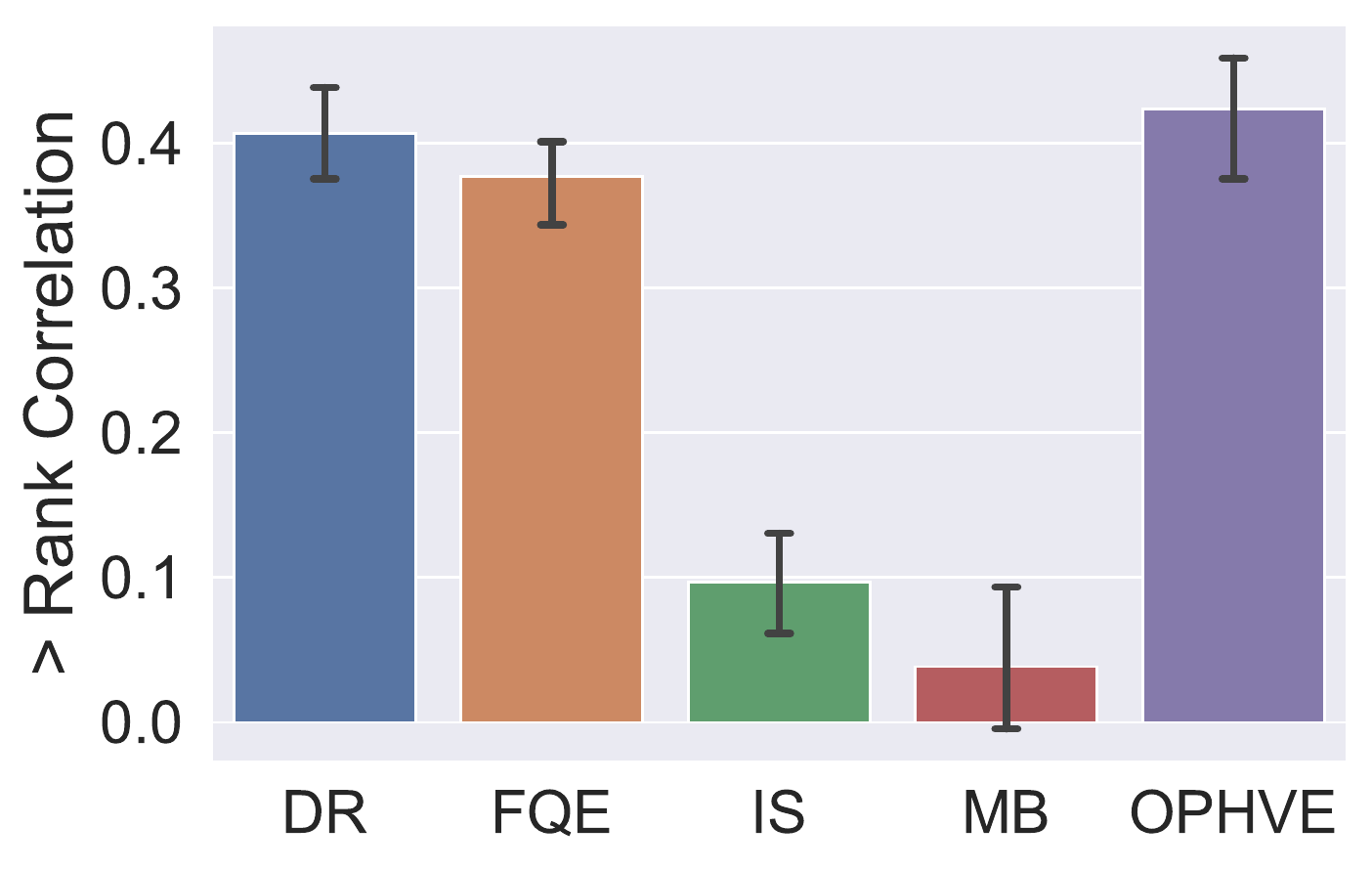}
    }
    \subfigure[Regret@1]{
        \includegraphics[width=0.3\textwidth]{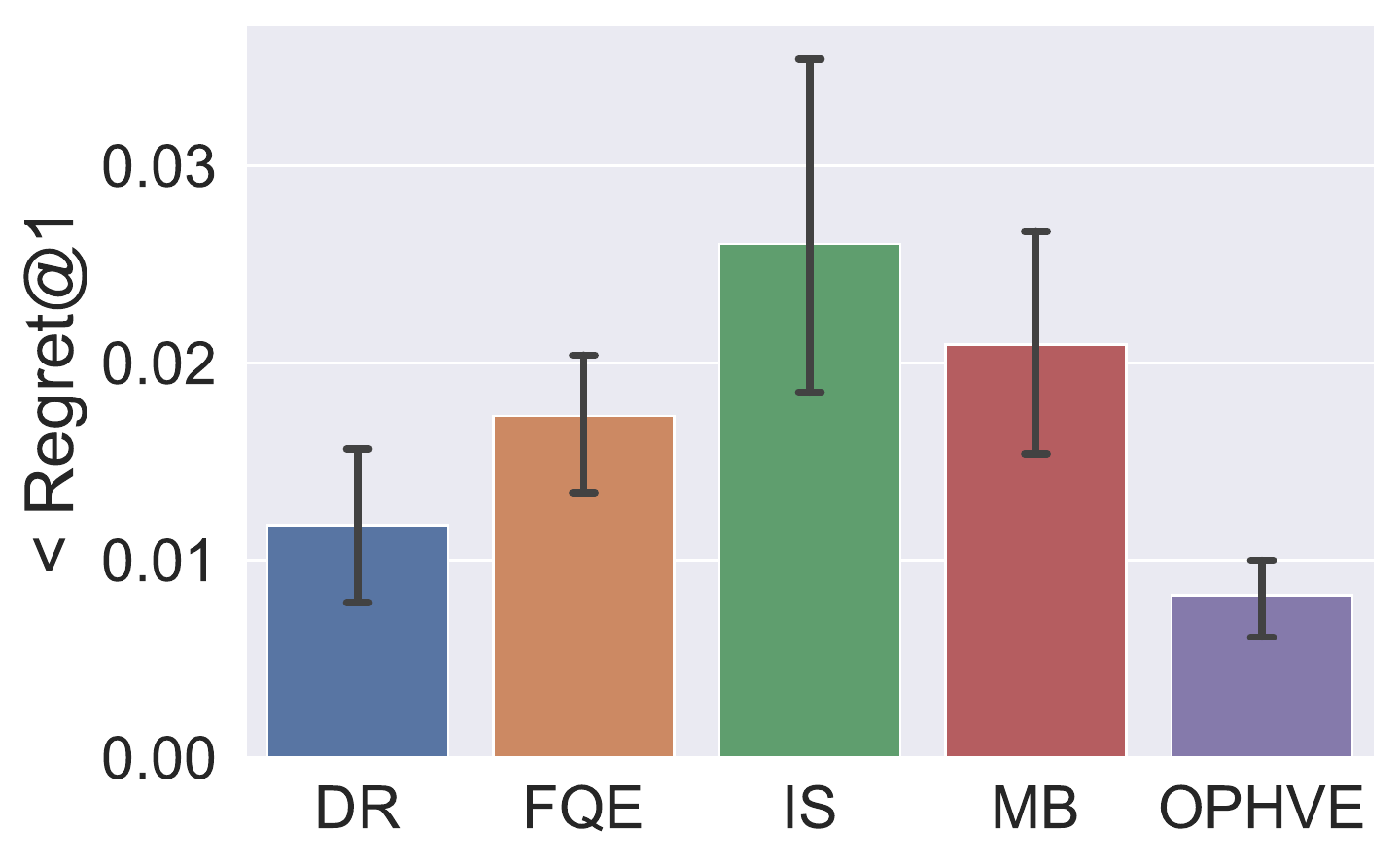}
    }
    \caption{The overall performance of off-policy evaluation in three metrics. The value of each bar is averaged over all tasks and the error bar correspond to the standard deviation. To aggregate across tasks, we normalize the policy values and evaluated policy values to range between 0 and 1.}
    \label{fig:ope}
\end{figure}

In this experiment, we compare OPHVE to several previous methods\footnote{Despite recent exciting developments in marginalized importance sampling (MIS), it does not show strong empirical performance compared to other methods~\cite{dope, ope}. Due to the difficult optimization of MIS, we defer the comparison with MIS to future work.}, including:
(1) \textbf{Fitted Q-Evaluation (FQE):} We adopt ~\cite{fqe} that estimates the policy value by iteratively performing Bellman update and use the recent implementation~\cite{statistical_bootstrapping}.
(2) \textbf{Model-Based (MB)}: MB trains dynamics and reward models by maximizing the log likelihood of the next state and reward from the offline dataset as recent successful model-based RL algorithms~\cite{learn_dynamics_model, mbpo}. Then policy evaluation is performed by rolling out in this learned model.
(3) \textbf{Importance Sampling (IS)}:  We choose the weighted importance sampling method~\cite{eligibility_traces} with the implementation of~\cite{statistical_bootstrapping}. The behavior policy is learned the same as our method.
(4) \textbf{Doubly Robust (DR)}: We perform weighted doubly-robust~\cite{doubly_roubst}, which has the least variance, and use the implementation of~\cite{statistical_bootstrapping}. 

Fig.~\ref{fig:ope} shows the performance of OPHVE and other methods in three metrics. OPHVE consistently outperforms other methods on all metrics. The results of absolute error show that hybrid value estimation reduces the value estimation error, which verifies our theory in \cref{sec_theory}. Besides, OPHVE gets a higher rank correlation and lower regret. In conclusion, OPHVE can not only perform accurate evaluation but also select the competitive policies among the polices to be evaluated. Detailed OPE results of each task are in Appx.~\ref{additional ope results}.

\subsection{Offline Reinforcement Learning Performance of MOHVE}
\label{offline rl}

We evaluate MOHVE on the D4RL benchmarks~\cite{d4rl}. 
Our baselines include four offline reinforcement learning algorithms and one imitation learning algorithm. Among the four state-of-the-art offline RL algorithms, COMBO~\cite{combo} and MOPO~\cite{mopo} are model-based methods, CQL~\cite{CQL} and TD3+BC~\cite{td3bc} are model-free method. The imitation learning algorithm is behavior cloning~\cite{BC}.

We train MOHVE for $1$ million time steps and report the average normalized score of the last ten iterations of policy update. \cref{offline experiment} shows the results of offline reinforcement learning experiments. The numbers of BC are taken from the D4RL paper, while the results for COMBO, MOPO, TD3+BC, and CQL are based on their respective papers~\cite{combo, mopo, td3bc, CQL}. In summary, MOHVE outperforms other methods in \textbf{$7$ out of the $12$ tasks}, and achieves comparable results in the rest $5$ tasks. Previous offline RL work~\cite{mopo} found that model-based methods perform worse on the task with narrow data distributions (medium, medium-expert datasets) since it's hard to learn models that generalize well. Our method performs well on these tasks because HVE reduces the value estimation error by adjusting the horizon $H$ in case that the model error is large. Besides better average performance, MOHVE generally shows a smaller variance in every task, indicating better stability and robustness. The learning curves of MOHVE are in Appx.~\ref{additional offline RL results}.

\begin{table*}[t]
\caption{\label{offline experiment} Performance of MOHVE and baselines on the Gym-MuJoCo datasets, on the normalized return metric proposed by D4RL benchmark. The score roughly ranges from 0 to 100, where 0 corresponds to a random policy performance and 100 corresponds to an expert policy performance. Each result is the average score over five random seeds $\pm$ standard deviation.}
\resizebox{1.0\textwidth}{!}{
\begin{tabular}{@{}l|l|l|l|l|l|l|l@{}}
\toprule
\textbf{Data Type} &
  \textbf{Env.} &
  \multicolumn{1}{c|}{\textbf{BC}} &
  \multicolumn{1}{c|}{\textbf{\begin{tabular}[c]{@{}c@{}}MOHVE\\ (Ours)\end{tabular}}} &
  \multicolumn{1}{c|}{\textbf{COMBO}} &
  \multicolumn{1}{c|}{\textbf{MOPO}} &
  \multicolumn{1}{c|}{\textbf{\begin{tabular}[c]{@{}c@{}}TD3\\ +BC\end{tabular}}} &
  \multicolumn{1}{c}{\textbf{CQL}} \\ \midrule
random  & halfcheetah & 2.1   & 33.39$\pm$3.23           & \textbf{38.8$\pm$3.7} & 35.4$\pm$2.5  & 10.2$\pm$1.3             & 35.4  \\
random  & hopper      & 1.6   & 12.11$\pm$0.15           & \textbf{17.9$\pm$1.4} & 11.7$\pm$0.4  & 11.0$\pm$0.1             & 10.8  \\
random  & walker2d    & 9.8   & \textbf{21.65$\pm$0.04}  & 7.0$\pm$3.6           & 13.6$\pm$2.6  & 1.4$\pm$1.6 & 7.0   \\
medium  & halfcheetah & 36.1  & \textbf{56.48$\pm$1.22}  & 54.2$\pm$1.5          & 42.3$\pm$1.6  & 42.8$\pm$0.3             & 44.4  \\
medium  & hopper      & 29.0  & \textbf{100.81$\pm$0.96} & 97.2$\pm$2.2          & 28.0$\pm$12.4 & 99.5$\pm$1.0             & 58.0  \\
medium  & walker2d    & 6.6   & 78.34$\pm$2.53           & \textbf{81.9$\pm$2.8} & 17.8$\pm$19.3 & 79.7$\pm$1.8             & 79.2  \\
mixed   & halfcheetah & 38.4  & \textbf{57.68$\pm$0.89}  & 55.1$\pm$1.0          & 53.1$\pm$2.0  & 43.3$\pm$0.5             & 46.2  \\
mixed   & hopper      & 11.8  & \textbf{93.94$\pm$2.52}  & 89.5$\pm$1.8          & 67.5$\pm$24.7 & 31.4$\pm$3.0             & 48.6  \\
mixed   & walker2d    & 11.3  & 51.87$\pm$2.25           & \textbf{56.0$\pm$8.6} & 39.0$\pm$9.6  & 25.2$\pm$5.1             & 26.7  \\
med-exp & halfcheetah & 35.8  & \textbf{103.06$\pm$2.72} & 90.0$\pm$5.6          & 63.3$\pm$38.0 & 97.9$\pm$4.4             & 62.4  \\
med-exp & hopper      & 111.9 & 111.69$\pm$0.19          & 111.1$\pm$2.9         & 23.7$\pm$6.0  & \textbf{112.2$\pm$0.2}   & 111.0 \\
med-exp & walker2d    & 6.4   & \textbf{106.79$\pm$1.97} & 103.3$\pm$5.6         & 44.6$\pm$12.9 & 105.7$\pm$2.7            & 98.7  \\ \bottomrule
\end{tabular}
}
\end{table*}

\subsection{Analysis of step length $H$}

\begin{figure}[t]
    \centering
    \includegraphics[width=\textwidth]{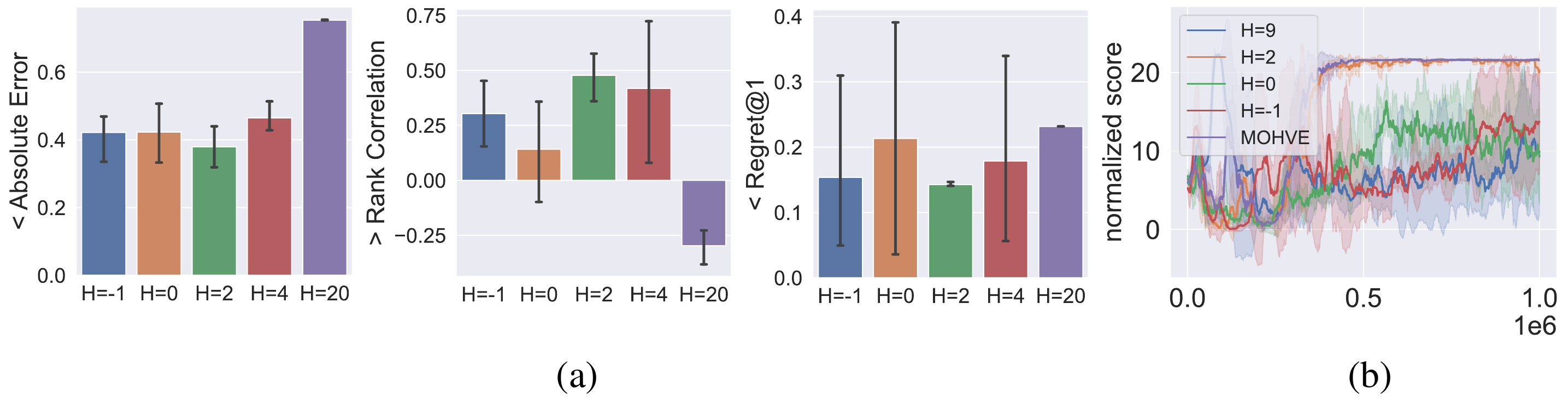}
    \caption{\label{fig:analysis} Analysis experiments results on walker2d-random. \textbf{(a)}: the OPE results in absolute error, rank correlation and regret@1. \textbf{(b)}: the offline RL learning curves with different step length $H$. Note that $H$ equals $-1$ corresponds to only using the model rollout data.}
\end{figure}

HVE benefits from the proposed way of hybridizing offline data and model rollout data. The step length $H$ plays an important role. We analyze the effect of the step length $H$ on OPHVE and MOHVE.
The results are shown in Fig.~\ref{fig:analysis}.
OPE results show that $H$ has a great impact on the performance of OPHVE.
Compared to -1, 0, 4, 20, OPHVE performs the best in all three metrics when $H$ equals $2$, which is the step length selected by the automatic adjustment. It should be noted that the variance in the experiment is induced by stochastic initialization and optimization process controlled by random seeds, while the variance term $\mathbb{V}(\bar{J}^{\textnormal{IS}}_{\mathcal{D}}(s,a,H))$ comes from the generation of the dataset, so the experiment variance cannot reflect the variance term.

Offline RL results show that our automatic adjustment of step length $H$ contributes to the performance gain of MOHVE. In detail, the vanilla model-based method (i.e., $H$ equals $-1$), and the hybrid method with $H$ equals $0$ struggle to perform well. When we increase the length of $H$ to $2$, the converged performance is improved. Finally, the performance decreases when $H$ is changed to $9$. Our method, MOHVE, converges to a higher normalized score more stably through automatically adjusting the parameter $H$ during the training process. In conclusion, the step length $H$ matters in OPE and offline RL, and our automatic selection method succeeds in finding the near-optimal one. Additional results and analysis on other tasks can be found in Appx.~\ref{sec_additional_analysis_of_H}.

\subsection{Offline Policy Selection}

\begin{figure}
\centering
\includegraphics[width=0.60\textwidth]{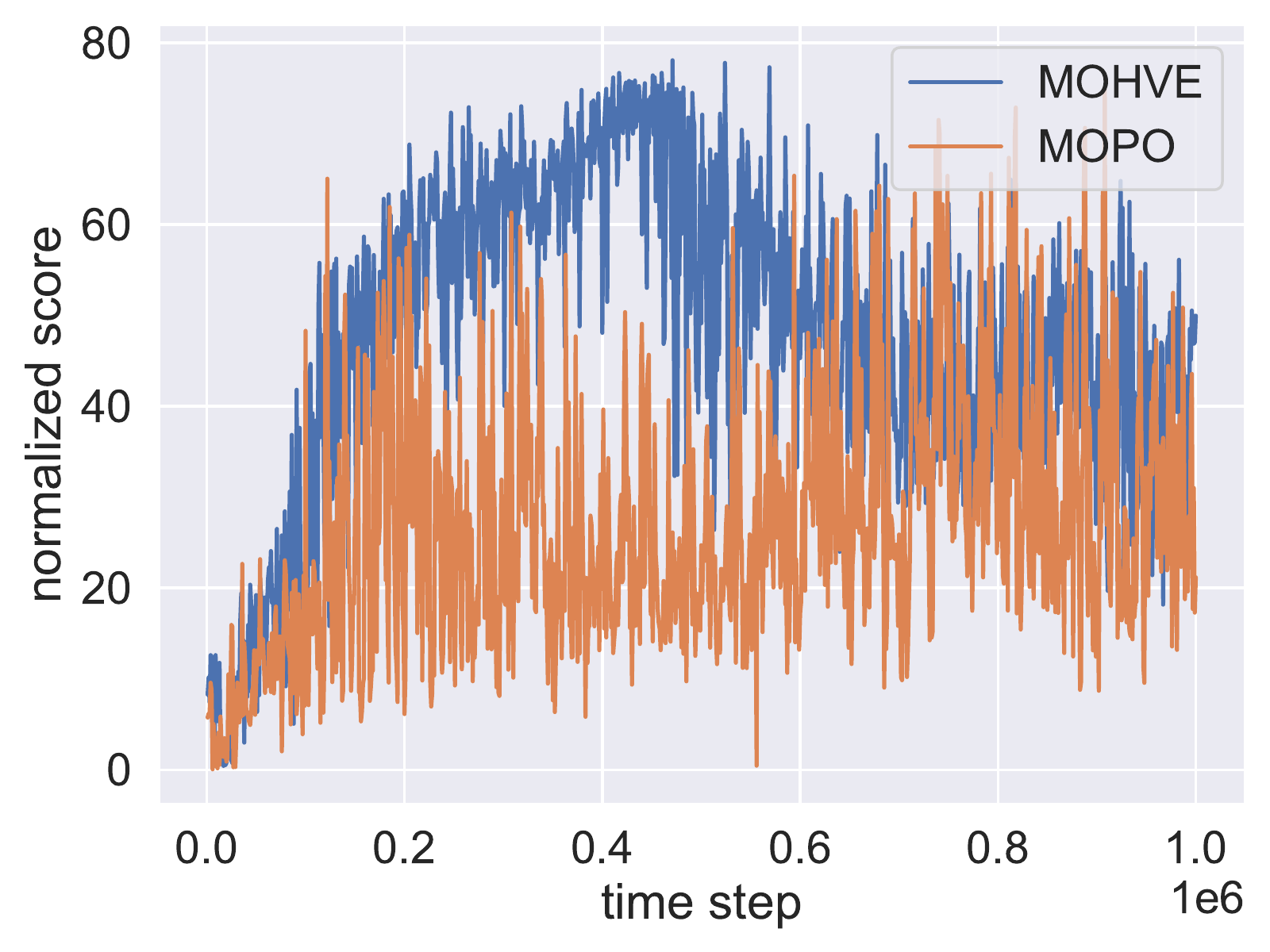}
\caption{The learning curve of MOHVE and MOPO on walker2d-mixed. There is oscillation problem in offline RL.}
\label{fig:unstable performance}
\end{figure}

 It is common that the performance of the trained policy oscillates during the training process, as illustrated in Fig.~\ref{fig:unstable performance}. Therefore, simply choosing the policy in the end may not imply a good performance. To tackle this issue, we can obtain policy snapshots during the training process and select the best of them. In real-world application, we tend to limit the budget (the amount of policies deployed online after policy selection) of online evaluation to reduce risk and cost, if not completely forbid it. Since the number of the saved snapshots is usually much larger than the budget, we need to first perform offline policy selection to  obtain a small enough policy set that includes good policies. 
 
Since OPHVE achieves a remarkable performance in OPE tasks, we integrate it to MOHVE as the offline policy selection method. Such an integration forms a completely offline training and evaluation pipeline~\cite{neorl}. We choose uniform policy selection as the baseline, since it is a strong baseline under low online evaluation budgets~\cite{online_evaluation}. We train MOHVE and save the policy every 50 epochs, and conduct offline policy selection under different online evaluation budgets. Tab.~\ref{tab:offline policy selection} shows the offline policy selection results. On one hand, OPHVE selects better policies than uniform policy selection under the same budget. On the other hand, our proposed offline RL pipeline, MOHVE+OPHVE, can further improve the performance of MOHVE, which sheds some light on designing offline RL algorithms more suitable for real word scenario.

\begin{table}[htp!]
\caption{Offline policy selection results. TOP $k$ reports the best online normalized score among $k$ policies selected by corresponding offline policy selection methods. Each result is the average score over five random seeds $\pm$ standard deviation.}
\vspace{3mm}
\label{tab:offline policy selection}
\centering
\resizebox{1.0\textwidth}{!}{
\begin{tabular}{@{}cllll|lll@{}}
\toprule
\multirow{2}{*}{\textbf{Env.}} &
  \multicolumn{1}{c}{\multirow{2}{*}{\textbf{\begin{tabular}[c]{@{}c@{}}Final\\ Iterations\end{tabular}}}} &
  \multicolumn{3}{c|}{\textbf{Uniform Policy Selection}} &
  \multicolumn{3}{c}{\textbf{OPHVE}} \\ \cmidrule(l){3-8} 
 &
  \multicolumn{1}{c}{} &
  \multicolumn{1}{c}{TOP1} &
  \multicolumn{1}{c}{TOP3} &
  \multicolumn{1}{c|}{TOP5} &
  \multicolumn{1}{c}{TOP1} &
  \multicolumn{1}{c}{TOP3} &
  \multicolumn{1}{c}{TOP5} \\ \midrule
walker2d-mixed & 51.87$\pm$2.25 & 43.09$\pm$0.81 & 55.96$\pm$1.70 & 60.05$\pm$2.69 & 56.82$\pm$7.68  & 62.22$\pm$6.00 & 66.43$\pm$5.54 \\ \midrule
hopper-mixed   & 93.94$\pm$2.52 & 66.78$\pm$8.73 & 89.31$\pm$6.30 & 94.78$\pm$3.28 & 82.45$\pm$22.30 & 97.06$\pm$2.44 & 97.06$\pm$2.44 \\ \bottomrule
\end{tabular}
}
\end{table}

\section{Conclusion}
In this paper, we propose Hybrid Value Estimation (HVE) to perform a more accurate value function estimation in the offline setting. It automatically adjusts the step length parameter to get a bias-variance trade-off. The error bound of HVE is better than that of vanilla model-based value estimation. We provide two concrete algorithms OPHVE and MOHVE and thoroughly discuss the details of the implementation. Empirical evaluations on MuJoCo tasks corroborate the theoretical claim, showing that HVE indeed selects the best step length and contributes to the performance in off-policy evaluation and offline RL tasks. We also test the performance of integrating MOHVE with OPHVE to select the best policy during training in an offline manner, which 
could shed some light on improving the training and evaluation pipeline of offline RL.

\bibliography{main}
\bibliographystyle{unsrt}
\clearpage
\newpage
\appendix
\section{Proof of results in the main paper}
\subsection{Proof of Thm.~\ref{thm_ValueEstimationError}}
To provide proof of Thm.~\ref{thm_ValueEstimationError}, we need Thm.~\ref{thm_mbpo}, Lem.~\ref{lemma_bias} and Lem.~\ref{lemma_var}.

\begin{lemma}\label{lemma_bias}
Let $J_{\pi}(s_w, a_w, H)=\mathbb E_{\pi}[\sum_{t=0}^H\gamma^tr(s_t, a_t)|s_0=s_w, a_0=a_w]$, then 
$$\mathbb E_{\mathcal{D}}\left|J_{\pi}(s_w, a_w, H)-J_{\pi_\beta}(s_w, a_w, H)\right|\leq \left(\frac{1-\gamma^{H+1}}{(1-\gamma)^2}-\frac{(H+1)\gamma^{H+1}}{1-\gamma}\right) R_{\max}\epsilon_\pi$$
\end{lemma}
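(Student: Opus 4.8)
The plan is to expand the $H$-step truncated return along the horizon and to control each timestep's contribution by the total-variation distance between the state--action distributions induced by $\pi$ and $\pi_\beta$. Let $\mu^\pi_t$ denote the law of $(s_t,a_t)$ obtained by rolling out $\pi$ under the \emph{true} dynamics $T$ from $(s_0,a_0)=(s_w,a_w)$, and $\mu^{\pi_\beta}_t$ the analogue for $\pi_\beta$. Since both rollouts start from the same $(s_w,a_w)$ and share the kernel $T$, the $t=0$ term cancels, so $J_\pi(s_w,a_w,H)-J_{\pi_\beta}(s_w,a_w,H)=\sum_{t=1}^H\gamma^t\big(\mathbb{E}_{\mu^\pi_t}[r]-\mathbb{E}_{\mu^{\pi_\beta}_t}[r]\big)$; and because $r\in[0,R_{\max}]$ and the two laws have equal mass, centering $r$ at $R_{\max}/2$ yields $\big|\mathbb{E}_{\mu^\pi_t}[r]-\mathbb{E}_{\mu^{\pi_\beta}_t}[r]\big|\le R_{\max}\,D_{\textnormal{TV}}(\mu^\pi_t,\mu^{\pi_\beta}_t)$.

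Next I would bound $D_{\textnormal{TV}}(\mu^\pi_t,\mu^{\pi_\beta}_t)$ by a standard telescoping argument: peeling the rollout off one step at a time and using that applying the common kernel $T$ cannot increase total-variation distance, the only new discrepancy at step $k\in\{1,\dots,t\}$ is the action mismatch, so $D_{\textnormal{TV}}(\mu^\pi_t,\mu^{\pi_\beta}_t)\le\sum_{k=1}^t\mathbb{E}\big[D_{\textnormal{TV}}(\pi(\cdot|s_k),\pi_\beta(\cdot|s_k))\big]\le t\,\epsilon_\pi$ by the defining assumption on $\epsilon_\pi$. Substituting and summing gives $\big|J_\pi(s_w,a_w,H)-J_{\pi_\beta}(s_w,a_w,H)\big|\le R_{\max}\epsilon_\pi\sum_{t=0}^H t\gamma^t\le R_{\max}\epsilon_\pi\sum_{t=0}^H(t+1)\gamma^t$, and a routine calculation (e.g.\ differentiating the finite geometric series $\sum_{t=0}^H\gamma^{t+1}$) identifies $\sum_{t=0}^H(t+1)\gamma^t$ with $\frac{1-\gamma^{H+1}}{(1-\gamma)^2}-\frac{(H+1)\gamma^{H+1}}{1-\gamma}=f(H)$. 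An equivalent route is induction on $H$ through the one-step recursion $J_\pi(s_w,a_w,H)=r(s_w,a_w)+\gamma\,\mathbb{E}_{s_1\sim T(\cdot|s_w,a_w)}\big[V^\pi_{H-1}(s_1)\big]$, where $V^\pi_h(s):=\mathbb{E}_\pi[\sum_{t=0}^h\gamma^tr(s_t,a_t)\mid s_0=s]$ is the $h$-step truncated value, carrying the invariant $\sup_s|V^\pi_h(s)-V^{\pi_\beta}_h(s)|\le R_{\max}\epsilon_\pi f(h)$ (the step $h\mapsto h+1$ uses $f(h+1)-\gamma f(h)=\sum_{t=0}^{h+1}\gamma^t$), which produces the coefficient $f(H)$ directly. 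Finally I would take $\mathbb{E}_{\mathcal{D}}$ of both sides: $f(H)R_{\max}$ is deterministic and $\mathbb{E}_{\mathcal{D}}\mathbb{E}_{s\sim\mathcal{D}}[D_{\textnormal{TV}}(\pi(\cdot|s),\pi_\beta(\cdot|s))]=\epsilon_\pi$.

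The main obstacle is the distributional bookkeeping rather than the algebra: $\epsilon_\pi$ is defined as the policy divergence averaged under the offline data distribution, whereas the telescoping step needs it averaged under the visitation distribution of the $\pi$-rollout started at $(s_w,a_w)$, which need not coincide with the data distribution. Following the convention of \cite{mbpo} I would treat $\epsilon_\pi$ as an upper bound valid along these rollouts (if one insists on rigor, read $\epsilon_\pi$ as a supremum over states, which only loosens the constant). A secondary, purely cosmetic point: the tight argument actually yields the slightly smaller coefficient $\sum_{t=0}^H t\gamma^t$, which is $\le f(H)$, so the inequality holds as stated; counting the degenerate $t=0$ action step as well recovers $f(H)$ exactly.
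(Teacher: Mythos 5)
Your proof is correct and follows essentially the same route as the paper's: both decompose the truncated return timestep by timestep, bound each term by $R_{\max}$ times the total-variation distance between the distributions induced at time $t$ by $\pi$ and $\pi_\beta$, show that this distance grows linearly as $t\,\epsilon_\pi$, and sum the resulting arithmetic--geometric series to obtain $f(H)$. The only difference is mechanical---you telescope the TV distance through the shared kernel whereas the paper uses a coupling/agreement-probability decomposition via $1-(1-\epsilon_\pi)^t\le t\epsilon_\pi$---and your count is in fact marginally tighter ($t$ rather than $t+1$ per term, which you correctly note still implies the stated bound); the distributional-bookkeeping caveat you flag (data distribution versus rollout visitation distribution in the definition of $\epsilon_\pi$) is present, and equally glossed over, in the paper's own proof.
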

\begin{proof}
Let $d_{t,w}(s)$ represents the distribution of state at time $t$ by starting with $(s_w, a_w)$ and following policy $\pi$, $d^\beta_{t,w}(s)$ represents the distribution of state at time $t$ by starting at with $(s_w,a_w)$ and following policy $\pi_\beta$, and $d'_{t,w}(s)$ represents the distribution of states at time $t$ by following $\pi_\beta$ conditioned on starting with $(s_w,a_w)$ and the fact $\pi_\beta$ performs different from $\pi$ at least one step. Then
$$d^\beta_{t,w}(s)=p_{t-1}d_{t,w}(s)+(1-p_{t-1})d'_{t,w}(s),$$
where $p_{t-1}$ is the probability $\pi_\beta$ performs the same as $\pi$ in $t-1$ steps. 

For simplification, we denote $\pi(\cdot|s)$ as $\pi(s)$, denote $\mathbb{E}_{\mathcal{D}}\mathbb{E}_{s\sim \mathcal{D}}$ as $\mathbb{E}_{s, \mathcal{D}}$. Notice that 
\begin{align*}
&\quad \ \mathbb{E}_{s, \mathcal{D}}\textnormal{Pr}(\pi(s)=\pi_\beta(s))\\
&=\mathbb{E}_{s, \mathcal{D}} \int \min(\pi(a|s),\pi_\beta(a|s))da\\
&=\mathbb{E}_{s, \mathcal{D}} \bigg(\int \pi(a|s) da - \int \big(\pi(a|s)-\min(\pi(a|s),\pi_\beta(a|s))\big)da\bigg)\\
&=\mathbb{E}_{s, \mathcal{D}} \Big(1-D_{\textnormal{TV}}(\pi(s),\pi_\beta(s))\Big)\\
&=1-\epsilon_\pi
\end{align*}
Let $p=\mathbb{E}_{s, \mathcal{D}}\textnormal{Pr}(\pi(s)=\pi_\beta(s))$, we have
\begin{align*}
&p_t\geq p_{t-1}p\geq p^t\\
\end{align*}
Then 
\begin{equation}\label{eq_ds}
\begin{aligned}
    \mathbb{E}_{s, \mathcal{D}}\left|d^\beta_{t,w}(s)-d_{t,w}(s)\right|&=\mathbb{E}_{s, \mathcal{D}}(1-p_{t-1})\left|d'_{t,w}(s)-d_{t,w}(s)\right|\\
    &\leq \mathbb{E}_{s, \mathcal{D}}(1-p_{t-1})\leq 1-p^{t}
\end{aligned}
\end{equation}
We calculate $\mathbb{E}_{\mathcal{D}}\left|J_{\pi_\beta}(s_w, a_w, H)-J_{\pi}(s_w, a_w, H)\right|$ as follows.
\begin{align*}
    &\quad \ \mathbb{E}_{\mathcal{D}}\left|J_{\pi_\beta}(s_w, a_w, H)-J_{\pi}(s_w, a_w, H)\right|\\
    &= \mathbb{E}_{\mathcal{D}}\left|\mathbb E_{s_t\sim d^\beta_t, a_t\sim \pi_\beta s'_t\sim d_t,a'_t\sim \pi}\sum_{t=0}^H\gamma^t \left(r(s_t, a_t)-r(s'_t, a'_t)\right)\right |\\
    &= \left|\sum_{t=0}^H \gamma^t \int (d^\beta_{t,w}(s)\pi_\beta(a|s)-d_{t,w}(s)\pi(a|s))r(s,a) dsda\right|\\
    &\leq \sum_{t=0}^H \gamma^t \int \left|d^\beta_{t,w}(s)\pi_\beta(a|s)-d_{t,w}(s)\pi(a|s)\right|r(s,a)dsda\\
\end{align*}

Using triangle inequality, 
\begin{align*}
    &\quad \ \sum_{t=0}^H \gamma^t \int \left|d^\beta_{t,w}(s)\pi_\beta(a|s)-d_{t,w}(s)\pi(a|s)\right|r(s,a)dsda\\
    &\leq \sum_{t=0}^H \gamma^t \int \bigg( \left|d^\beta_{t,w}(s)-d_{t,w}(s)\right| \pi(a|s) +\left|\pi(a|s)-\pi_\beta(a|s)\right|d^\beta_{t,s_w}(s)\bigg)r(s,a)dsda\\
    &\leq \sum_{t=0}^H \gamma^t R_{\max}\int \bigg( \left|d^\beta_{t,s_w}(s)-d_{t,s_w}(s)\right| +\left|\pi(a|s)-\pi_\beta(a|s)\right|\bigg)dsda\\
\end{align*}
Using Eq.~(\ref{eq_ds})
\begin{align*}
    &\quad \ \mathbb{E}_{\mathcal{D}}\left|J_{\pi_\beta}(s_w, a_w, H)-J_{\pi}(s_w, a_w, H)\right|\\
    &\leq R_{\max} \sum_{t=0}^H\gamma^t \Big((1-p^{t})+\epsilon_\pi\Big)\\
    &\leq R_{\max}\sum_{t=0}^H\gamma^t \Big(1-(1-\epsilon_\pi)^{t}+\epsilon_\pi\Big)\\
    &\leq R_{\max} \sum_{t=0}^H \gamma^t t\epsilon_\pi\\
    &=\left(\frac{1-\gamma^{H+1}}{(1-\gamma)^2}-\frac{(H+1)\gamma^{H+1}}{1-\gamma}\right) R_{\max}\epsilon_\pi
\end{align*}
And then we get 
$$\mathbb E_{\mathcal{D}}\left|J_{\pi}(s_w,a_w, H)-J_{\pi_\beta}(s_w, a_w, H)\right|\leq \left(\frac{1-\gamma^{H+1}}{(1-\gamma)^2}-\frac{(H+1)\gamma^{H+1}}{1-\gamma}\right) R_{\max}\epsilon_\pi$$
\end{proof}

\begin{lemma}\label{lemma_var}
Let $J_{\mathcal{D}}(s_w, a_w,  H)=\mathbb E_{s_t,a_t\sim \mathcal{D}}[\sum_{t=0}^H\gamma^tr(s_t, a_t)|s_0=s_w]$, we have
$$\mathbb E_{\mathcal{D}}\left|J_{\pi_\beta}(s_w,a_w, H)-J_{\mathcal{D}}(s_w, a_w, H)\right|\leq \sqrt{\mathbb V(J_{\mathcal{D}}(s_w, a_w, H))}$$
\end{lemma}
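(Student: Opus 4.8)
The plan is to observe that $J_{\mathcal{D}}(s_w,a_w,H)$ is simply an empirical-mean estimator whose expectation over the draw of the dataset equals $J_{\pi_\beta}(s_w,a_w,H)$, so that the lemma reduces to the elementary inequality $\mathbb{E}|X-\mathbb{E}X|\le\sqrt{\mathbb{V}(X)}$.

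First I would unpack the definition of $J_{\mathcal{D}}(s_w,a_w,H)$. Among the $n$ trajectories in $\mathcal{D}$ that begin at $(s_w,a_w)$, each is an independent rollout of the behavior policy $\pi_\beta$ in the true MDP $M$; hence, by the Markov property of $\pi_\beta$ and of the dynamics, conditioning on the initial pair $(s_w,a_w)$ does not bias the downstream transitions, and the per-trajectory truncated return $\sum_{t=0}^{H}\gamma^t r(s_t,a_t)$ has expectation exactly $J_{\pi_\beta}(s_w,a_w,H)$ by the very definition of the latter. Averaging over the $n$ trajectories and using linearity of expectation gives $\mathbb{E}_{\mathcal{D}}\!\left[J_{\mathcal{D}}(s_w,a_w,H)\right]=J_{\pi_\beta}(s_w,a_w,H)$.

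Next, set $X:=J_{\mathcal{D}}(s_w,a_w,H)$, so that the step above reads $\mathbb{E}_{\mathcal{D}}[X]=J_{\pi_\beta}(s_w,a_w,H)$. Then
$$\mathbb{E}_{\mathcal{D}}\left|J_{\pi_\beta}(s_w,a_w,H)-J_{\mathcal{D}}(s_w,a_w,H)\right|=\mathbb{E}_{\mathcal{D}}\left|X-\mathbb{E}_{\mathcal{D}}X\right|\le\sqrt{\mathbb{E}_{\mathcal{D}}\left(X-\mathbb{E}_{\mathcal{D}}X\right)^2}=\sqrt{\mathbb{V}(X)}\,,$$
where the inequality is Jensen's inequality applied to the concave map $t\mapsto\sqrt{t}$ (equivalently, Cauchy--Schwarz against the constant function $1$). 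This is precisely the claimed bound.

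The only piece with any content is the unbiasedness claim in the first step; everything else is a one-line inequality. The mild subtlety I would flag explicitly is that ``the trajectories in $\mathcal{D}$ starting from $(s_w,a_w)$'' must be a collection of i.i.d.\ $\pi_\beta$-rollouts conditioned on that initial state--action pair, which is exactly what the Markov structure guarantees; once this is granted the proof is immediate.
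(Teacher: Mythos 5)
Your proof is correct and follows essentially the same route as the paper's: both establish unbiasedness, $\mathbb{E}_{\mathcal{D}}[J_{\mathcal{D}}(s_w,a_w,H)]=J_{\pi_\beta}(s_w,a_w,H)$, and then apply Jensen's inequality so that the mean absolute deviation is bounded by the square root of the MSE, which equals the variance for an unbiased estimator. Your added justification of the unbiasedness step (i.i.d.\ $\pi_\beta$-rollouts conditioned on $(s_w,a_w)$) is a welcome elaboration of what the paper merely asserts, but it does not change the argument.
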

\begin{proof}
Note that $\mathbb E_{\mathcal{D}}J_{\mathcal{D}}(s_w,a_w,  H)=J_{\pi_\beta}(s_w, a_w, H)$. Using the bias-variance decomposition of MSE, we have
\begin{align*}
    &\quad \ \mathbb E_{\mathcal{D}}\left|J_{\pi_\beta}(s_w,a_w, H)-J_{\mathcal{D}}(s_w, a_w, H)\right|^2\\
    &=(J_{\pi_\beta}(s_w,a_w,  H)-\mathbb E_{\mathcal{D}}J_{\mathcal{D}}(s_w, a_w, H))^2+\mathbb V(J_{\mathcal{D}}(s_w, a_w, H))\\
    &=\mathbb V(J_{\mathcal{D}}(s_w, a_w, H))
\end{align*}
Using Jensen's inequality, 
$$\left(\mathbb E_{\mathcal{D}}\left|J_{\pi_\beta}(s_w,a_w,  H)-J_{\mathcal{D}}(s_w, a_w, H)\right|\right)^2\leq  \mathbb E_{\mathcal{D}}\left|J_{\pi_\beta}(s_w, a_w, )-J_{\mathcal{D}}(s_w, a_w, H)\right|^2$$
Then we conclude the proof.
\end{proof}

\begin{proof}[Proof of Thm.~\ref{thm_ValueEstimationError}]
Note that
$$Q^\pi(s,a)=J_\pi(s, a, H)+\gamma^{H+1}\mathbb E_{s'\sim P^\pi_{H+1} (s, a), a'\sim \pi}Q^\pi(s', a')\,,$$
$$\widehat{Q}^\pi(s,a)=J_{\mathcal{D}}(s,a, H)+\gamma^{H+1}\mathbb E_{s'\sim P^\pi_{H+1} (s,a), a'\sim \pi}Q^\pi_{\widehat M}(s', a')\,,$$
where $P^\pi_{H+1}(s,a)$ denotes the transition starting from $(s,a)$ and following policy $\pi$ for $H$ steps.

\begin{align*}
    &\quad \ \mathbb E_{\mathcal{D}}\left|Q^\pi(s,a)-\widehat{Q}^\pi(s,a)\right|\\
    &\overset{(a)}{\leq} \mathbb E_{\mathcal{D}}\left|J_\pi(s,a, H)-J_{\mathcal{D}}(s, a, H)\right|+\mathbb E_{\mathcal{D}}\gamma^{H+1}\left|\mathbb E_{s'\sim P^\pi_{H+1} (s,a)}Q^\pi(s')-\mathbb E_{s'\sim P^\pi_{H+1} (s,a)}Q^\pi_M(s')\right|\\
    &\overset{(b)}{\leq} \sqrt{\mathbb V(J_{\mathcal{D}}(s,a, H))}+\left(\frac{1-\gamma^{H+1}}{(1-\gamma)^2}-\frac{(H+1)\gamma^{H+1}}{1-\gamma}\right) R_{\max}\epsilon_\pi\\
    &\qquad \ +\gamma^{H+1}\bigg(\frac{2\gamma R_{\max}(2\epsilon_\pi+\epsilon_m)}{(1-\gamma)^2}+\frac{4R_{\max}\epsilon_\pi}{1-\gamma}\bigg)\,,
\end{align*}
where (a) uses triangle inequality and (b) uses Lem.~\ref{lemma_bias} and Lem.~\ref{lemma_var}.

\end{proof}

\subsection{Proof of Thm.~\ref{thm_var} and Thm.~\ref{thm_is_var}}

Thm.~\ref{thm_is_var} can be reduced to Thm.~\ref{thm_var} if $\rho_{0:t}=1$, so we only need to prove Thm.~\ref{thm_is_var}.

Let $\hat{J}(s_w,a_w, H)=\sum_{t=0}^H\gamma^t \rho_{0:t}r(s_t,a_t)$ given $s_0=s_w,a_0=a_w$, then
$$\hat{J}(s_t, a_t, H-t)=\left (r(s_t, a_t)+\gamma \rho_{t+1}\hat{J}(s_{t+1},a_{t+1}, H-t-1)\right)$$ 
We use the shorthand: $\mathbb{E}_t[\cdot]:=\mathbb E[\cdot|s_0,a_0,\dots,s_{t-1},a_{t-1}]$ for conditional expectations, and $\mathbb{V}_t[\cdot]$ for variances similarly.

\begin{lemma}\label{lemma_recursive}
$$
\begin{aligned}
\mathbb{V}_t(\hat{J}(s_t,a_t,H-t))&=\mathbb{E}_t\left[\rho_t^2\mathbb{V}_{t+1}\left[r(s_t, a_t) \right]\right]+\mathbb{E}_t\left[\rho_t^2\gamma^2\mathbb{V}_{t+1}[\hat{J}(s_{t+1},a_{t+1},H-t-1)\right]\\
&\quad \ +\mathbb{V}_t(J_{\pi}(s_t, \pi(s_t),H-t))
\end{aligned}
$$
$$\mathbb{V}_{t+1}\left[\hat{J}(s_{t},a_t, H)\right]=\mathbb{E}_{t+1}\left[\rho_t^2\mathbb{V}_{t+1}\left[r(s_t, a_t)\right]\right]+\mathbb{E}_{t+1}\left[\rho_t^2\gamma^2\mathbb{V}_{t+1}[\hat{J}_{\mathcal{D}}(s_{t+1},a_{t+1},H-t-1)\right]$$
\end{lemma}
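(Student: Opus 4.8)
The plan is to derive both identities from the \emph{law of total variance}, $\mathbb{V}(X)=\mathbb{E}[\mathbb{V}(X\mid Z)]+\mathbb{V}(\mathbb{E}[X\mid Z])$, applied one transition at a time, together with the unbiasedness of per-decision importance sampling; this is the variance recursion familiar from the doubly-robust off-policy-evaluation literature, specialized to the truncated, model-bootstrapped return. The two identities are the same computation performed at two adjacent conditioning depths: in the first, the variance is taken given the history only through step $t-1$, so the step-$t$ transition is still random and contributes an ``explained'' component; in the second, one has already conditioned through step $t$, so that component is gone and only the immediate reward noise and the discounted, reweighted tail variance remain.

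For the first identity I would first record the key sub-fact that averaging the importance-weighted return-to-go over the step-$t$ randomness recovers the on-policy return,
\[
\mathbb{E}_{t+1}\!\left[\hat{J}(s_t,a_t,H-t)\right]=J_\pi(s_t,\pi(s_t),H-t),
\]
which follows by telescoping the recursion $\hat{J}(s_t,a_t,H-t)=r(s_t,a_t)+\gamma\rho_{t+1}\hat{J}(s_{t+1},a_{t+1},H-t-1)$: push the expectation inward one step at a time and at each step use $\mathbb{E}_{a\sim\pi_\beta}[\rho\,g(a)]=\mathbb{E}_{a\sim\pi}[g(a)]$, closing with an induction on the horizon $H-t$. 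Then apply the law of total variance to $\mathbb{V}_t(\hat{J}(s_t,a_t,H-t))$ with $Z$ the information revealed at step $t$: the ``explained'' summand $\mathbb{V}_t(\mathbb{E}_{t+1}[\hat{J}])$ becomes $\mathbb{V}_t(J_\pi(s_t,\pi(s_t),H-t))$ by the sub-fact, and the ``unexplained'' summand $\mathbb{E}_t[\mathbb{V}_{t+1}(\hat{J})]$ is expanded by substituting the recursion once more inside $\mathbb{V}_{t+1}$ and using that, conditionally on the history, the step-$t$ reward noise is independent of all subsequent transitions, so the variance of $r(s_t,a_t)+\gamma\rho_{t+1}\hat{J}(s_{t+1},a_{t+1},H-t-1)$ splits into $\mathbb{V}_{t+1}[r(s_t,a_t)]$ plus $\gamma^2$ times the variance of the reweighted one-step-ahead return; after factoring out the accumulated importance weight (the $\rho^2$ that is carried along through the unrolling) this is exactly the stated right-hand side. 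The second identity is obtained by running the same two applications of the law of total variance one level deeper: now $(s_t,a_t)$ are held fixed, the conditional mean of the tail is deterministic so there is no explained-variance term, and one is left with the reward-noise variance plus the discounted, reweighted variance of the tail return $\hat{J}_{\mathcal D}(s_{t+1},a_{t+1},H-t-1)$.

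I expect the main obstacle to be bookkeeping rather than any substantive inequality: tracking precisely which variables each $\mathbb{E}_t$ and $\mathbb{V}_t$ conditions on (the state $s_t$ versus the full pair $(s_t,a_t)$), where each importance ratio $\rho$ attaches and how the product of ratios accumulates as the recursion unwinds, and handling the reward term consistently — it contributes a genuine variance only when rewards are stochastic and otherwise vanishes, which is what keeps the deterministic-reward evaluation in Remark~2 consistent. Pinning down the measurability and independence claims (reward noise versus transition noise, and the fact that $J_\pi(s_t,a_t,H-t)$ is a deterministic function of its arguments) is the part that must be stated carefully; the rest is a mechanical double use of the law of total variance per step, and summing the resulting recursion over $t$ then yields Thm.~\ref{thm_var} and Thm.~\ref{thm_is_var}.
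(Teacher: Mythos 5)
Your proposal is correct and follows essentially the same route as the paper: the paper's manipulation of adding and subtracting $J_\pi^2(s_t,a_t,H-t)$ and then expanding the centered square with the zero-mean, conditionally independent reward-noise and tail-deviation terms is exactly the law of total variance carried out by hand, with the explained component identified as $\mathbb{V}_t(J_\pi)$ via the unbiasedness of the per-decision importance-weighted return, and the second identity obtained one conditioning level deeper where that component vanishes. The only difference is presentational — you name the law of total variance and isolate the unbiasedness sub-fact as an explicit induction, whereas the paper uses it implicitly in step (a).
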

\begin{proof}
\begin{align*}
    &\quad \ \mathbb{V}_t(\hat{J}(s_t,a_t,H-t))\\
    &=\mathbb{E}_t\left[\hat{J}^2(s_t,a_t,H-t)\right]-\big(\mathbb{E}_t J_{\pi}(s_t, \pi_\beta(s_t),H-t)\big)^2\\
    &=\mathbb{E}_t\left[\left(r(s_t,a_t)+\gamma\rho_{t+1}\hat{J}(s_{t+1}, a_{t+1},H-t-1)\right)^2-J^2_\pi(s_t,a_t,H-t)\right]+\mathbb{V}_t(J_{\pi}(s_t, a_t,H-t))\\
    &=\mathbb{E}_t\left[\left(J_{\pi}(s_t,a_t, H-t)+r(s_t,a_t)+\gamma\rho_{t+1}\hat{J}(s_{t+1},a_{t+1}, H-t-1)-J_{\pi}(s_t, a_t, H-t)\right)^2\right.\\
    &\left. \qquad \qquad -J^2_\pi(s_t,a_t,H-t)\right]+\mathbb{V}_t(J_{\pi}(s_t,a_t, H-t))\\
    &=\mathbb{E}_t\left[\left(J_{\pi}(s_t,a_t, H-t)+\Big(r(s_t,a_t)-R(s_t,a_t)+\gamma\rho_{t+1}\Big(\hat{J}(s_{t+1}, a_{t+1}, H-t-1) \right.\right.\\
    &\left.\left.  \qquad \qquad -\mathbb{E}_{t+1}[J_{\pi}(s_{t+1},a_{t+1}, H-t-1)]\Big)\Big)\right)^2-J^2_\pi(s_t,a_t, H-t)\right]\\
    &\qquad +\mathbb{V}_t(J_{\pi}(s_t, a_t, H-t))\\
    &\overset{(a)}{=}\mathbb{E}_t\left[\mathbb E\left[J^2_{\pi}(s_t, a_t, H-t)-J^2_\pi(s_t,\pi(s_t), H-t)|s_t\right]\right]+\mathbb{E}_t\left[\mathbb E\left[(r(s_t,a_t)-R(s_t,a_t))^2 |s_t,a_t\right]\right]\\
    & \qquad +\mathbb{E}_t\left[\mathbb E\left[\gamma^2\rho_{t+1}^2(\hat{J}(s_{t+1},a_{t+1}, H-t-1)-\mathbb{E}_{t+1}J_{\pi}(s_{t+1},a_{t+1}, H-t-1))^2|s_t,a_t\right]\right]\\
    &\qquad +\mathbb{V}_t(J_{\pi}(s_t,a_t, H-t))\\
    &=\mathbb{E}_t\left[\mathbb{V}_{t+1}\left[r(s_t, a_t) \right]\right]+\mathbb{E}_t\left[\rho_{t+1}^2\gamma^2\mathbb{V}_{t+1}[\hat{J}(s_{t+1},a_{t+1},H-t-1)\right]+\mathbb{V}_t(J_{\pi}(s_t, a_t,H-t))
\end{align*}
where $R(s,a)=\mathbb E[r(s,a)]$ and (a) uses the fact that conditioned on $s_t$ and $a_t$, $r(s_t,a_t)-R(s_t,a_t)$ and $\hat{J}(s_{t+1},a_{t+1}, H-t-1)-\mathbb{E}_{t+1}J_{\pi}(s_{t+1}, a_{t+1},H-t-1)$ are independent and have zero means, and all other terms are constants.

Similarly, 

\begin{align*}
    &\quad \ \mathbb{V}_{t+1}\left[\hat{J}(s_{t},a_t, H)\right]\\
    &=\mathbb{E}_{t+1}\left[\hat{J}^2(s_t,a_t, H-t)\right]-\left(\mathbb{E}_{t+1}\left[J_{\pi}(s_t,a_t, H-t)\right]\right)^2\\
    &=\mathbb{E}_{t+1}\left[\mathbb{V}_{t+1}\left[r(s_t, a_t)\right]\right]+\mathbb{E}_{t+1}\left[\rho_{t+1}^2\gamma^2\mathbb{V}_{t+1}[\hat{J}_{\mathcal{D}}(s_{t+1},a_{t+1},H-t-1)\right]\\
    &\quad \ +\mathbb{V}_{t+1}[J_{\pi}(s_t, a_t,H-t)]\\
    &=\mathbb{E}_{t+1}\left[\mathbb{V}_{t+1}\left[r(s_t, a_t)\right]\right]+\mathbb{E}_{t+1}\left[\rho_{t+1}^2\gamma^2\mathbb{V}_{t+1}[\hat{J}_{\mathcal{D}}(s_{t+1},a_{t+1},H-t-1)\right]\,.\\
\end{align*}
The last equality follows the fact that $\mathbb{V}_{t+1}\left[J_\pi(s_t,a_t,H-t)\right]=0$.
\end{proof}

\begin{proof}[Proof of Thm.~\ref{thm_is_var}]
\begin{align*}
    &\quad \ \mathbb V(\hat{J}(s_w,a_w, H))\\
    &=\mathbb{V}[\hat{J}(s_0,a_0,H)|s_0=s_w,a_0=a_w]\\
    &=\mathbb{V}_1[\hat{J}(s_0,a_0,H)]\\
    &\overset{(a)}{=}\mathbb{E}_1\left[\mathbb{V}_1\left[r(s_0, a_0)\right]\right]+\mathbb{E}_1\left[\rho_1^2\gamma^2\mathbb{V}_{1}[\hat{J}(s_{1},a_1,H-1)]\right]\\
    &\overset{(b)}{=}\sum_{t=0}^H\mathbb{E}_1\left[\rho^2_{1:t}\gamma^{2t}\mathbb{V}_{t+1}[r(s_t, a_t)]\right]+\sum_{t=1}^H\mathbb E_1\left[\rho^2_{1:t}\gamma^{2t}\mathbb{V}_{t+1}(J_\pi(s_{t+1},a_{t+1},H-t))\right]\\
    &=\sum_{t=0}^H\mathbb{E}\left[\rho^2_{1:t}\gamma^{2t}\mathbb{V}_{t+1}[r(s_t, a_t)]|s_0=s_w,a_0=a_w\right]\\
    &\quad \ +\sum_{t=1}^H\mathbb E\left[\rho^2_{1:t}\gamma^{2t}\mathbb{V}_{t+1}(J_\pi(s_{t+1},a_{t+1},H-t))|s_0=s_w,a_0=a_w\right]\,,
\end{align*}
where we define $\rho_{1:0}=1$, (a) and (b) use Lem.~\ref{lemma_recursive}.

According to the definition of $J_{\mathcal{D}}(s_w,a_w,H)$, $\mathbb{V}(J_{\mathcal{D}}(s_w,a_w,H))=\mathbb V(\hat{J}(s_w,a_w, H))/n$, where $n$ is the number of trajectories in dataset $\mathcal{D}$ starting from $(s_w,a_w)$.
\end{proof}

\subsection{Proof of Prop.~\ref{prop_is}}

We first introduce the concept of the chi-square divergence. 
\begin{definition}
For r.v. P and Q, the ch-square divergence of P and Q, denoted as $\chi^2(P,Q)$, is defined as
$$\chi^2(P,Q)=\int\frac{dP^2}{dQ}-1$$
\end{definition}

The following lemma reveal the relationship between chi-square divergence and KL divergence.
\begin{lemma}\label{lemma_divergence}
Let $c=\left\|\frac{dP}{dQ}\right\|_\infty\leq \infty$, then
$$cD_{\textnormal{KL}}(P,Q)\geq \chi^2(P,Q)$$
\end{lemma}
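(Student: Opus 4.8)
The plan is to rewrite both divergences in terms of the likelihood ratio $r:=\frac{dP}{dQ}$, which by hypothesis is confined to $0\le r\le c$ pointwise, and to reduce the claim to a single integral inequality against $Q$. First I would record $D_{\textnormal{KL}}(P,Q)=\mathbb{E}_Q[r\log r]$ and, from the definition $\chi^2(P,Q)=\int\frac{dP^2}{dQ}-1$, the expression $\chi^2(P,Q)=\mathbb{E}_Q[r^2]-1$. Since $Q$ is a probability measure and $\int dP=\int r\,dQ=1$, the normalization identity $\mathbb{E}_Q[r]=1$ holds. This lets me center the chi-square term as $\chi^2(P,Q)=\mathbb{E}_Q[r^2-r]=\mathbb{E}_Q[r(r-1)]$, and, more importantly, it means I may add to the integrand any multiple of $(r-1)$ without changing the integral. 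The target then becomes the single estimate $\mathbb{E}_Q[\,c\,r\log r-r^2+r\,]\ge 0$.

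Next I would attempt to discharge this through a pointwise inequality valid on the admissible range $r\in[0,c]$. The natural elementary tool is the concavity bound $\log r\ge\frac{r-1}{c}$: setting $\ell(r)=\log r-\frac{r-1}{c}$ one has $\ell(1)=0$ and $\ell'(r)=\frac1r-\frac1c\ge 0$ on $(0,c)$, so $\ell$ is increasing and therefore $\ell(r)\ge 0$ for $r\in[1,c]$ — this is exactly where boundedness $r\le c$ is used. Multiplying by $r\ge 0$ gives $c\,r\log r\ge r(r-1)=r^2-r$ on $[1,c]$, so the integrand is nonnegative on the super-unit region and both sides vanish at $r=1$. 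If this bound extended to all of $[0,c]$, integrating against $Q$ would immediately yield $c\,D_{\textnormal{KL}}(P,Q)\ge\chi^2(P,Q)$.

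The hard part is precisely that $\log r\ge\frac{r-1}{c}$ reverses on the sub-unit region $0\le r<1$, where the tangent-type line lies above $\log r$; there the integrand $c\,r\log r-r^2+r$ becomes negative, and a termwise argument does not close. This is the step where the normalization $\mathbb{E}_Q[r]=1$ must be exploited in an essential rather than cosmetic way: mass placed near $r=c$ forces compensating mass at small $r$, so an averaged inequality can survive even when the integrand is locally negative. Concretely, I would search for a correction slope $\mu$ and try to prove the strengthened pointwise bound $c\,r\log r-r^2+r+\mu(r-1)\ge 0$ on the whole interval $[0,c]$, chosen so that the added term vanishes in expectation via $\mathbb{E}_Q[r-1]=0$; the admissibility of $r\in[0,c]$ would be used to restrict where the bound must be checked. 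Verifying such a corrected inequality across the full range — in particular controlling the delicate regime $r\to 0$, where $c\,r\log r$ and $r^2-r$ both tend to $0$ and their difference must be weighed against the compensating large-$r$ mass — is the main technical obstacle, and the boundedness constraint $r\le c$ is indispensable, since without it the chi-square divergence cannot be dominated by any fixed multiple of the KL divergence.
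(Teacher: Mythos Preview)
Your setup is correct: writing $r=dP/dQ$, one has $D_{\textnormal{KL}}(P,Q)=\mathbb{E}_Q[r\log r]$, $\chi^2(P,Q)=\mathbb{E}_Q[r^2]-1=\mathbb{E}_Q[r(r-1)]$, and the task reduces to showing $\mathbb{E}_Q[c\,r\log r - r(r-1)]\ge 0$ under $r\in[0,c]$ and $\mathbb{E}_Q[r]=1$. You are also right that the pointwise bound $c\,r\log r\ge r(r-1)$ fails on $(0,1)$, so a termwise argument cannot close, and that any valid proof would have to exploit the mean constraint in an essential way.

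However, the linear correction you propose cannot succeed: no choice of $\mu$ makes $c\,r\log r-r^2+r+\mu(r-1)\ge 0$ on all of $[0,c]$. The underlying reason is that the lemma as stated is actually false. Take $c=2$ and a two-point distribution under $Q$: mass $2/3$ where $r=1/2$ and mass $1/3$ where $r=2$, so that $\mathbb{E}_Q[r]=1$ and $\|r\|_\infty=2$. Then $D_{\textnormal{KL}}(P,Q)=\tfrac13\log\tfrac12+\tfrac23\log 2=\tfrac13\log 2$, hence $c\,D_{\textnormal{KL}}=\tfrac23\log 2\approx 0.462$, while $\chi^2(P,Q)=\mathbb{E}_Q[r^2]-1=\tfrac23\cdot\tfrac14+\tfrac13\cdot 4-1=\tfrac12$. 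Thus $c\,D_{\textnormal{KL}}<\chi^2$, and no argument along your lines (or any other) can establish the claimed inequality.

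For comparison, the paper's one-line argument makes precisely the sign error you flagged. It uses $\log x\ge(x-1)/x$ to obtain $c\,D_{\textnormal{KL}}\ge c\int\frac{r-1}{r}\,dP$ and then asserts this dominates $\chi^2=\int(r-1)\,dP$, implicitly replacing $c$ by $r$ via $c\ge r$. But multiplying $c\ge r$ by the factor $\frac{r-1}{r}$ reverses direction on $(0,1)$, where that factor is negative. In fact the middle expression collapses, since $\int\frac{r-1}{r}\,dP=\int(r-1)\,dQ=0$; the chain would then read $c\,D_{\textnormal{KL}}\ge 0\ge \chi^2$, which fails whenever $P\ne Q$. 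Your instinct that the sub-unit region is the real obstruction is exactly right; the obstruction is fatal rather than merely technical.
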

\begin{proof}
Note that $\log x\geq \frac{x-1}{x}$,
$$cD_{\textnormal{KL}}(P,Q)\geq \left\|\frac{dP}{dQ}\right\|_\infty\int dP\left(\frac{dP/dQ-1}{dP/dQ}\right)\geq \chi^2(P,Q)$$
\end{proof}

\begin{proof}[Proof of Prop.~\ref{prop_is}]
Let P be $\pi(\cdot|s)$ and Q be $\pi_\beta(\cdot|s)$, Lem.~\ref{lemma_divergence} can be rewritten as
$$cD_{\textnormal{KL}}(\pi(\cdot|s), \pi_\beta(\cdot|s))\geq \chi^2(\pi(\cdot|s), \pi_\beta(\cdot|s))$$
Note that 
\begin{align*}
\chi^2(\pi(\cdot|s), \pi_\beta(\cdot|s))&=\int\left(\frac{\pi^2(a|s)}{\pi_\beta(a|s)}-1\right)da\\
&=\int\pi_\beta(a|s)\left(\frac{\pi^2(a|s)}{\pi^2_\beta(a|s)}\right)da-1\\
&=\mathbb{E}_{a\sim \pi_\beta}\left[\frac{\pi^2(a|s)}{\pi^2_\beta(a|s)}\right]-1\,,
\end{align*}
we can get the relationship between importance ratio and policy KL divergence as
$$cD_{\textnormal{KL}}(P,Q)\geq \mathbb{E}_{a\sim \pi_\beta}\left[\rho_t^2\right]-1\,.$$

The final result can be obtained by applying the expectation of state distribution to both sides of the inequality.
\end{proof}


\section{Experiment Details}
\label{sec_detail}

\subsection{Detailed Training Procedure of MOHVE}
\label{sec_pseudo_code}
According to the theory of Sec.~\ref{sec_theory}, we need to build two separate networks to represent $\widehat{Q}$ and $\widetilde{Q}$. In practice, simply using one neural network is sufficient for good performance. Concretely, let $Q_{\phi_k}$ be the Q function in the $k$-th iteration, we update Q function using Eq.~(\ref{eqn:Q_update}) and (\ref{eq_model_update}) iteratively. 
\begin{align}
\label{eqn:Q_update}
    &\phi_{k+1} = \phi_k - \frac{\eta}{2}\nabla_\phi \mathbb{E}_{s,a\sim \mathcal{D}_\text{env}}
    \left[\left(Q(s,a) - \text{TIS}^H(s,a)\right)^2\right]\,,\\
    \label{eq_model_update}
     &\phi_{k+2}=\phi_{k+1}-\frac{\eta}{2}\nabla_\phi \mathbb{E}_{s,a\sim \mathcal{D}_\text{model}}
    \left[\left(Q(s,a) - \mathcal{\hat{B}}^{\pi} Q^{k+1}(s,a)\right)^2\right]\,, 
\end{align}
where $\eta$ is the learning rate and $\text{TIS}^H(s_0,a_0) := \sum_{t=0}^{H} \gamma^{t} \rho_{0:t} r(s_t,a_t) + \gamma^{H+1}  Q^k(s_{H+1}, a_{H+1})$. 

We summarize the whole training process of MOHVE in Alg.~\ref{algo: MOHVE}. 

\begin{algorithm}[htb]
    \caption{MOHVE: Model-based Offline RL with Hybrid Value Estimation}
    \label{algo: MOHVE}
\begin{algorithmic}[1]
\REQUIRE offline data $\mathcal{D}_{\text{env}}$, critic network $Q_\phi$ with parameter $\phi$, policy network $\pi_\theta$ with parameter $\theta$, model rollout horizon $h$, real data ratio $\alpha$, KL regularization parameter $\delta$, $N_H$ epochs between $H$ update.
\STATE Train an ensemble of $N$ probabilistic dynamics $\{\hat{T}^i_\psi(s',r|s,a)=\mathcal{N}(\mu^i_\psi(s,a),\Sigma^i_\psi(s,a))\}_{i=1}^N$ on $\mathcal{D}_{\text{env}}$ with Eq.~(\ref{learn model}).
\STATE Learn behavior policy $\pi_\beta$ with Eq.~(\ref{bc}).
\STATE Initialize policy $\pi$ and empty replay buffer $\mathcal{D}_{\text{model}}\leftarrow \emptyset$.
\FOR{epoch $e = 1,2,...,$}
\STATE Sample $b$ states $s_1$ from $\mathcal{D}_{\text{env}}$ as the initial states of model rollouts.
\STATE Rollout $h$ step in dynamics $\{\hat{T}\}_{i=1}^N$ and add samples to $\mathcal{D}_{\text{model}}$. 
\STATE Update the Q-function $Q_{\phi}$ via Hybrid Value Estimation by solving Eq.~(\ref{eqn:Q_update}) and (\ref{eq_model_update}) iteratively.
\STATE Improve policy $\pi_\theta$ under state marginal of $\mathcal{D}_\text{mix}$ by solving Eq.~(\ref{eq_pi_update}).
\IF {$e$ mod $N_H=0$}
    \STATE Adjust step size $H$ with Eq.~(\ref{eq_adapt_H}).
\ENDIF
\ENDFOR
\end{algorithmic}
\end{algorithm}

\subsection{OPE Metrics}
\label{sec_metric}
We use the following metrics in the OPE experiments, which are the same as those used in~\cite{dope}. $V^{\pi}$ denotes the true value of the policy, and $\hat{V}^{\pi}$ denotes the estimated value of the policy.

\textbf{Absolute Error} Absolute error is the difference between the true value and the estimated value of a policy:

\begin{equation}
    \text{Absolute Error} = \left|V^{\pi} - \hat{V}^{\pi}\right|
\end{equation}

\textbf{Rank Correlation} Rank correlation is an assessment of the degree of correlation between the estimated values and the true values. The definition of rank correlation is:

\begin{equation}
    \text{Rank Correlation} = \frac{\text{Cov}(V^\pi_{1:N}, \hat{V}^\pi_{1:N})}{\sigma(V^\pi_{1:N})\sigma(\hat{V}^\pi_{1:N})}
\end{equation}

\textbf{Regret@k} Regret@k is the difference between the value of the best policy in the entire set, and the value of the best policy in the estimated top-k set. It can be written as:

\begin{equation}
    \text{Regret@k} = \max_{i\in1:N} V^\pi_i - \max_{j\in\text{topk}(1:N)}V^\pi_j
\end{equation}

\subsection{Additional Experimental Setup}\label{sec_setup}

\textbf{Offline Dataset. } 
We make experiments on the D4RL benchmark of Gym-MuJoCo tasks~\cite{gym, d4rl} in both OPE and offline RL experiments, including three environments (hopper, walker2d, and halfcheetah), four dataset types (random, medium, medium-replay, and medium-expert), and therefore twelve problem settings in total. The datasets are generated by different kinds of behavior policies as follows: 
\begin{itemize}
    \item \textbf{random}: roll out a randomly initialized policy for 1M steps.
    \item \textbf{medium}: train a policy online with SAC to approximately 1/3 the performance of the expert, then roll it out for 1M steps. 
    \item \textbf{mixed}: uses the replay buffer of a policy trained up to the performance of the medium agent.
    \item \textbf{medium-expert}: combine samples from a fully-trained policy with samples from a partially trained policy.
\end{itemize}

\textbf{Hyperparameters. }
Here we discuss the hyperparameters that we use for OPHVE and MOHVE. 
\begin{itemize}
    \item \textbf{Model rollout length $h$. }Similar to most of the model-based algorithms~\cite{mbpo, mopo, combo}, model rollout length $h$ is an important hyperparameter in our experiments. Note that different from the hybrid step length $H$, $h$ is the length of rollout in the learned model. We use $h=10$ in all tasks except for hopper-mixed and walker2d-mixed, unlike prior model-based methods~\cite{mbpo, mopo, combo}, whose maximum rollout length is $5$. We hypothesize the longer rollout length of HVE owing to its ability to reduce the value estimation error induced by large $h$.
    \item \textbf{Maximum of the hybrid step length $H_{\max}$. }To avoid too large step length $H$, which is computed through minimizing the approximated error bound, we clip $H$ with parameter $H_{\max}$. We find this parameter robust, and set it to $4$ in 10 out of 12 tasks.
    \item \textbf{KL regularization parameter $\delta$. }The KL divergence regularization (cf.~Eq.~(\ref{eq_pi_update})) which controls the policy divergence term $\epsilon_\pi$. 
    \item \textbf{Real data ratio $\alpha$. }This parameter determines the state distribution $\mathcal{D}_{\text{mix}}$ used in the policy improvement stage. We search for $\alpha \in \{0.2, 0.5, 0.8\}$.
\end{itemize}


The four mentioned hyper-parameters for each task are listed in Tab.~\ref{tab: hyper_parameter}.

\begin{table}[htb]
\centering
\caption{Hyperparameters used in experiments.}
\label{tab: hyper_parameter}
\begin{tabular}{@{}l|l|l|l|l|l@{}}
\toprule
\textbf{Data Type} & \textbf{Env.} & \textbf{$h$} & \textbf{$H_{\max}$} & \textbf{$\delta$} & \textbf{$\alpha$} \\ \midrule
random  & halfcheetah & 10 & 4 & 20  & 0.2 \\
random  & hopper      & 10 & 4 & 10  & 0.8 \\
random  & walker2d    & 10 & 4 & 10  & 0.5 \\
medium  & halfcheetah & 10 & 4 & 20  & 0.5 \\
medium  & hopper      & 10 & 4 & 10  & 0.2 \\
medium  & walker2d    & 10 & 4 & 5   & 0.8 \\
mixed   & halfcheetah & 10 & 1 & 20  & 0.2 \\
mixed   & hopper      & 3  & 4 & 50  & 0.5 \\
mixed   & walker2d    & 1  & 2 & 8   & 0.5 \\
med-exp & halfcheetah & 10 & 4 & 5   & 0.2 \\
med-exp & hopper      & 10 & 4 & 0.5 & 0.2 \\
med-exp & walker2d    & 10 & 4 & 5   & 0.5 \\ \bottomrule
\end{tabular}
\end{table}

\textbf{Model Training and Usage. } We train an ensemble of $7$ models and use $5$ models with a smaller validation error. Each model is parameterized as a 4-layer feedforward neural network with $256$ hidden units. And there are two heads outputting the mean and logarithmic standard deviation of the next state. We also apply standardization to the input state and reward of the model. During each epoch, we sample $50000$ states from the offline dataset as the initial state of each model rollout trajectory. 

For OPHVE, we perform $100$k gradient updates to learn the Q function.
For MOHVE, we set $N_H$, the frequency of updating $H$, to $200$ across all tasks, considering the stability of training. Other hyperparameters that correspond to the backbone RL algorithm SAC like learning rate and gradient steps follow the setup of COMBO~\cite{combo}.

\subsection{Computational Resources}

All experiments are conducted on a single NVIDIA GeForce RTX 2080 Ti. OPHVE takes about $1.5$ hours to estimate $10$ policies. And it takes about $10$ hours to train MOHVE for $1000$ epochs, including the time of online evaluation every epoch.

\subsection{License of datasets}
We acknowledge that D4RL datasets use the MIT license.

\section{Additional Results}
\label{additional results}

\subsection{OPE Experiments}
\label{additional ope results}
Detailed OPE results on each task are shown in Fig.~\ref{fig:add ope}.
\newpage

\begin{figure}[H]
    \centering
    \subfigure{
    \includegraphics[width=0.98\textwidth]{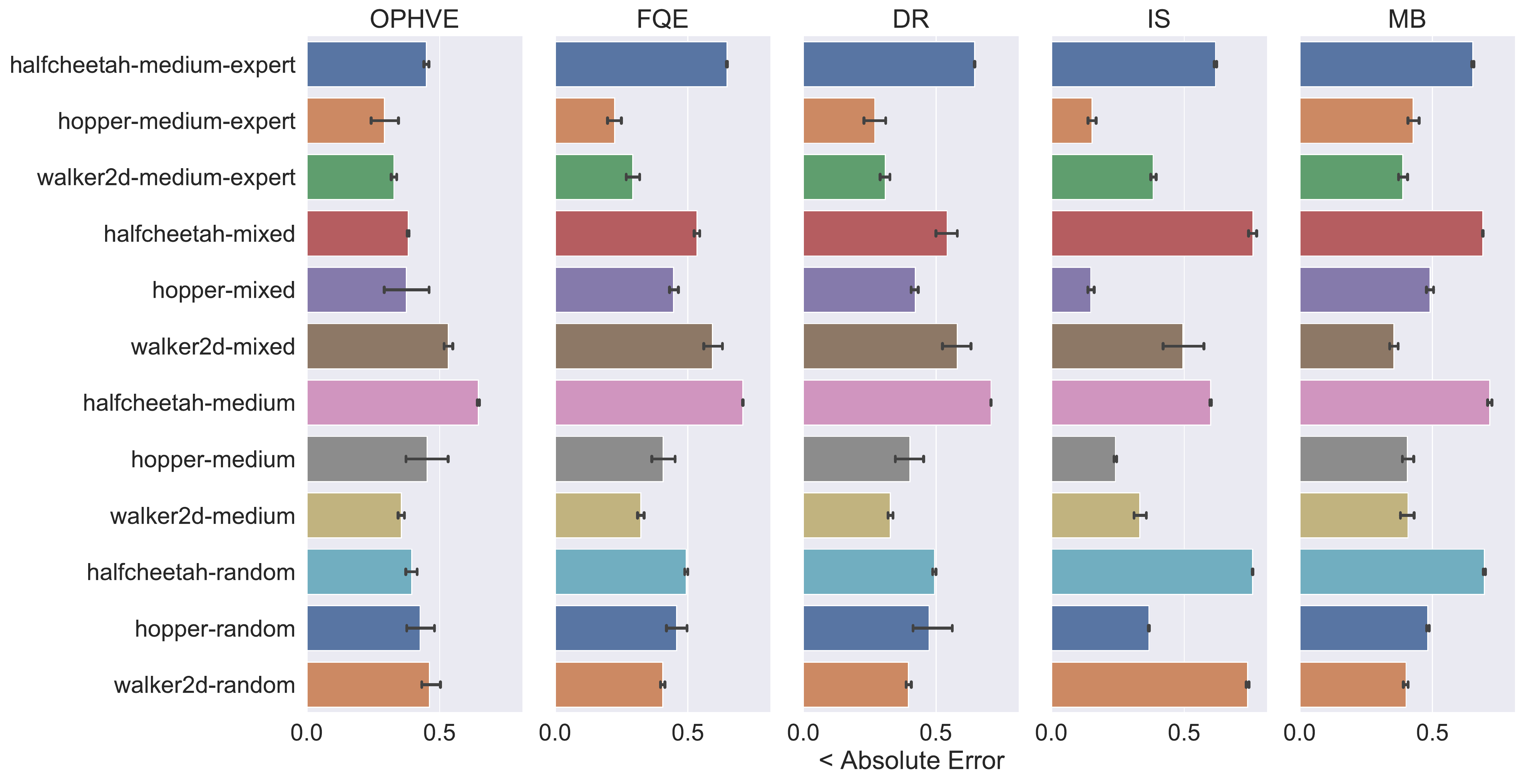}
    }

    \subfigure{
    \includegraphics[width=0.98\textwidth]{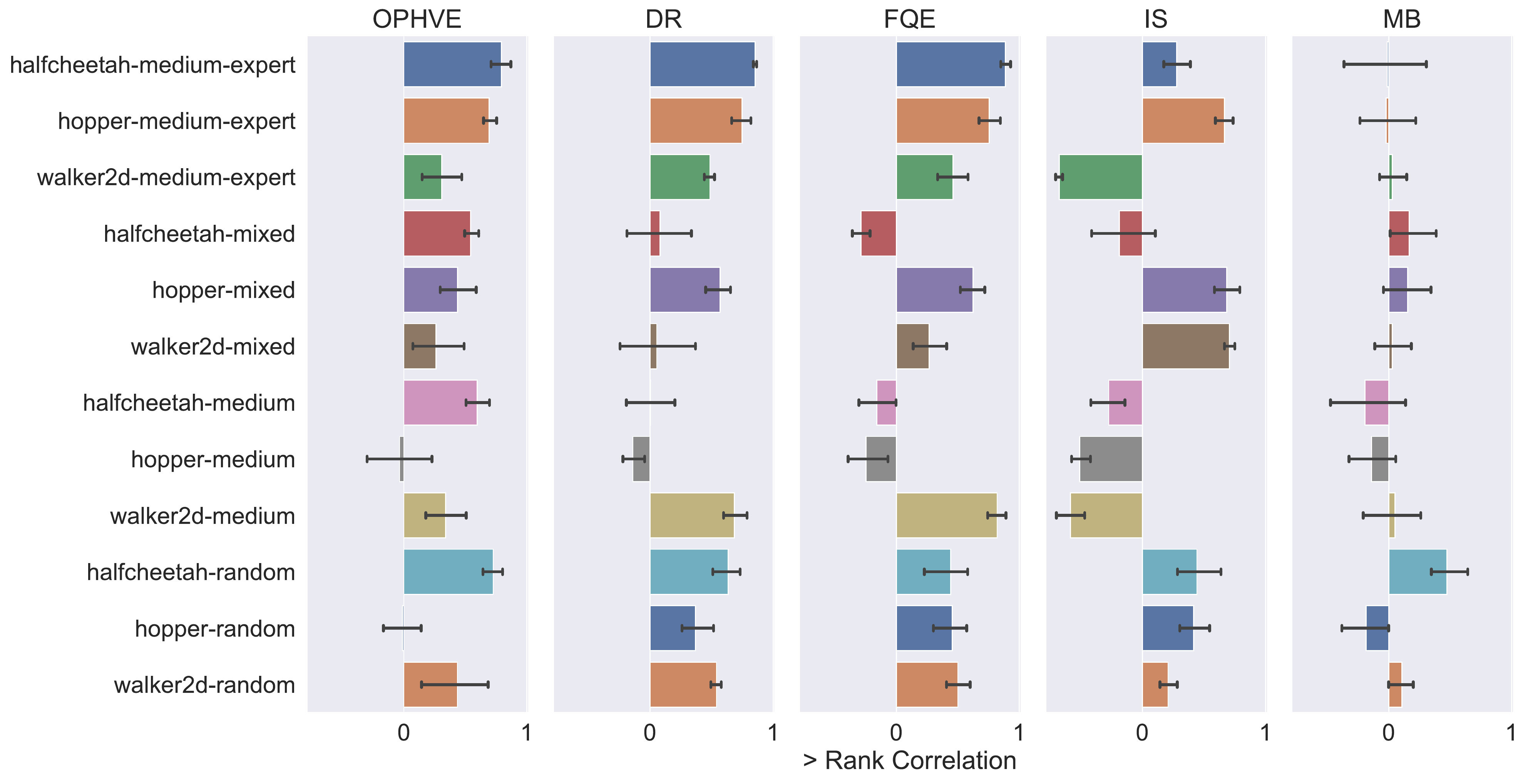}
    }
    
    \subfigure{
    \includegraphics[width=0.98\textwidth]{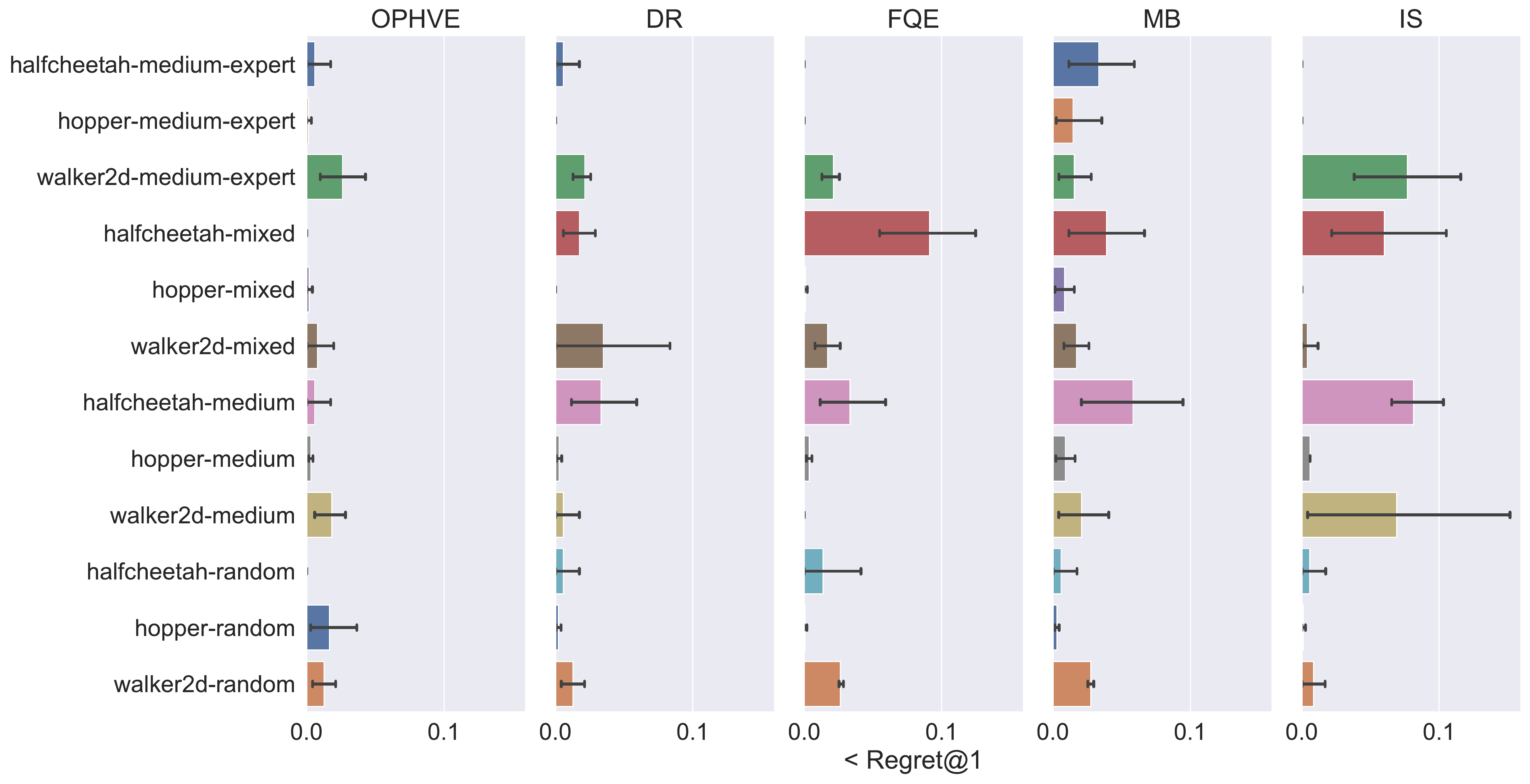}
    }
    \caption{\label{fig:add ope}OPE results on each task.}
\end{figure}

\newpage
\subsection{Offline RL Experiments}
\label{additional offline RL results}

We plot the learning curves of MOHVE on all tasks. We repeat every experiment five times in different random seeds. All results are shown in Fig.~\ref{fig:additional offline RL}. MOHVE converges to relatively high scores in most datasets, which demonstrates the effectiveness of our Hybrid Value Estimation method in offline RL. Furthermore, the training process is stable with a small variance. 

\begin{figure}[ht]
    \centering
    \subfigure{
        \includegraphics[width=0.32\textwidth]{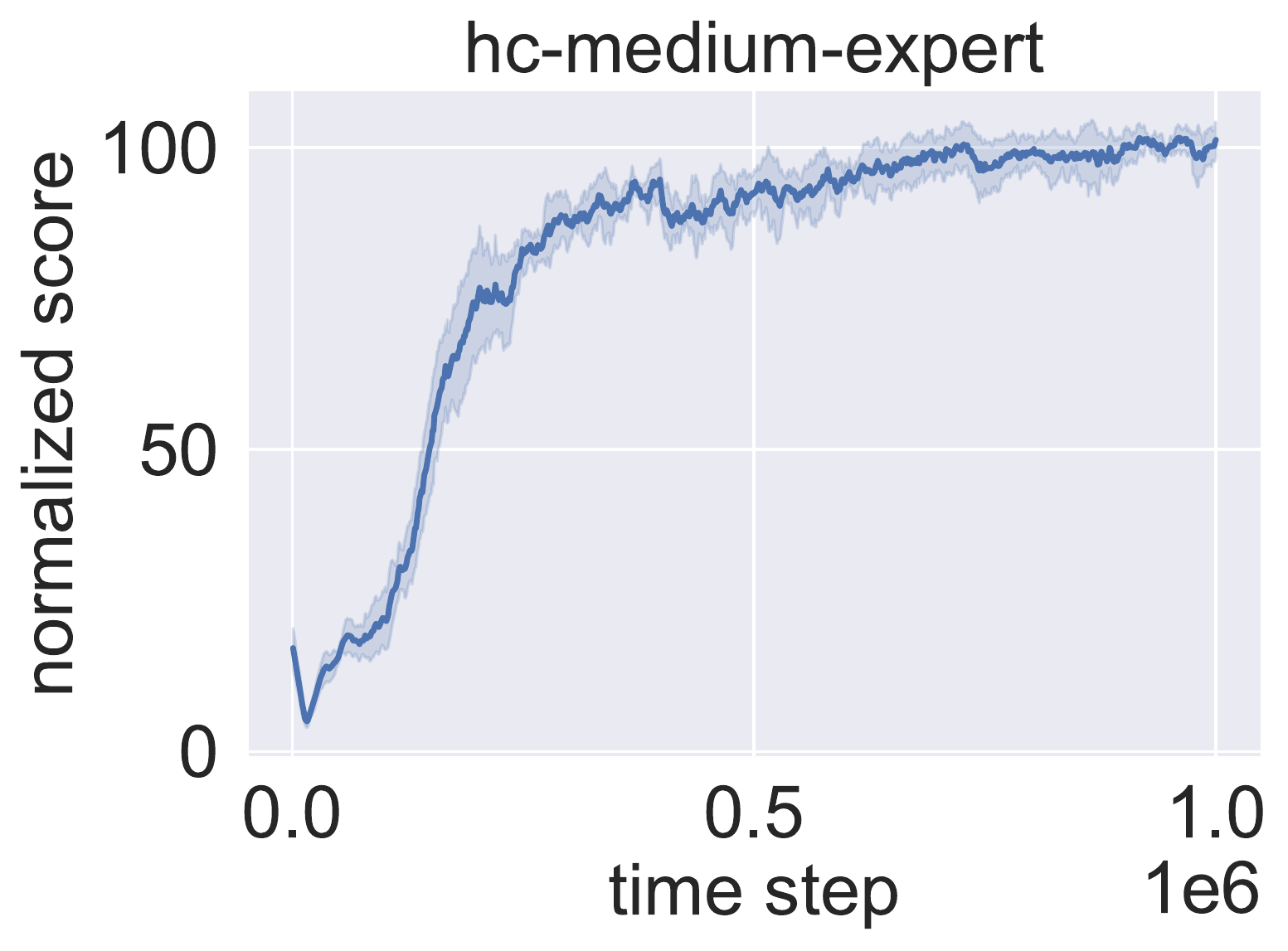}
    }
    \subfigure{
        \includegraphics[width=0.32\textwidth]{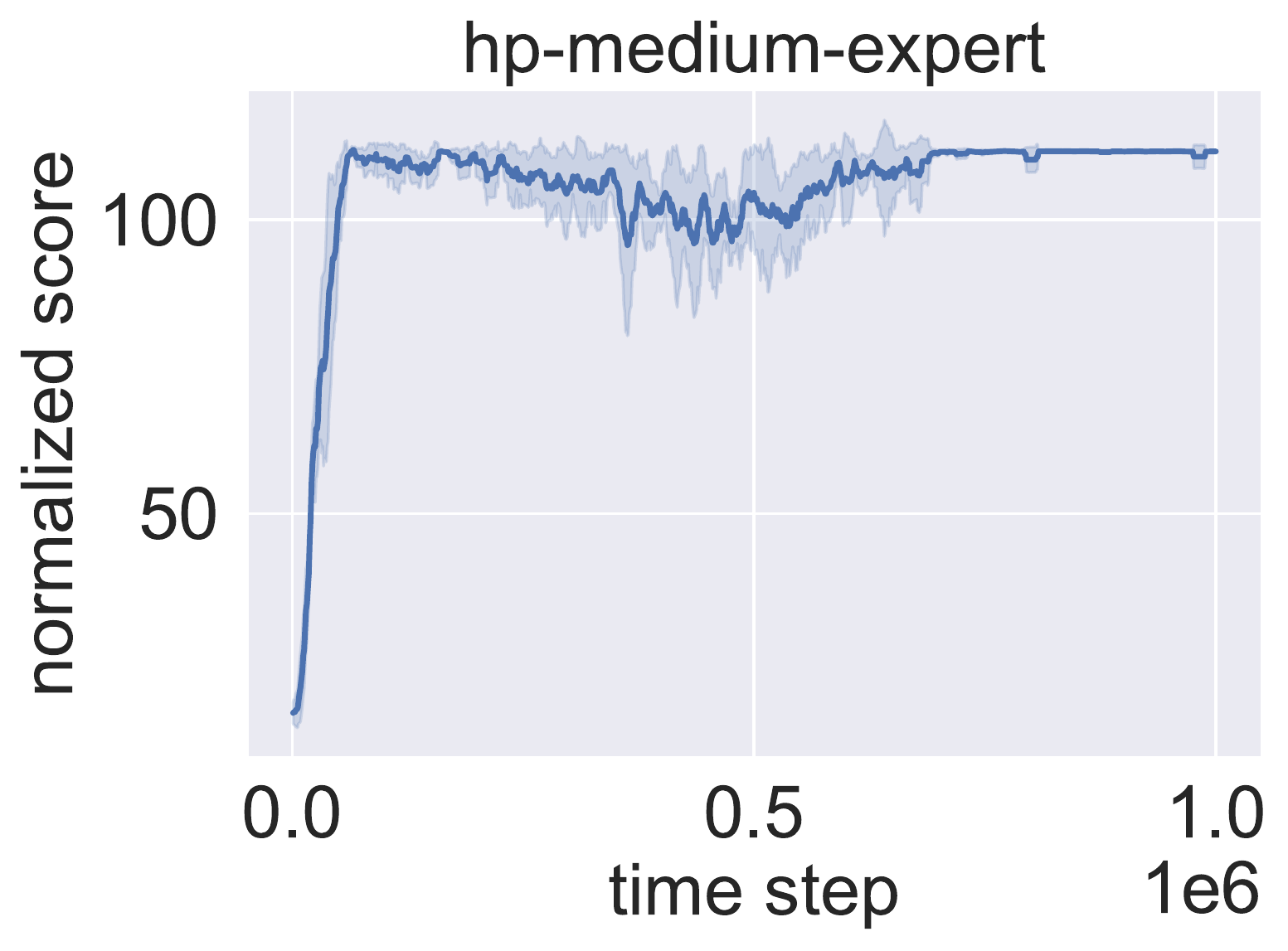}
    }
    \subfigure{
        \includegraphics[width=0.31\textwidth]{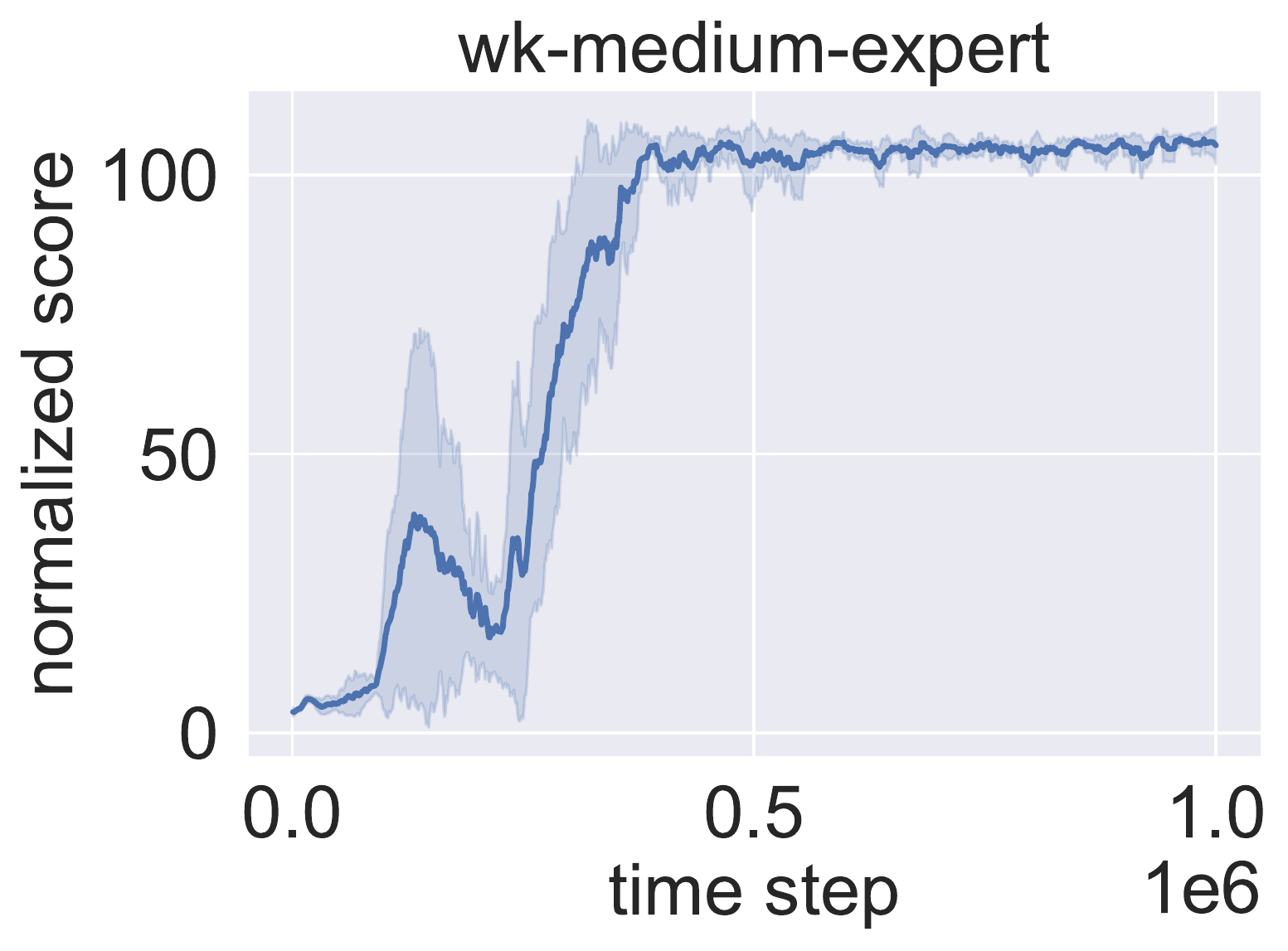}
    }\\
    \subfigure{
        \includegraphics[width=0.32\textwidth]{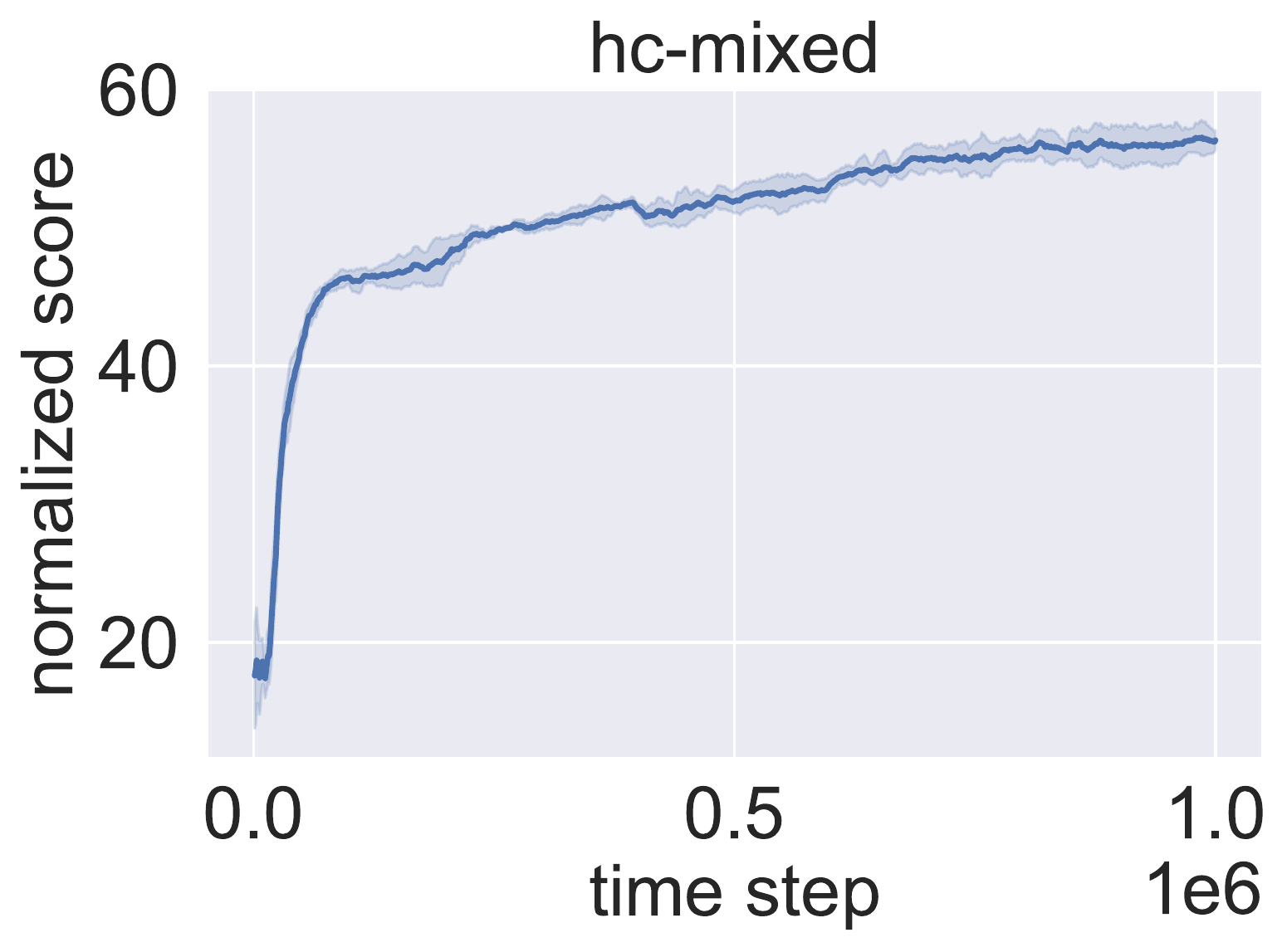}
    }
    \subfigure{
        \includegraphics[width=0.32\textwidth]{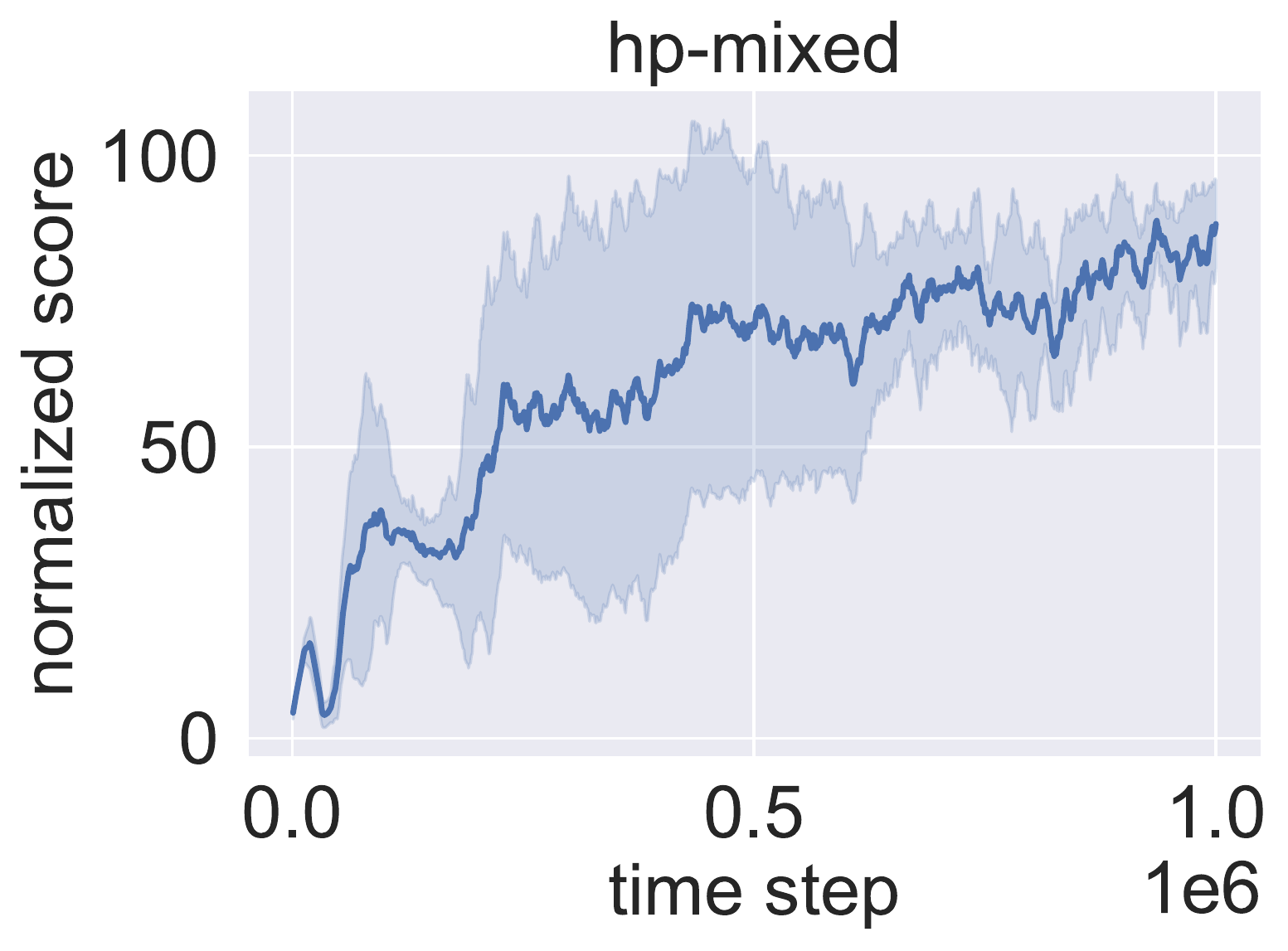}
    }
    \subfigure{
        \includegraphics[width=0.31\textwidth]{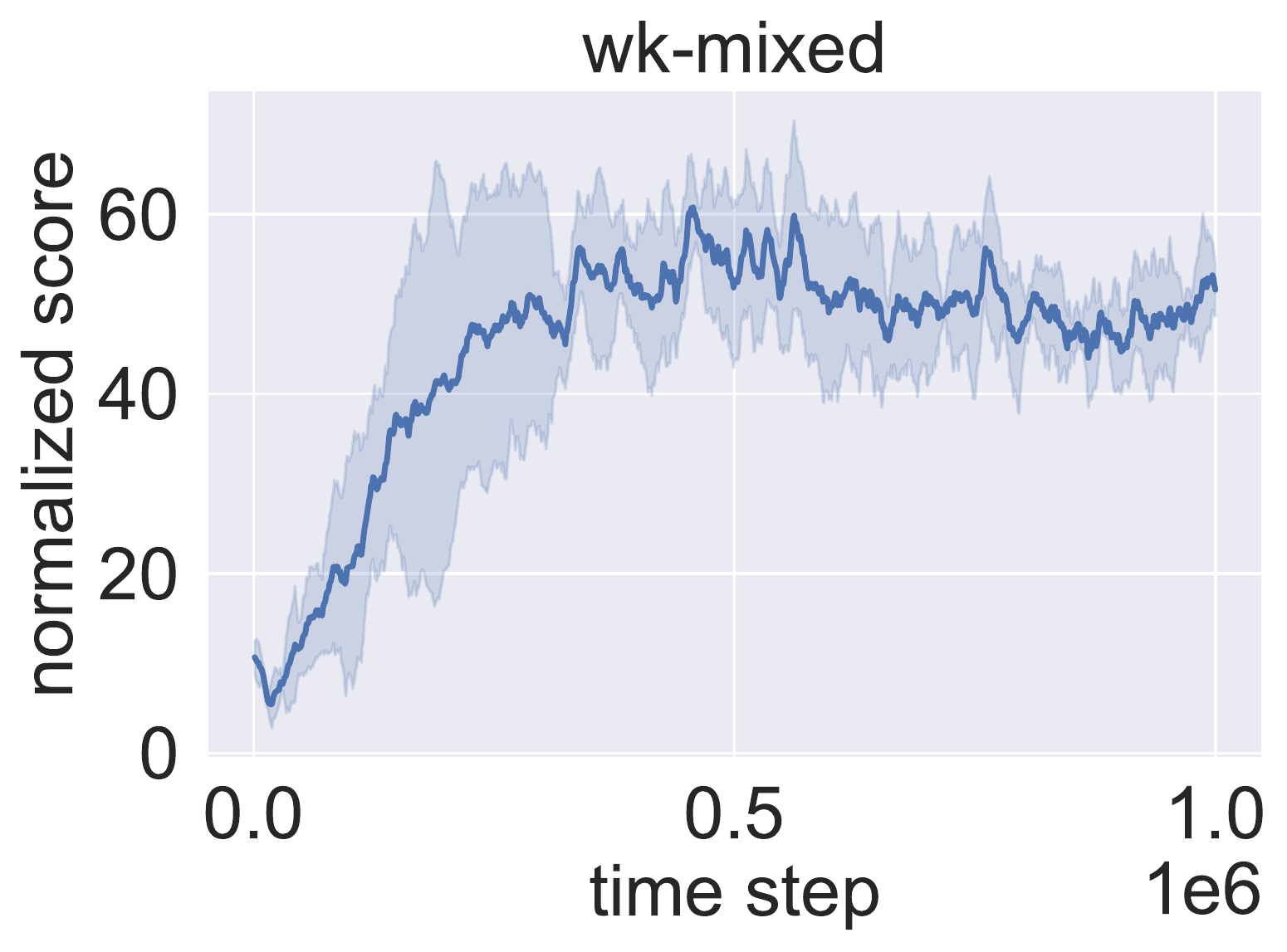}
    }\\
    \subfigure{
        \includegraphics[width=0.32\textwidth]{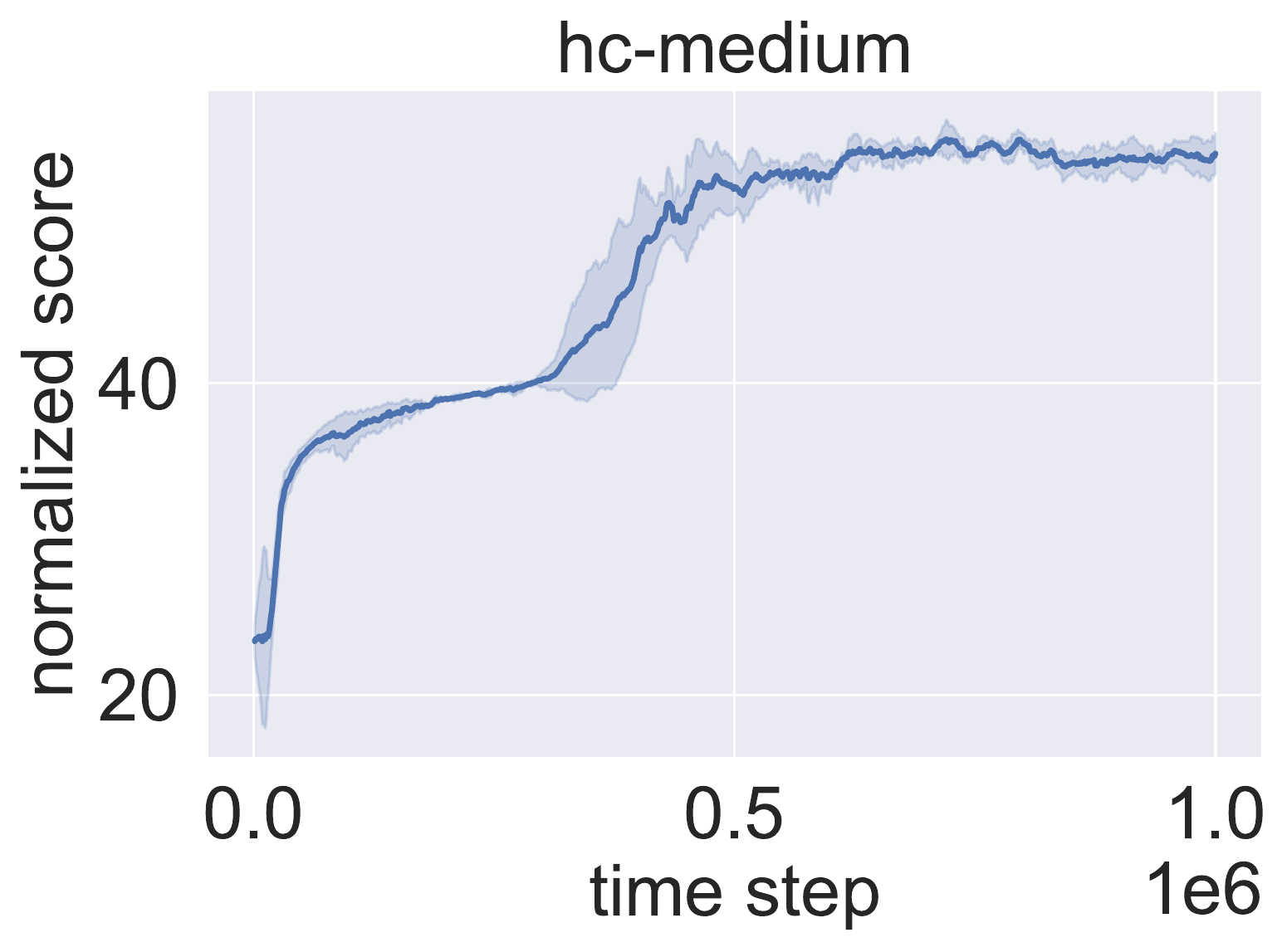}
    }
    \subfigure{
        \includegraphics[width=0.32\textwidth]{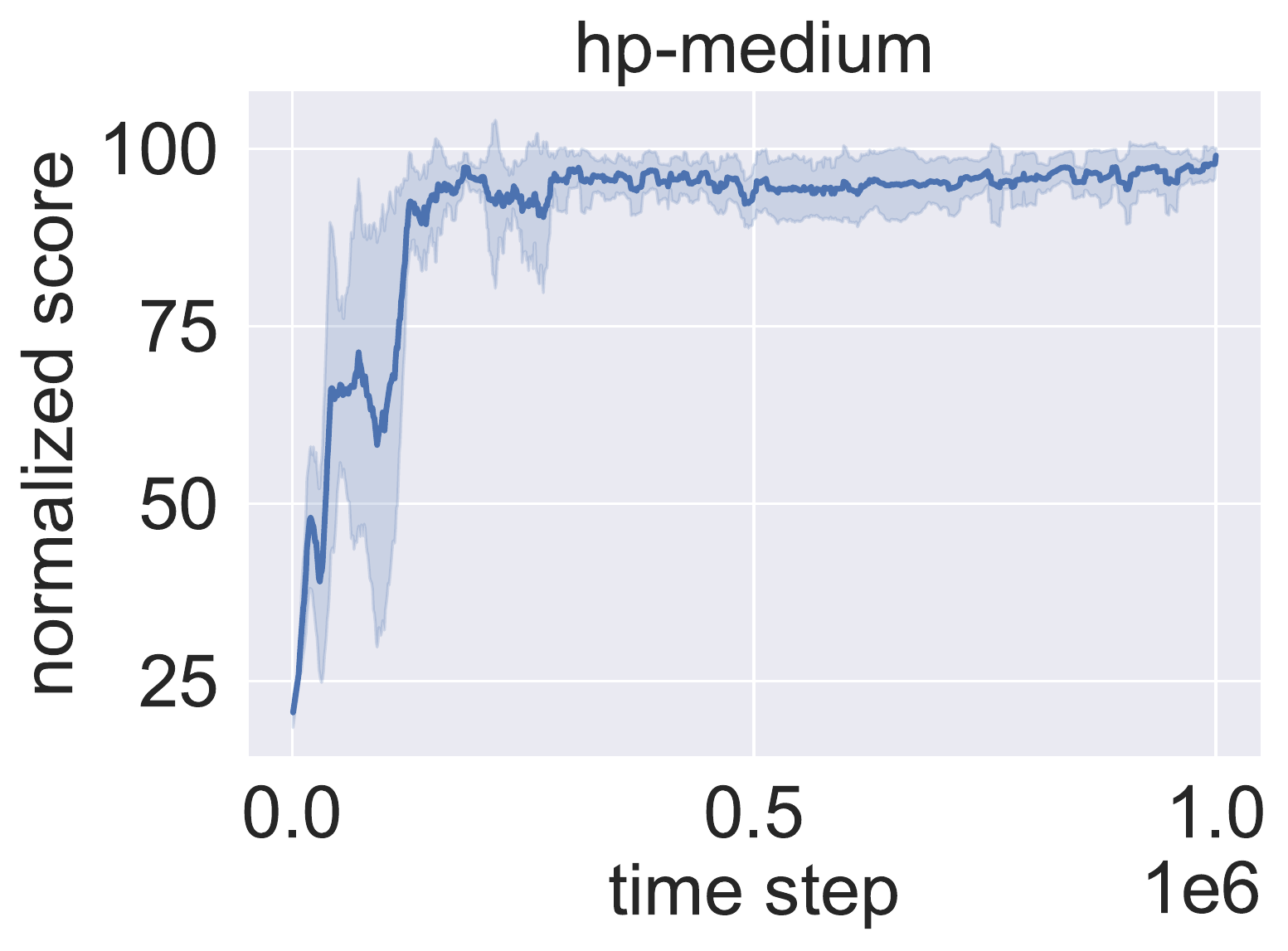}
    }
    \subfigure{
        \includegraphics[width=0.31\textwidth]{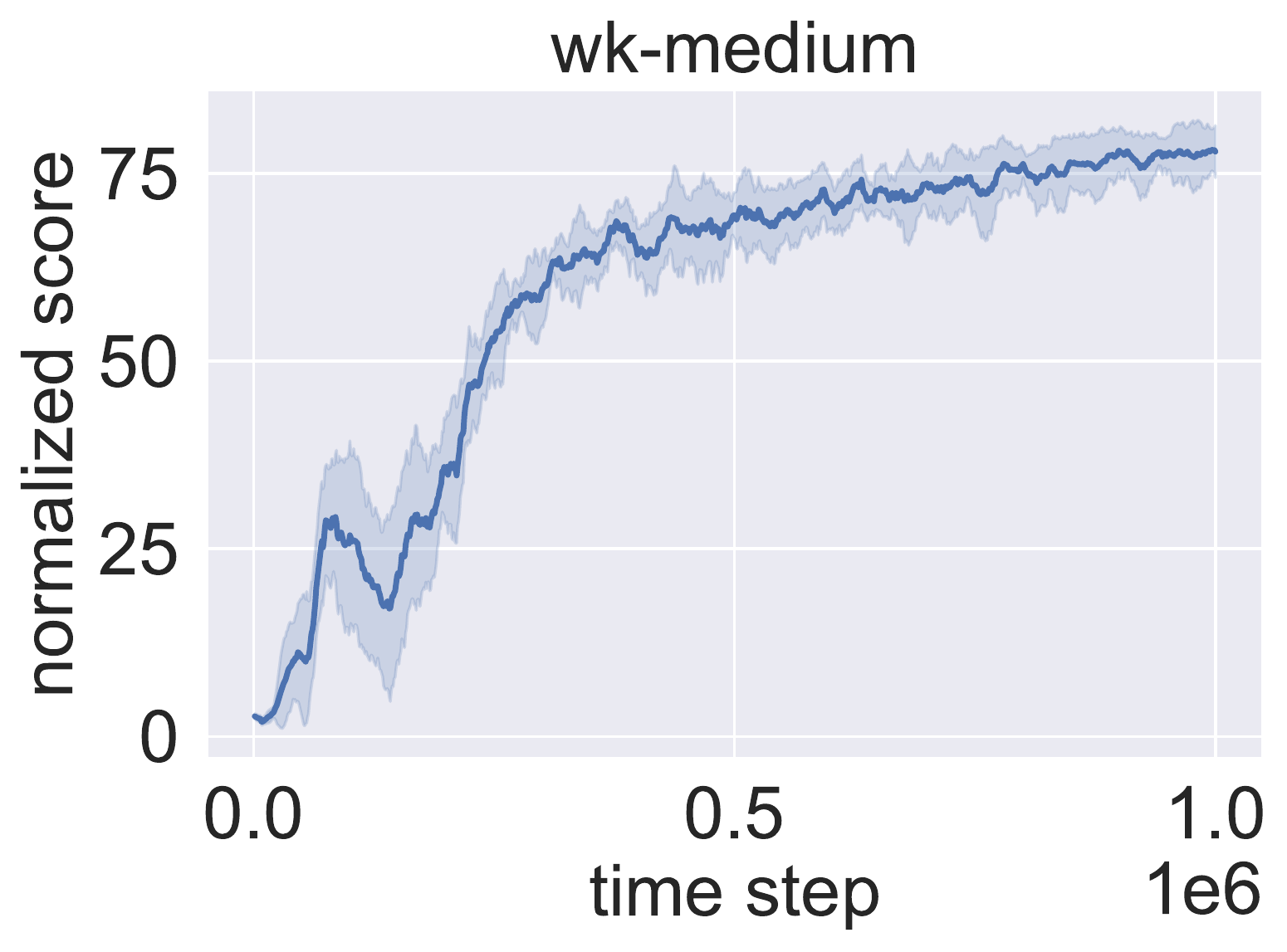}
    }\\
    \subfigure{
        \includegraphics[width=0.32\textwidth]{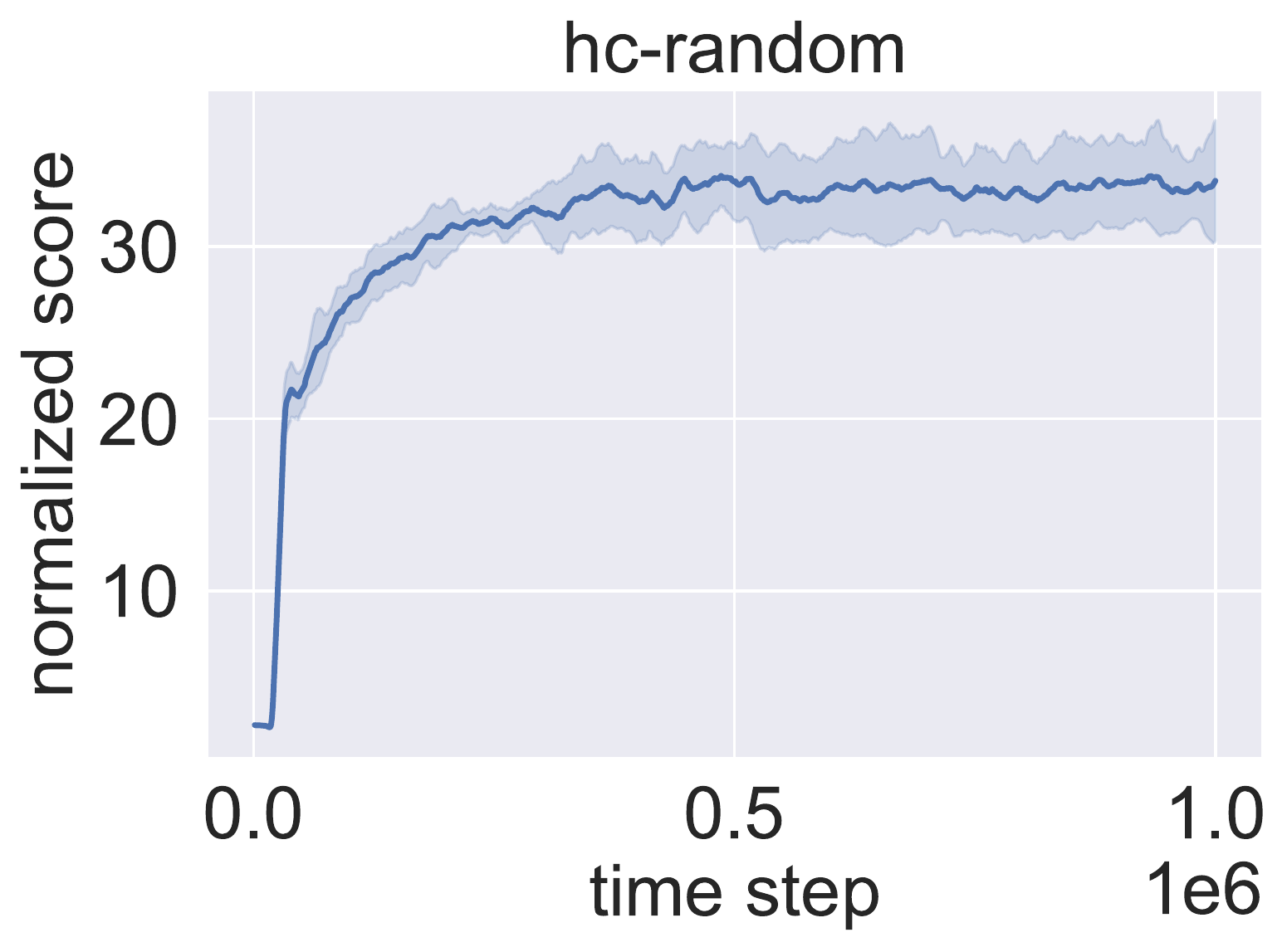}
    }
    \subfigure{
        \includegraphics[width=0.32\textwidth]{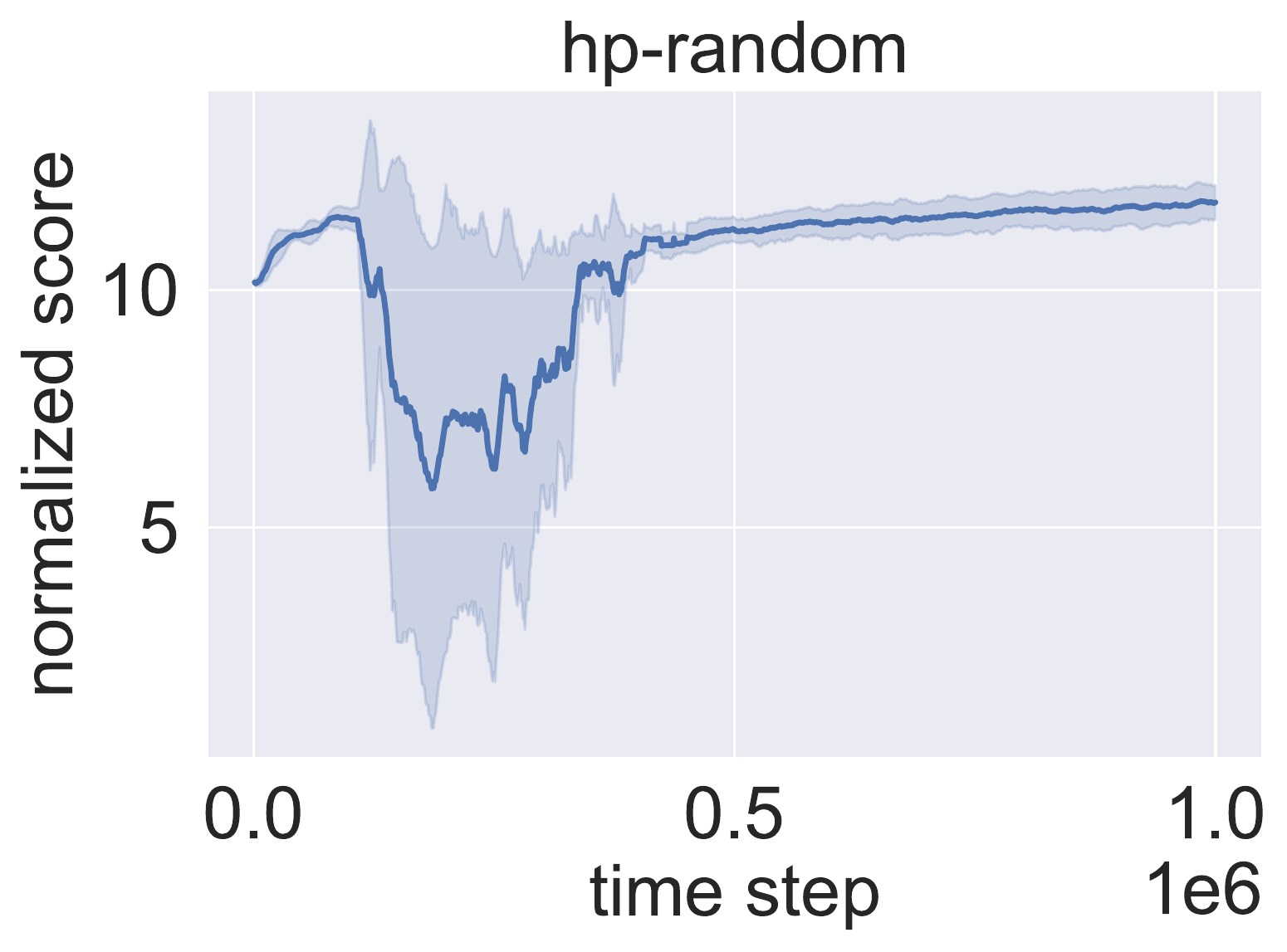}
    }
    \subfigure{
        \includegraphics[width=0.31\textwidth]{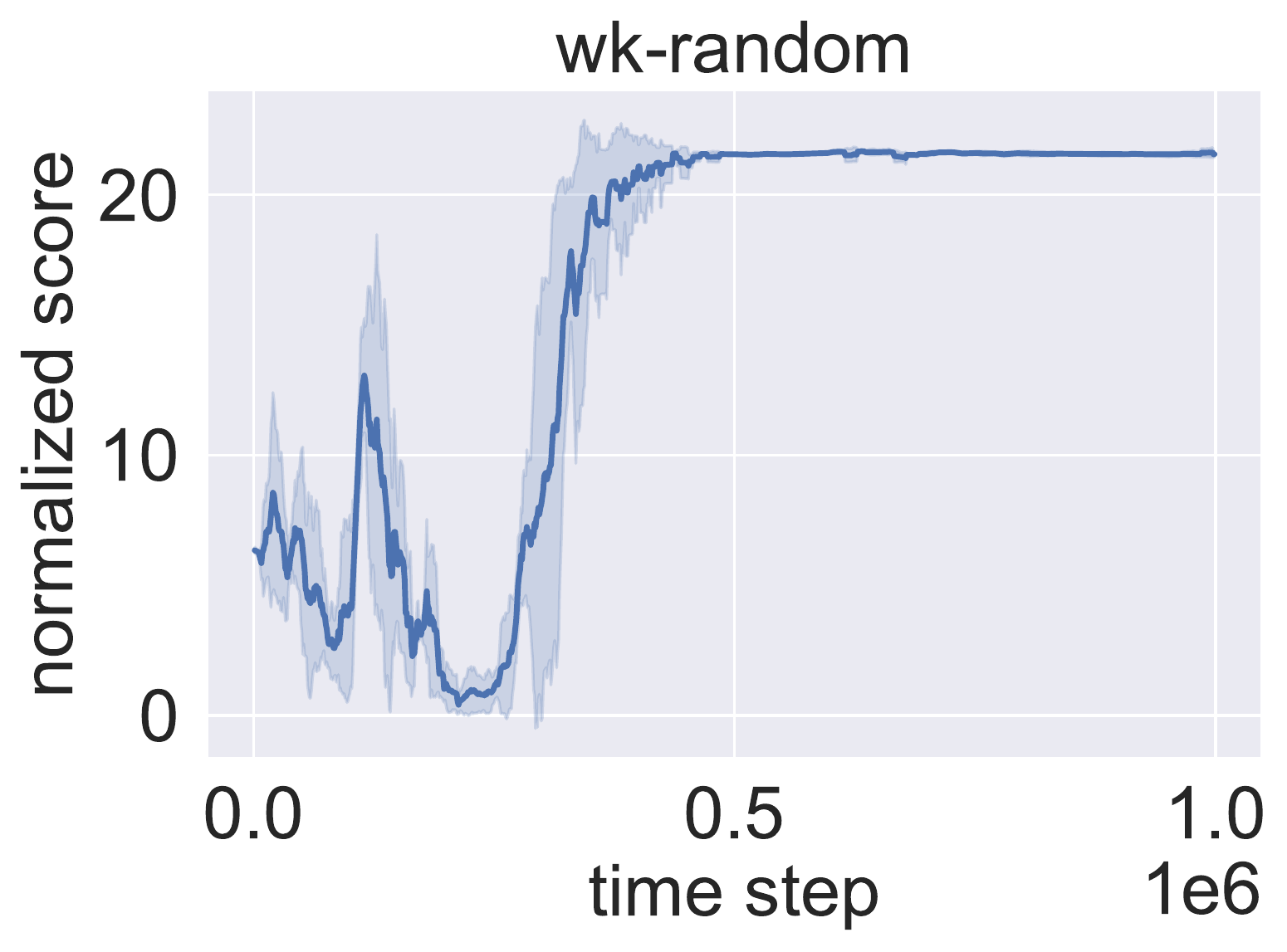}
    }
    
    \caption{Learning curves of MOHVE on all tasks, where hc, hp and wk represent halfcheetah, hopper and walker2d respectively. The solid curves are the mean of normalized return and shadow is the standard error. The performance of MOHVE is stable and robust in most of the tasks.}
    \label{fig:additional offline RL}
\end{figure}

\newpage
\subsection{Analysis of step length $H$ on walker2d-medium-expert}
\label{sec_additional_analysis_of_H}

We analyze the effect of step length $H$ on OPHVE and MOHVE on walker2d-medium-expert task. The results of OPE are shown in Fig.~\ref{fig: ope_analysis}. When $H=0$, chosen by the automatic adjustment mechanism, OPHVE performs better than other choices on rank correlation and regret@1 and only performs no better than $H=-1$ on absolute error. Fig.~\ref{fig: offline_rl_analysis} shows the learning curves of MOHVE with automatic adjusted $H$ (denoted by MOHVE) and fixed $H$. MOHVE converges to the highest normalized score with respect to other chosen step lengths. The two experiments verify that the automatic adjustment mechanism contributes to the performance of MOHVE. 
\begin{figure}[h]
    \centering
    \subfigure[]{
    \includegraphics[width=0.3\textwidth]{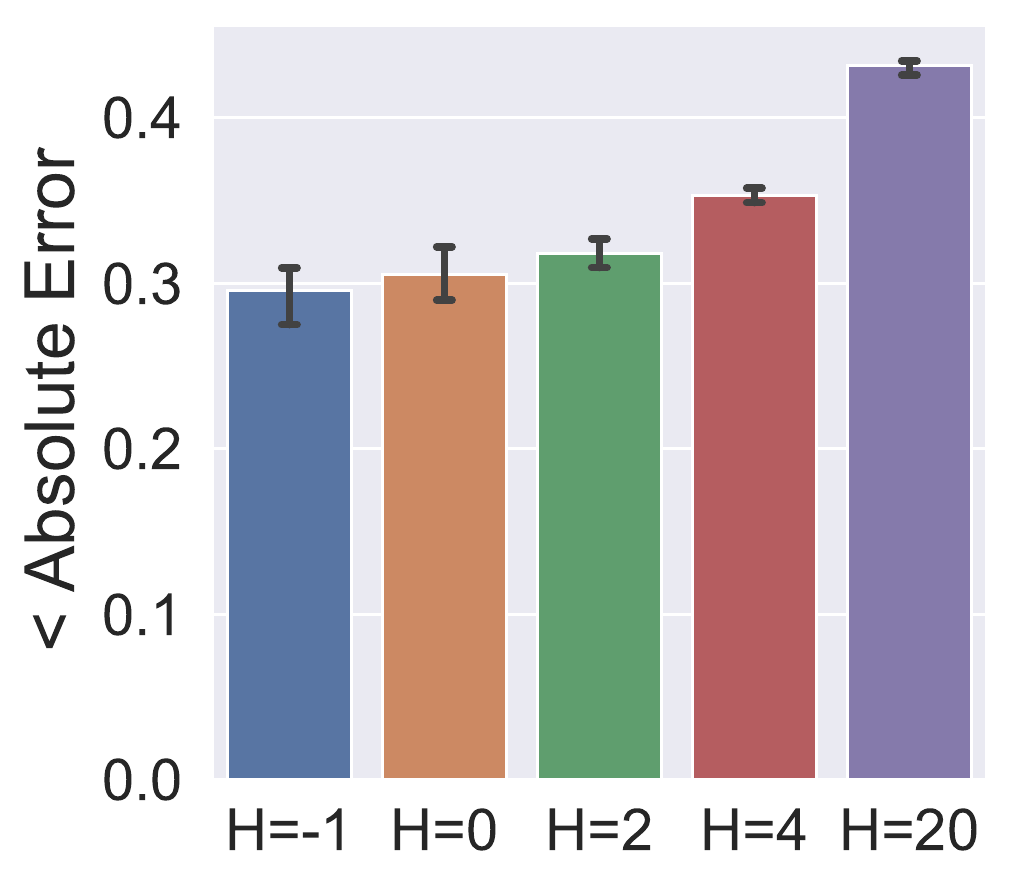}
    }
    \subfigure[]{
    \includegraphics[width=0.3\textwidth]{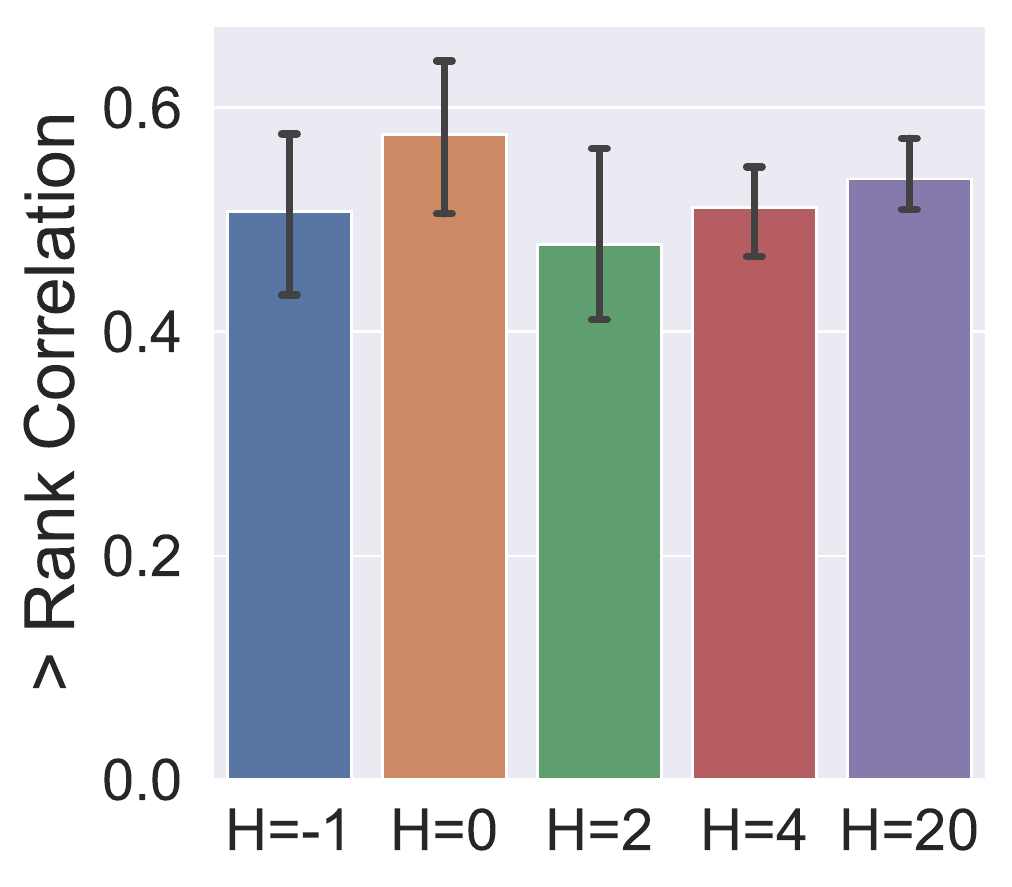}
    }
    \subfigure[]{
    \includegraphics[width=0.3\textwidth]{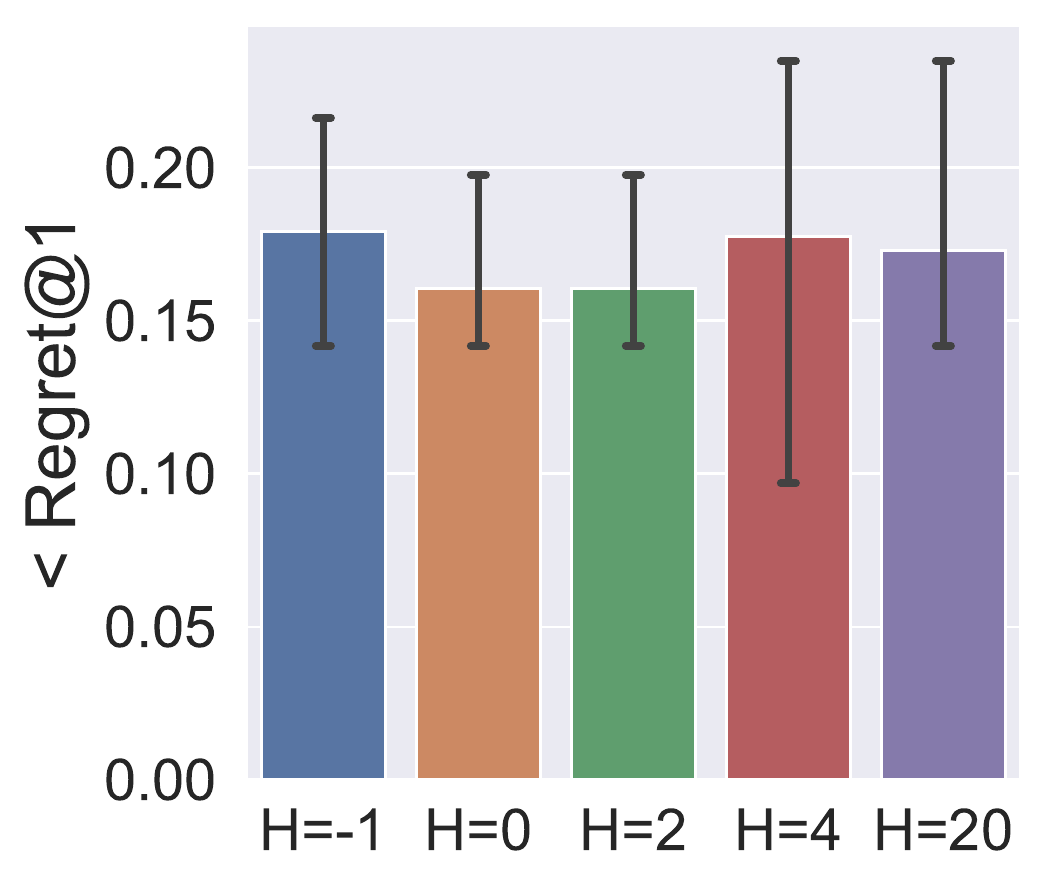}
    }
    \caption{\label{fig: ope_analysis}OPE results on walker2d-medium-expert with different step length $H$.}
\end{figure}
\begin{figure}[h]
    \centering
    \includegraphics[width=0.5\textwidth]{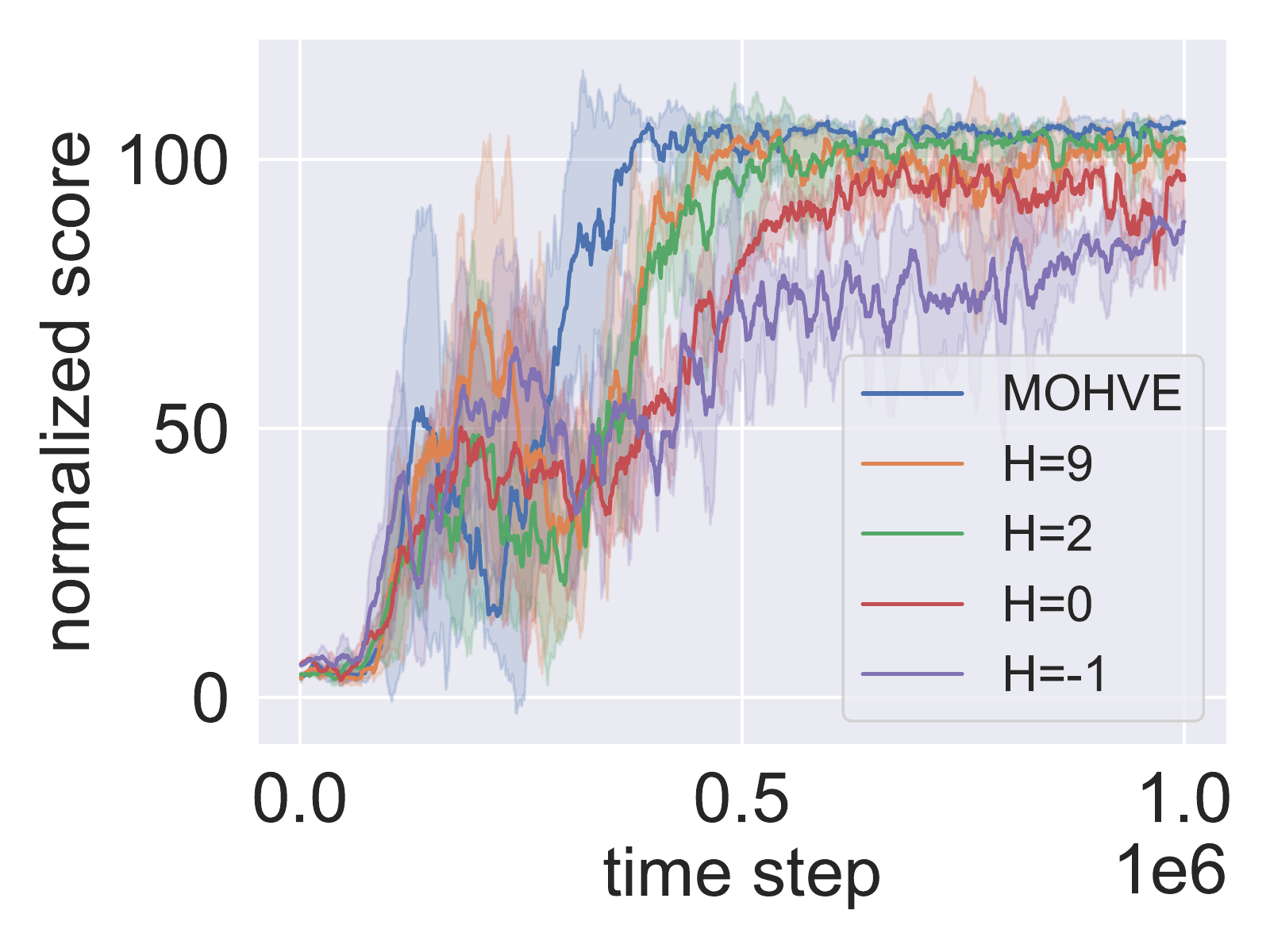}
    \caption{Learning curves of offline RL with different step length $H$. MOHVE with automatic adjusted step length $H$ obtains the highest score.}
    \label{fig: offline_rl_analysis}
\end{figure}

\end{document}